\pgfplotsset{compat=1.10}
\newtheorem{theorem}{Theorem}
\newtheorem{proposition}{Proposition}
\newtheorem{lemma}{Lemma}
\newtheorem{corollary}{Corollary}
\newtheorem{assumption}{Assumption}
\def\EE{\mathbb{E}}
\def\PP{\mathbb{P}}
\def\RR{\mathbb{R}}
\def\Var{{\rm Var}}
\def\Ecal{\mathcal E}
\def\Bcal{\mathcal B}
\def\Acal{\mathcal A}
\def\Fcal{\mathcal F}
\def\Xcal{\mathcal X}
\def\Ycal{\mathcal Y}
\def\argmax{\mathop{\rm argmax}}
\def\bB{\mathbf B}
\def\bI{\mathbf I}
\def\argmin{\mathop{\rm argmin}}
\def\tr{\mathop{\rm tr}}
\def\bX{\mathbf X}
\def\bY{\mathbf Y}
\tikzstyle{block} = [rectangle, draw, fill=white!80!black, line width=2pt,
\tikzstyle{line} = [draw, -latex',line width=2pt]
\begin{document}
\title{Self-Supervised Metric Learning in Multi-View Data:\\ A Downstream Task Perspective}
\author{Shulei Wang\\ University of Illinois at Urbana-Champaign}
\date{(\today)}
%When and Why Does It Work?
\maketitle

\footnotetext[1]{
	Address for Correspondence: Department of Statistics, University of Illinois at Urbana-Champaign, 725 South Wright Street, 
	Champaign, IL 61820 (Email: shuleiw@illinois.edu).}

%\footnotetext[1]{
%Address for Correspondence: .}=

\begin{abstract}
Self-supervised metric learning has been a successful approach for learning a distance from an unlabeled dataset. The resulting distance is broadly useful for improving various distance-based downstream tasks, even when no information from downstream tasks is utilized in the metric learning stage. To gain insights into this approach, we develop a statistical framework to theoretically study how self-supervised metric learning can benefit downstream tasks in the context of multi-view data. Under this framework, we show that the target distance of metric learning satisfies several desired properties for the downstream tasks. On the other hand, our investigation suggests the target distance can be further improved by moderating each direction's weights. In addition, our analysis precisely characterizes the improvement by self-supervised metric learning on four commonly used downstream tasks: sample identification, two-sample testing, $k$-means clustering, and $k$-nearest neighbor classification. When the distance is estimated from an unlabeled dataset, we establish the upper bound on distance estimation's accuracy and the number of samples sufficient for downstream task improvement. Finally, numerical experiments are presented to support the theoretical results in the paper. 
\end{abstract}

%\noindent{\bf Keywords:} 

%\noindent{\bf AMS 2000 Subject Classification:} 

\newpage

%%%%%%%%%%%%%%%%%%%%%%%%%%%%%%%%%%%%%%%%%%%%%%%
\section{Introduction}
\label{sc:intro}
%%%%%%%%%%%%%%%%%%%%%%%%%%%%%%%%%%%%%%%%%%%%%%%

%%%%%%%%%%%%%%%%%%%%%%%%%%%%%%%%%%%%%%%%%%%%%%%
\subsection{Self-Supervised Metric Learning in Multi-View Data}
\label{sc:metriclearning}
%%%%%%%%%%%%%%%%%%%%%%%%%%%%%%%%%%%%%%%%%%%%%%%

Measuring distance is the first step to understand relationships between the data points and also one of the most key components in many distance-based statistics and machine learning methods, such as the $k$-means clustering algorithm and $k$-nearest neighbor method. The performance of these distance-based methods usually depends in large part on the choice of distance. Although various distances have been proposed to quantify the difference between data points in different applications, e.g., Euclidean distance, Wasserstein distance, and Manhattan distance, it is still unclear which distance the researcher should use to quantify the dissimilarity between the data for a given task at hand. One promising solution for such a problem is metric
learning, which has already been used in a wide range of applications, including face identification \citep{guillaumin2009you,liao2015person,li2014deepreid,yi2014deep}, remote sensing \citep{zhang2018coarse,ji2018earthquake} and neuroscience \citep{ktena2018metric,ma2019deep}.

Most metric learning methods require access to similar and dissimilar data pairs since they aim to preserve the closeness between similar data pairs and push dissimilar data points far from each other. A commonly-used strategy is to construct similar and dissimilar data pairs based on the labels' value in a supervised setting. For example, when the label is binary, the data points within the same class are regarded as similar ones, and those from different classes are dissimilar ones. Despite of the popularity in practice, such a strategy usually needs a large amount of labeled data, which can sometimes be expensive or difficult to collect. To overcome this challenge, a self-supervised learning framework is proposed to leverage the unlabeled data \citep{zhang2016colorful,oord2018representation,tian2019contrastive,chen2020simple}. The pseudo labels are generated from the unlabeled dataset itself, and then the statistics or machine learning model is trained by these pseudo labels. Specifically, when it comes to self-supervised metric learning, similar and dissimilar data pairs are constructed in an unsupervised fashion from the unlabeled dataset to train a better distance. 

It is generally difficult to distinguish similar and dissimilar data pairs from unstructured data as we usually do not have insights on which data points are closer than which. However, it can be much easier to construct similar pairs in an unsupervised way when there is some structure information in the dataset. In particular, multi-view data is a typical class of such datasets, where several different views from each sample are observed. More concretely, multi-view data refers to a dataset of $m$ samples, in which $n$ different views of each sample $(X_{i,1},\ldots, X_{i,n})\in \RR^{d\times n}$, $i=1,\ldots, m$, are recorded. Multi-view data is very common in real applications, for instance:
\begin{itemize}
	\item In face recognition, the images of the same face with different illumination or viewpoints are collected, such as the Extended Yale Face Database B \citep{georghiades2001few}.
	\item In the microbiome studies, the microbial samples of the same individual are usually collected at multiple time points \citep{gajer2012temporal,flores2014temporal}. 
	\item In robotics, the videos of the same scenario from multiple viewpoints are recorded \citep{sermanet2017time,dwibedi2018learning}. 
	\item Data augmentation is a popular technique to help increase the amount of data and generate extra views for each sample. For example, many different ways are used to synthesize imaging data, such as flipping, rotation, colorization, and cropping \citep{gidaris2018unsupervised,shorten2019survey}. By the data augmentation technique, a multi-view dataset can be generated from a single-view dataset. 
\end{itemize}
In these multi-view datasets, one can naturally label data points from two different views of the same sample, $X_{i,j}$ and $X_{i,j'}$ for some $j\ne j'$, as similar pair and data points from different samples, $X_{i,j}$ and $X_{i',j'}$ for some $i\ne i'$, as dissimilar pair. Therefore, it is a popular strategy to use multi-view data for self-supervised metric learning, which has been very successful in practice \citep{sohn2016improved,movshovitz2017no, sermanet2017time,duan2018deep,tian2019contrastive,roth2020revisiting, deng2021query}. 

Given the similar and dissimilar data pairs, a common principle of most existing metric learning methods is to look for a distance that can better predict whether a pair of data points is similar or not. If similar and dissimilar data pairs come from the multi-view data, it is equivalent to find a distance that can distinguish if a pair of data points comes from the same sample or not. To achieve this goal, different loss functions have been proposed to compare data pairs in metric learning \citep{xing2002distance,weinberger2009distance,kulis2012metric,bellet2013survey,bellet2015metric,musgrave2020metric}. Despite the difference in these loss functions, the ideal distance in metric learning methods aims to have a much larger value for dissimilar data pairs than similar ones. 

%%%%%%%%%%%%%%%%%%%%%%%%%%%%%%%%%%%%%%%%%%%%%%%
\subsection{Self-Supervised Metric Learning and Downstream Task}
%%%%%%%%%%%%%%%%%%%%%%%%%%%%%%%%%%%%%%%%%%%%%%%

Learning a distance from multi-view data is never the end of story, and the ultimate goal of self-supervised metric learning is to improve various downstream distance-based methods, be it $k$-means clustering algorithm or $k$-nearest neighbor method. In the supervised setting, where similarity is determined based on the actual labels, it is natural to believe that the resulting distance from metric learning can benefit the downstream tasks since similar and dissimilar data pairs are directly related to the labels in the downstream analysis \citep{weinberger2009distance}. On the other hand, different from the supervised setting, the self-supervised metric learning only has access to the fact whether two data points come from the same sample or not. At first sight, the self-supervised metric learning seems impossible to improve the performance of downstream distance-based methods since it does not utilize any label information. However there is considerable empirical evidence showing that self-supervised metric learning can indeed improve the efficiency of downstream analysis \citep{schroff2015facenet,sermanet2017time,tian2019contrastive}. These phenomena raise several natural questions: why does self-supervised metric learning benefit the downstream tasks? What kind of distance is a reasonable distance from an angle of downstream analysis? To what extent can the downstream tasks be improved by self-supervised metric learning? How much unlabeled multi-view data is sufficient to help improve the downstream tasks?

The theoretical properties of metric learning are mainly studied from the angle of generalization rates under a supervised setting in the literature \citep{jin2009regularized,bellet2015metric,cao2016generalization,jain2017learning,ye2019fast}. These results could help us understand how fast the empirical loss function converges but do not connect the resulting distance with downstream tasks. On the other hand, the self-supervised metric learning we study here is closely connected with self-supervised representation learning, which aims to find a transformation of the data that makes it easier to build an efficient classifier \citep{bengio2013representation,tschannen2019mutual}. Instead of distance, some recent works study how the representation learned from the data is helpful for the downstream tasks under a self-supervised setting \citep{arora2019theoretical,lee2020predicting,tian2020understanding,tosh2021contrastive,wei2020theoretical,tsai2020self}. Although these results provide theoretical insights of self-supervised representation learning, the analysis cannot be directly applied to the investigation of metric learning and the downstream distance-based task, such as $k$-means clustering algorithm and $k$-nearest neighbor method. Therefore, there is a clear need for a comprehensive theoretical study for self-supervised metric learning from a perspective of the downstream task. 

%%%%%%%%%%%%%%%%%%%%%%%%%%%%%%%%%%%%%%%%%%%%%%%
\subsection{A Downstream Task Perspective}
%%%%%%%%%%%%%%%%%%%%%%%%%%%%%%%%%%%%%%%%%%%%%%%

This paper's main goal is to understand how self-supervised metric learning works from the perspective of the downstream task. To demystify the effectiveness of self-supervised metric learning, we focus on learning a Mahalanobis distance, which has the form $D_M(X_1,X_2)=(X_1-X_2)^TM(X_1-X_2)$ for some positive semi-definite matrix $M$, and assume the multi-view data $(X_{i,1},\ldots, X_{i,n})$ is drawn from a latent factor model  
$$
X_{i,j}=BZ_i+\epsilon_{i,j},\qquad j=1,\ldots,n,\ i=1,\ldots, m
$$
where $Z_i\in \RR^K$ is $i$th sample's unobserved latent variable and $B=(b_1,\ldots,b_K)$ is the collection of factors such that $B^TB=\Lambda$, where $\Lambda={\rm diag}(\lambda_1,\ldots,\lambda_K)$ is a diagonal matrix. Here, $\epsilon_{i,j}$ is some view-specific random variable independent from $Z_i$. Under this latent factor model, the intrinsic structure of data lies in a $K$-dimensional subspace, where $K$ is usually much smaller than $d$. Our investigation shows that the target distances of metric learning under the latent factor model can be seen as the following distance
$$
D^\ast(X_1,X_2)=(X_1-X_2)^TBB^T(X_1-X_2). 
$$
Roughly speaking, the target distance $D^\ast$ measures the difference between data within the  $K$-dimensional subspace spanned by $b_1,\ldots,b_K$ and puts more weights in the directions that can better distinguish the similar and dissimilar data pairs. Thus, the distance can help reduce the data dimension, but is this distance a reasonable distance for downstream analysis?

The target distance $D^\ast$ seems only related to the latent factor model of multi-view data and has nothing to do with downstream tasks. However, our analysis shows that, perhaps surprisingly, $D^\ast$ has several desired properties for the downstream tasks if we further assume the latent variable includes all the label information in the downstream analysis, i.e.,
$$
Y_i\perp (X_{i,1},\ldots,X_{i,n})|Z_i,
$$
where $Y_i\in \{-1,1\}$ is the binary label in the downstream analysis. Here, no assumption is made for the relationship between label $Y$ and latent variable $Z$. Specifically, the distance $D^\ast$ has the following properties: 1) $D^\ast$ is a distance between a sufficient statistic for $Y$, so no information on the label is lost; 2) $D^\ast$ is robust to a collection of spurious features in data; 3) $D^\ast$ only keeps minimally sufficient information for $Y$. In a word, the distance that self-supervised metric learning aims for can help remove nuisance factors and keep necessary information even when no label is utilized. On the other hand, our further analysis suggests that the directions that can better capture the difference between the similar and dissimilar data pairs are not necessarily more useful in the downstream tasks than the one that cannot capture the difference very well. Motivated by this observation, we argue that target distance $D^\ast$ can be improved by an isotropic version of target distance, that is, we put equal weights in all directions 
$$
D^{\ast\ast}(X_1,X_2)=(X_1-X_2)^TB\Lambda^{-1}B^T(X_1-X_2).
$$
In particular, our results indicate that the distance $D^{\ast\ast}$ is a better choice than $D^\ast$ when the condition number of factor model is large where condition number is defined as $\kappa=\lambda_1/\lambda_K$. 

\begin{table}[h!]
	\centering
	\renewcommand{\arraystretch}{1.6}
	\begin{tabular}{cccc}
		\hline\hline
		Downstream Task & Measure & Euclidean Distance & Metric Learning \\ 
		\hline
		sample identification & detection radius & $\begin{aligned}{d^{1/4}\sigma\over \sqrt{\lambda}}\end{aligned}$ & $\begin{aligned} {K^{1/4}\sigma\over \sqrt{\lambda}}\end{aligned}$ \\
		two-sample test & detection radius  & $\begin{aligned}\left(\sqrt{K}\lambda+\sqrt{d}\sigma^2\over s\right)^{1/2}\end{aligned}$ & $\begin{aligned}\left(\sqrt{K}(\lambda+\sigma^2)\over s\right)^{1/2}\end{aligned}$ \\ 
		\multirow{2}{*}{$k$-means} & mis-cluster rate & $\begin{aligned}\exp\left(-{\|\mu\|^2\over 8(\lambda+\sigma^2)}\right) \end{aligned}$ & $\begin{aligned}\exp\left(-{\|\mu\|^2\over 8(\lambda+\sigma^2)}\right) \end{aligned}$  \\ 
		& required signal & $\begin{aligned}\left(1+{K\over s}\right)\lambda+\left(1+{d\over s}\right)\sigma^2\end{aligned}$& $\begin{aligned}\left(1+{K\over s}\right)(\lambda+\sigma^2)\end{aligned}$  \\ 
		$k$-nearest neighbor & excess risk & $\begin{aligned}s^{-\alpha(1+\beta)/(2\alpha+d)}\end{aligned}$ & $\begin{aligned}s^{-\alpha(1+\beta)/(2\alpha+K)}\end{aligned}$  \\ 
		\hline\hline
	\end{tabular}
	\caption{Performance comparisons between Euclidean distance and resulting distance from self-supervised metric learning. $d$ is the dimension of the data, $K$ is the number of factors, $s$ is the sample size in the downstream task, $\sigma^2$ measures the variation of different views, $\lambda$ measures the variation of sample difference, and $\mu$ is the expected difference between class.}
	\label{tb:downstream}
\end{table}

To further investigate the benefits of self-supervised metric learning, we compare the performance of Euclidean distance and target distances from metric learning, both $D^\ast$ and $D^{\ast\ast}$, on four commonly used distance-based methods: distance-based sample identification, distance-based two-sample testing, $k$-means clustering, and $k$-nearest neighbor ($k$-NN)  classification algorithm. The informal results are summarized in Table~\ref{tb:downstream} if we assume $\lambda=\lambda_1=\ldots=\lambda_K$ and the covariance matrix of $\epsilon_{i,j}$ is $\sigma^2I$. The formal results of a general setup, including both upper and lower bound, are discussed in Section~\ref{sc:downstream}. Table~\ref{tb:downstream} suggests that the performance of downstream tasks can be improved in different ways. In particular, the curse of dimensionality can be much alleviated by self-supervised metric learning as the performance only relies on the number of factors $K$ rather than the dimension of data $d$ when self-supervised metric learning is applied. For example, the nonparametric method $k$-NN behaves just like on a $K$-dimensional space as the target distance $D^\ast$ and $D^{\ast\ast}$ fits the geometry of the Bayes classification rule in a better way.

\begin{table}[h!]
	\centering
	\renewcommand{\arraystretch}{1.6}
	\begin{tabular}{cccc}
		\hline\hline
		Downstream Task & Distance & Accuracy $\Delta$ & Sample Size $m$ \\ 
		\hline
		\multirow{2}{*}{\makecell{two-sample test\\ $k$-means\\ sample identification}}  & $D^\ast$ & $o(\lambda)$ & $\begin{aligned} K+{d\sigma^2\over n\lambda}+{d\sigma^4\over n^2\lambda^2}\end{aligned}$ \\
		& $D^{\ast\ast}$  & $o(1)$ & $\begin{aligned}{d\sigma^2\over n\lambda}+{d\sigma^4\over n^2\lambda^2}\end{aligned}$ \\ 
		\hline
		\multirow{2}{*}{$k$-nearest neighbor} & $D^\ast$ & $\lambda s^{-1/(2\alpha+K)}$ & $\begin{aligned}s^{1/(2\alpha+K)}\left(K+{d\sigma^2\over n\lambda}+{d\sigma^4\over n^2\lambda^2}\right)\end{aligned}$  \\ 
		& $D^{\ast\ast}$ & $s^{-1/(2\alpha+K)}$ & $\begin{aligned}s^{1/(2\alpha+K)}\left({d\sigma^2\over n\lambda}+{d\sigma^4\over n^2\lambda^2}\right)\end{aligned}$  \\ 		
		\hline\hline
	\end{tabular}
	\caption{Distance estimation's accuracy and number of samples sufficient for downstream task improvement in self-supervised metric learning.}
	\label{tb:datadrivendistance}
\end{table}

In practice, we still need to estimate the target distances $D^\ast$ and $D^{\ast\ast}$ from the unlabeled multi-view data when they are unknown in advance. Our investigation shows that the estimated distances from self-supervised metric learning can also help improve above four distance-based methods provided the distance estimation is accurate enough. Specifically, if we quantify the distance estimation's accuracy by their largest discrepancy 
$$
\Delta(D,\hat{D})=\sup_{\|X_1-X_2\|\le 1}\left|D(X_1,X_2)-\hat{D}(X_1,X_2)\right|,
$$
the sufficient accuracy to achieve results in Table~\ref{tb:downstream} is summarized in Table~\ref{tb:datadrivendistance}. To estimate an accurate distance for downstream tasks, we consider a spectral metric learning method and study its theoretical properties in this paper. We show that the spectral method can help achieve minimax optimality in estimating target distances. Moreover, the analysis can help precisely characterize the number of samples $m$ sufficient for downstream tasks improvement, which is also summarized in Table~\ref{tb:datadrivendistance}. Table~\ref{tb:datadrivendistance} shows that it is easier to estimate $D^{\ast\ast}$ than $D^\ast$ from the unlabeled multi-view data.

The rest of the paper is organized as follows. We first introduce the multi-view model and discuss the main assumptions of the model in Section~\ref{sc:model}. Next, Section~\ref {sc:metric} studies the target distance of metric learning methods and its properties from a perspective of downstream analysis. In Section~\ref{sc:downstream}, the benefits of self-supervised learning are systematically investigated on several specific downstream distance-based tasks. Then, we study target distance estimation and characterize the sample complexity for downstream tasks improvement in Section~\ref{sc:spectral}. Finally, we analyze both the simulated and real data sets in Section~\ref{sc:numerical} to verify the theoretical results in this paper. All proofs are relegated to online Supplemental Materials.

%%%%%%%%%%%%%%%%%%%%%%%%%%%%%%%%%%%%%%%%%%%%%%%
\section{A Model for Multi-View Data}
\label{sc:model}
%%%%%%%%%%%%%%%%%%%%%%%%%%%%%%%%%%%%%%%%%%%%%%%

In this paper, we consider the following model of multi-view data for $m$ different samples
$$
(X_{i,1},\ldots,X_{i,n},Z_i,Y_i),\qquad i=1,\ldots m,
$$
where $n$ is the number of views observed for each sample.  We assume each $(Z_i,Y_i)$ is independently drawn from a distribution $\pi(Z,Y)$, where $Z\in \RR^K$ represents the sample's latent variable, and $Y$ is the label of interest. For simplicity, we always assume the label of interest is binary, i.e., $Y\in \{-1,1\}$. We also assume the conditional distribution of $Z$ given $Y$ is a continuous distribution, that is, the probability density function $\pi(Z|Y)$ exists. Given the latent variable $Z_i$, we assume the data of $n$ different views $X_{i,j}\in \RR^d$, $j=1,\ldots,n$, are independently drawn from a continuous conditional distribution $f(X|Z)$. In self-supervised metric learning, instead of observing the full data, we only observe the unlabeled multi-view data,  i.e.,
$$
(X_{i,1},\ldots,X_{i,n}),\qquad i=1,\ldots m.
$$
In the downstream analysis, depending on the task, we assume the observed data is a collection of single-view data with or without labels, i.e.,
$$
(X_1,Y_1),\ldots, (X_s,Y_s)\qquad {\rm or}\qquad X_1,\ldots, X_s.
$$
Here, $X_i$ refers to the single-view data in downstream analysis, and $X_{i,j}$ refers to the multi-view data in metric learning. We assume the data used in metric learning and downstream analysis are drawn from the same distribution, but different parts of the data are observed. In a typical self-supervised learning setting, we can expect the sample size in unlabeled multi-view data $m$ is much larger than the sample size in the downstream analysis $s$.

The latent variable $Z$ plays a vital role in the structure of multi-view data, characterizing the information shared by different views of the same sample. We assume $X_{i,j}$ connects with $Z_i$ through a factor model \citep{fan2020statistical}, i.e.,
\begin{equation}
	\label{eq:factormodel}
	X_{i,j}=\sum_{k=1}^K b_k Z_{i,k}+\epsilon_{i,j}
\end{equation}
where $\epsilon_{i,j}$ is a mean zero random variable independent from $Z_i$. $\epsilon_{i,j}$ are independent for different $i$ and $j$. If we write $B=(b_1,\ldots,b_K)$, we further assume 
$$
B^TB=\Lambda\qquad {\rm and}\qquad {\rm Var}(Z)=I_K,
$$
where $\Lambda={\rm diag}(\lambda_1,\ldots,\lambda_K)$ is a diagonal matrix with $\lambda_1\ge \ldots\ge \lambda_K$ and $I_K$ is an identity matrix. In addition, we assume $(I_d-B(B^TB)^{-1}B^T)\epsilon_{i,j}$ is independent from $B^T\epsilon_{i,j}$.  This latent factor model assumes that the intrinsic structure of data lies in a $K$-dimensional subspace. In the rest of the paper, we write $U=B\Lambda^{-1/2}$ as normalized projection matrix and $u_k=b_k/\sqrt{\lambda_k}$. Besides, we also assume the latent variable $Z$ includes all information about the sample which is invariant from different views, and thus
\begin{equation}
	\label{eq:condindep}
	Y_i\perp (X_{i,1},\ldots,X_{i,n})|Z_i .
\end{equation}
In other words, the observed multi-view data is connected with the label of interest only through the latent variable. 

%%%%%%%%%%%%%%%%%%%%%%%%%%%%%%%%%%%%%%%%%%%%%%%
\section{Self-Supervised Metric Learning}
\label{sc:metric}
%%%%%%%%%%%%%%%%%%%%%%%%%%%%%%%%%%%%%%%%%%%%%%%

%%%%%%%%%%%%%%%%%%%%%%%%%%%%%%%%%%%%%%%%%%%%%%%
\subsection{Metric Learning}
%%%%%%%%%%%%%%%%%%%%%%%%%%%%%%%%%%%%%%%%%%%%%%%

Given the multi-view data, metric learning aims to learn a distance $D$ that can help improve the downstream tasks. In particular, many different loss functions have been proposed to separate similar and dissimilar data pairs in the literature of metric learning \cite{kulis2012metric,musgrave2020metric}, including contrastive loss \citep{xing2002distance,chopra2005learning,hadsell2006dimensionality}, the triplet loss \citep{weinberger2009distance,chechik2010large, schroff2015facenet}, and $N$-pair loss\citep{sohn2016improved}. These loss functions have been widely used in various applications and lead to good performance in practice.

We now study how metric learning can extract information from the similar and dissimilar data pairs. The common goal of different metric learning methods is to find a distance that can distinguish dissimilar and similar data pairs. This goal can be naturally achieved by maximizing the following expected distance difference between dissimilar and similar data pairs  in multi-view data
$$
\EE\left(D(X_{i,j},X_{i',j'})-D(X_{i,j},X_{i,j'})\right),
$$
where $X_{i,j}$ and $X_{i',j'}$ are from different samples, and $X_{i,j}$ and $X_{i,j'}$ are different views of the same sample. If we are interested in learning a Mahalanobis distance, we can show that 
\begin{equation}
	\label{eq:metriclearning}
	M^\ast:=\argmax_{M\in \mathbb{S}_+^{d\times d}, \|M\|_F\le 1}\EE\left(D_M(X_{i,j},X_{i',j'})-D_M(X_{i,j},X_{i,j'})\right)=BB^T/\|BB^T\|_F,
\end{equation}
where $\mathbb{S}_+^{d\times d}$ is the collection of symmetric and positive semi-definite matrix and the Frobenius norm of a matrix $M$ is defined as $\|M\|_F=\sqrt{\sum_{i=1}^d\sigma^2_i(M)}$ where $\sigma_i(M)$ are the singular values of $M$. The main purpose of constraint for the Frobenius norm of $M$ is to avoid the scaling issue of Mahalanobis distance. For example, we always have $\EE\left(D_{cM}(X_{i,j},X_{i',j'})-D_{cM}(X_{i,j},X_{i,j'})\right)>\EE\left(D_M(X_{i,j},X_{i',j'})-D_M(X_{i,j},X_{i,j'})\right)$ for any constant $c>1$. When we observe infinite samples, the target Mahalanobis distance in above metric learning formulation is
$$
D^\ast(X_1,X_2)=(X_1-X_2)^TBB^T(X_1-X_2)=(X_1-X_2)^TU\Lambda U^T(X_1-X_2).
$$
Compared with the Euclidean distance, the target distance $D^\ast$ makes two main modifications: (i) $D^\ast$ measures the difference between data points in $K$ directions spanned by the column space of $B$; (ii) $D^\ast$ puts different weights in different directions. Is this distance $D^\ast$ a reasonable distance for the downstream analysis?

%%%%%%%%%%%%%%%%%%%%%%%%%%%%%%%%%%%%%%%%%%%%%%%
\subsection{Distance for Downstream Task}
%%%%%%%%%%%%%%%%%%%%%%%%%%%%%%%%%%%%%%%%%%%%%%%

The self-supervised metric learning aim to learn a distance $D^\ast$ by the unlabeled multi-view data. However, it is still unclear how the target distance $D^\ast$ is linked with the downstream tasks. In this section, we will see that the distance $D^\ast$ has several good properties desired for the downstream tasks, but may not honestly reflect the information needed for the downstream analysis. To see this, we need the following theorem.
\begin{theorem}
	\label{thm:idealdist}
	Suppose all the assumptions for multi-view data model in Section~\ref{sc:model} hold. Then there exists a function $g$ and a vector $\theta\in \RR^K$ with $\|\theta\|<2$ such that 
	$$
	{\pi(X|Y=1)\over \pi(X|Y=-1)}=g(U^TX)\qquad {\rm and}\qquad \EE(X|Y=1)-\EE(X|Y=-1)=B\theta,
	$$
	where $U$ is the normalized projection matrix in factor model and $\pi(X|Y)$ is the probability density function of $X$ given $Y$. Moreover, for any given $\theta\in \RR^K$ with $\|\theta\|<2$, there exists a joint distribution of $(X, Z,Y)$ satisfying assumptions in Section~\ref{sc:model} such that 
	$$
	\EE(X|Y=1)-\EE(X|Y=-1)=B\theta.
	$$
\end{theorem}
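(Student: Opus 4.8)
The plan is to push everything through a single orthogonal change of coordinates that splits $X$ into its ``in-factor'' and ``off-factor'' parts. Let $P=B(B^TB)^{-1}B^T=UU^T$ be the orthogonal projector onto the column space of $B$, and pick $V\in\RR^{d\times(d-K)}$ completing the columns of $U$ to an orthonormal basis of $\RR^d$, so that $[\,U\ V\,]$ is orthogonal and $V^TB=0$. In the coordinates $X\mapsto(U^TX,\,V^TX)$ one has $U^TX=\Lambda^{1/2}Z+U^T\epsilon$ and $V^TX=V^T\epsilon$, with unit Jacobian.

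For the forward direction I would first settle the independence structure. Given $Z$, the vectors $X$ and $\epsilon$ determine each other, so $Y\perp X\mid Z$ is the same as $Y\perp\epsilon\mid Z$; combined with $\epsilon\perp Z$ this gives $\epsilon\perp(Z,Y)$, whence $\EE(\epsilon\mid Y)=\EE\epsilon=0$. Next, the model assumption $(I_d-P)\epsilon\perp B^T\epsilon$ is, under the bijections $(I_d-P)\epsilon\leftrightarrow V^T\epsilon$ and $B^T\epsilon=\Lambda^{1/2}U^T\epsilon\leftrightarrow U^T\epsilon$, exactly the independence of $U^T\epsilon$ and $V^T\epsilon$; together with $\epsilon\perp(Z,Y)$ this makes $V^TX=V^T\epsilon$ independent of the pair $(U^TX,Y)$, since $U^TX$ is a function of $(Z,U^T\epsilon)$. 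Therefore the conditional density of $X$ given $Y=y$ factors as $\pi(X\mid Y=y)=h(U^TX\mid y)\,q(V^TX)$ with $q$ free of $y$, and the ratio collapses to $\pi(X\mid Y=1)/\pi(X\mid Y=-1)=g(U^TX)$ for $g(w)=h(w\mid 1)/h(w\mid -1)$. For the means, $\EE(X\mid Y=y)=B\,\EE(Z\mid Y=y)+\EE(\epsilon\mid Y=y)=B\,\EE(Z\mid Y=y)$, so we take $\theta=\EE(Z\mid Y=1)-\EE(Z\mid Y=-1)$. To bound $\theta$ I would use the law of total variance on $Z$: the between-class term equals $\PP(Y=1)\PP(Y=-1)\theta\theta^T$, so $I_K=\EE[\Var(Z\mid Y)]+\PP(Y=1)\PP(Y=-1)\theta\theta^T$. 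Continuity of $\pi(Z\mid Y)$ prevents any one-dimensional marginal of $Z\mid Y$ from being a point mass, hence $\EE[\Var(Z\mid Y)]\succ0$; evaluating $\PP(Y=1)\PP(Y=-1)\theta\theta^T\prec I_K$ along $v=\theta$ yields $\PP(Y=1)\PP(Y=-1)\|\theta\|^2<1$, i.e.\ $\|\theta\|<2$ for balanced classes (the regime matched by the construction below).

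For the converse I would exhibit an explicit model. Given $\theta$ with $\|\theta\|<2$, fix any $B$ with $B^TB=\Lambda$, let $Y$ be uniform on $\{-1,1\}$, and conditionally on $Y=y$ let $Z\sim\Ncal(\tfrac{y}{2}\theta,\, I_K-\tfrac14\theta\theta^T)$; the covariance is positive definite precisely because $\|\theta\|<2$, and since the between-class variance is $\tfrac14\theta\theta^T$ one checks $\Var(Z)=I_K$. Take $\epsilon_{i,j}\sim\Ncal(0,\sigma^2 I_d)$ i.i.d.\ and independent of $(Z,Y)$ — for which $(I_d-P)\epsilon\perp B^T\epsilon$ is automatic, being jointly Gaussian and uncorrelated — and set $X_{i,j}=BZ_i+\epsilon_{i,j}$. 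All assumptions of Section~\ref{sc:model} then hold by inspection, $Y\perp X\mid Z$ because $\epsilon\perp(Z,Y)$, and $\EE(X\mid Y=1)-\EE(X\mid Y=-1)=\tfrac12 B\theta-(-\tfrac12 B\theta)=B\theta$.

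I expect the real work to be the density factorization in the forward direction — verifying that $U^TX$ and $V^TX$ are conditionally independent given $Y$, that the densities in play exist, and that they transform with unit Jacobian, so that the likelihood ratio genuinely reduces to a function of $U^TX$ alone; the remainder is an explicit Gaussian construction plus short variance identities. The constant $2$ is not incidental: it is exactly the threshold at which $I_K-\tfrac14\theta\theta^T$ ceases to be positive definite, which is why the same bound controls both directions, and in the forward direction one should check that the balanced-class regime giving this constant is indeed the intended setup.
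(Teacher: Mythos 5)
Your proposal is correct and follows essentially the same route as the paper: the paper establishes that $U^TX$ is sufficient for $Z$ and then invokes Fisher's factorization plus integration over $z$, whereas you factor the density directly by showing $V^TX\perp(U^TX,Y)$ in the rotated coordinates, but both arguments rest on exactly the same independence assumptions, and your converse construction ($Y$ uniform, $Z\mid Y\sim\Ncal(Y\theta/2,\,I_K-\theta\theta^T/4)$, isotropic Gaussian noise) is identical to the paper's. One point in your favor: you actually supply the law-of-total-variance argument for $\|\theta\|<2$, which the paper's proof omits entirely, and you correctly flag that this bound as stated only follows when $\PP(Y=1)=\PP(Y=-1)=1/2$ (for unbalanced classes one only gets $\PP(Y=1)\PP(Y=-1)\|\theta\|^2<1$), a caveat the theorem statement glosses over.
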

Theorem~\ref{thm:idealdist} shows that $D^\ast$ has the following good properties for downstream tasks:
\begin{itemize}
	\item In Theorem~\ref{thm:idealdist}, it is shown that $U^TX$ is a sufficient statistic for $Y$. Thus, from a prediction view, no information on $Y$ is lost when $D^\ast$ is used. This property is also a gold standard of many other problems, including approximate Bayesian computation \citep{fearnhead2012constructing}, representation learning \citep{cvitkovic2019minimal}, and dimension reduction \citep{adragni2009sufficient}.
	\item Theorem~\ref{thm:idealdist} suggests the mean difference between classes lies in the column space of $B$. If we write $U_\perp$ as an orthogonal matrix of $U$, then $U_\perp^TX$ is a collection of spurious features. $D^\ast$ is robust to these spurious features.	
	\item As suggested by the second part of Theorem~\ref{thm:idealdist}, all $u_k^TX$, $k=1,\ldots, K$, are potentially useful when we do not have access to $Y$ in the metric learning stage. In other words, the distance $D^\ast$ only keeps minimally sufficient information of $X$ for $Y$.  
\end{itemize}
In a word, the distance $D^\ast$ can keep all necessary information for $Y$ and remove nuisance factors from the data $X$, although label information is not utilized in the metric learning stage.

Unlike Euclidean distance, the target distance $D^\ast$ puts more weights in the directions that can reflect more difference between similar and dissimilar data pairs. More concretely, if we project the data to the direction $u_k$, the difference between similar and dissimilar data pairs is $\lambda_k$
$$
\lambda_k=\EE\left(\left[u_k^T(X_{i,j}-X_{i',j'})\right]^2-\left[u_k^T(X_{i,j}-X_{i,j'})\right]^2\right),\qquad k=1,\ldots, K.
$$
Along direction $u_k$, the average distance between dissimilar data pair is more significant than that between similar data pair when $\lambda_k$ is larger. So $u_k$ can better distinguish similar and dissimilar data pairs than $u_{k+1}$ as $\lambda_k\ge \lambda_{k+1}$. It seems reasonable to put more weights on $u_k$ over $u_{k+1}$ since it is usually believed that a feature that can better distinguish similar and dissimilar data pairs is more useful for the downstream analysis. However, the second part of Theorem~\ref{thm:idealdist} suggests that it is possible that $u_{k+1}$ is more useful than $u_{k}$ in the downstream analysis. For example, if we assume $Z|Y\sim N(\theta Y/2,I_K-\theta\theta^T/4)$ with $\theta$ such that $\theta_k=0$ but $\theta_{k+1}\ne 0$, then $u_k^TX|Y=1$ and $u_k^TX|Y=-1$ follow the same distribution while $u_{k+1}^TX|Y=1$ and $u_{k+1}^TX|Y=-1$ follow different ones. Motivated by this observation, we consider a moderated target distance 
$$
D^{\ast\ast}(X_1,X_2)=(X_1-X_2)^TU U^T(X_1-X_2),
$$
which puts equal weights in all directions $u_k$, $k=1,\ldots,K$. Similar to $D^{\ast}$, $D^{\ast\ast}$ also has the same good properties for the downstream tasks. As we can see in the next section, $D^{\ast\ast}$ is a better choice than $D^{\ast}$ when the conditional number $\kappa=\lambda_1/\lambda_K$ is large. 

%%%%%%%%%%%%%%%%%%%%%%%%%%%%%%%%%%%%%%%%%%%%%%%
\section{Target Distance on Specific Tasks}
\label{sc:downstream}
%%%%%%%%%%%%%%%%%%%%%%%%%%%%%%%%%%%%%%%%%%%%%%%

The ultimate goal of self-supervised metric learning is to improve various downstream distance-based statistical and machine learning methods. But it is still unclear to what extent the performance of the specific downstream task can be improved. In order to fill this gap, we investigate the benefits of self-supervised metric learning on some specific tasks when we observe infinite unlabeled multi-view samples, that is, $D^\ast$ and $D^{\ast\ast}$ are known. We consider four of the most commonly used distance-based methods: $k$-nearest neighbor classification algorithm, distance-based two-sample testing, $k$-means clustering (discussed in Supplemental Materials), and distance-based sample identification (discussed in Supplemental Materials). 

%%%%%%%%%%%%%%%%%%%%%%%%%%%%%%%%%%%%%%%%%%%%%%%
\subsection{$k$-Nearest Neighbor Classification}
\label{sc:knn}
%%%%%%%%%%%%%%%%%%%%%%%%%%%%%%%%%%%%%%%%%%%%%%%

Classification is the first problem we consider in this section. The observed data in classification includes the label of each sample, i.e., $(X_1,Y_1),\ldots, (X_s,Y_s)$. In classification, our goal is to build a decision rule $f: \RR^d\to \{-1,1\}$ to predict the label $Y$ for any given input of $X$. A long list of classification methods has been proposed to predict the labels. One of the most simple, intuitive, and efficient ones is probably the $k$-nearest neighbor ($k$-NN) classification method \citep{fix1985discriminatory,altman1992introduction,biau2015lectures}. Given the choice of  distance $D$ and a fixed point $x$,  $k$-NN is defined as following: $(X_{(1)},Y_{(1)}),\ldots, (X_{(s)},Y_{(s)})$ is a permutation of  $(X_1,Y_1),\ldots, (X_s,Y_s)$ such that 
$$
D(X_{(1)},x)\le \ldots \le D(X_{(s)},x),
$$
and then the decision rule of $k$-NN is the majority vote of its neighbors
$$
\hat{f}_D(x)=\begin{cases}
	1,\qquad & \sum_{i=1}^k \bI(Y_{(i)}=1)\ge k/2,\\
	-1,\qquad & {\rm otherwise}.
\end{cases}
$$
The $k$-NN classification rule is a plug-in estimator of the Bayes classification rule, which is given by 
$$
f^\ast(x)=\begin{cases}
	1,\qquad & \eta(x)\ge 1/2,\\
	-1,\qquad & {\rm otherwise},
\end{cases}
$$
where $\eta(x)=\PP(Y=1|X=x)$ is the regression function. The Bayesian rule is considered as the optimal decision rule since it minimizes misclassification error $R(f)=\PP(Y\ne f(X))$. To compare the performances of different distances on $k$-NN, we use the excess risk of misclassification error as the measure
$$
r(D)=\EE\left(\PP(Y\ne \hat{f}_D(X))\right)-\PP(Y\ne f^\ast(X)).
$$
Before characterizing the performance of $k$-NN, we can show that both the Bayes classification rule and the regression function can be written as a function of $U^TX$. A toy example of regression function is shown in Figure~\ref{fg:3dsurf} to illustrate the idea. The form of the regression function is closely connected to the multiple index model in statistical literature \citep{li1991sliced,lin2021optimality}.

\begin{figure}[h!]
	\centering
	\includegraphics[width=0.45\textwidth]{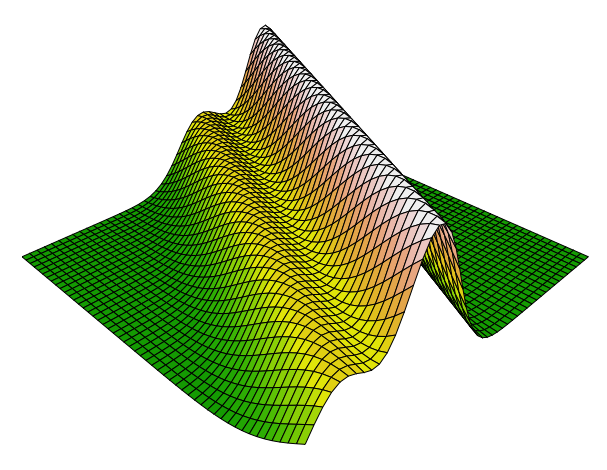}
	\caption{A toy example of regression function in two dimensional space. The regression function only changes along one direction.}\label{fg:3dsurf}
\end{figure}

\begin{proposition}
	\label{prop:bayesrule}
	If the assumptions in Section~\ref{sc:model} hold, there exists a function $\tilde{\eta}$ and $\tilde{f}^\ast$ such that 
	$$
	\eta(x)=\tilde{\eta}(U^Tx)\qquad {\rm and}\qquad f^\ast(x)=\tilde{f}^\ast(U^Tx).
	$$
\end{proposition}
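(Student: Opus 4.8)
The plan is to obtain Proposition~\ref{prop:bayesrule} as a short consequence of Theorem~\ref{thm:idealdist} combined with Bayes' formula. Write $p_1=\PP(Y=1)$ and $p_{-1}=\PP(Y=-1)=1-p_1$, and let $\ell(x)=\pi(x|Y=1)/\pi(x|Y=-1)$ on the set where $\pi(x|Y=-1)>0$. By Bayes' formula,
$$
\eta(x)=\PP(Y=1|X=x)=\frac{\pi(x|Y=1)\,p_1}{\pi(x|Y=1)\,p_1+\pi(x|Y=-1)\,p_{-1}}=\frac{\ell(x)\,p_1}{\ell(x)\,p_1+p_{-1}}.
$$
By the first part of Theorem~\ref{thm:idealdist}, $\ell(x)=g(U^Tx)$ for some function $g$, hence
$$
\eta(x)=\frac{g(U^Tx)\,p_1}{g(U^Tx)\,p_1+p_{-1}}=:\tilde\eta(U^Tx),
$$
which is the claimed representation; the density-zero set where $\pi(x|Y=-1)=0$ can be absorbed into the definition of $\tilde\eta$ and affects no statement about $\eta$ up to a null set. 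The Bayes-rule claim is then immediate: $f^\ast(x)=1$ iff $\eta(x)\ge 1/2$ iff $\tilde\eta(U^Tx)\ge 1/2$, so $f^\ast(x)=\tilde f^\ast(U^Tx)$ with $\tilde f^\ast(v)=1$ when $\tilde\eta(v)\ge 1/2$ and $-1$ otherwise.

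For a self-contained version that does not invoke Theorem~\ref{thm:idealdist}, I would argue directly from the model. Using the conditional independence $Y\perp X\mid Z$ from \eqref{eq:condindep}, write $\eta(x)=\EE[h(Z)\mid X=x]$ with $h(z)=\PP(Y=1\mid Z=z)$, so it suffices to show that the conditional law of $Z$ given $X=x$ depends on $x$ only through $U^Tx$. Since $\pi(z\mid X=x)\propto \pi(z)\,\pi_\epsilon(x-Bz)$, where $\pi_\epsilon$ is the density of $\epsilon_{i,j}$, and since $\epsilon=P\epsilon+(I_d-P)\epsilon$ with $P=B(B^TB)^{-1}B^T=UU^T$ and the two components independent by the Section~\ref{sc:model} assumption, the noise density factorizes as $\pi_\epsilon(e)=\pi_{P\epsilon}(Pe)\,\pi_{(I-P)\epsilon}((I-P)e)$. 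Using $PB=B$ and $(I-P)B=0$ gives $\pi_\epsilon(x-Bz)=\pi_{P\epsilon}(Px-Bz)\,\pi_{(I-P)\epsilon}((I-P)x)$, and the second factor is free of $z$, so it cancels in the normalizing constant; thus $\pi(z\mid X=x)$ depends on $x$ only through $Px=UU^Tx$, hence only through $U^Tx$, and the representation for $\eta$ follows.

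The only step that needs genuine care is this factorization of the noise density: $P\epsilon$ and $(I-P)\epsilon$ live on the subspaces $\mathrm{col}(B)$ and $\mathrm{col}(B)^\perp$ of dimensions $K$ and $d-K$, so the factorization must be stated for densities with respect to Lebesgue measure on those two subspaces and then recombined via the orthogonal decomposition $\RR^d=\mathrm{col}(B)\oplus\mathrm{col}(B)^\perp$; the assumption that $(I_d-B(B^TB)^{-1}B^T)\epsilon_{i,j}$ is independent of $B^T\epsilon_{i,j}$ is exactly what licenses it. Given that Theorem~\ref{thm:idealdist} is already available, I would present the three-line Bayes argument as the proof and, if desired, record the direct computation as a remark.
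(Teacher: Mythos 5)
Your proof is correct and matches the paper's intent: the paper omits the proof of Proposition~\ref{prop:bayesrule} precisely because it is the immediate Bayes-formula consequence of the likelihood-ratio representation in Theorem~\ref{thm:idealdist}, which is exactly your three-line argument. The self-contained factorization argument you sketch as a backup is also sound and essentially reproduces the sufficiency step inside the paper's proof of Theorem~\ref{thm:idealdist}, so nothing further is needed.
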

We omit the proof of Proposition~\ref{prop:bayesrule} since it is an immediate result of Theorem~\ref{thm:idealdist}. Proposition~\ref{prop:bayesrule} suggests that we can make assumptions for $\tilde{\eta}$ and $\tilde{f}^\ast$ rather than $\eta$ and $f^\ast$. Specifically, we consider the following assumptions. 
\begin{assumption}
	\label{ap:knn}
	It holds that
	\begin{enumerate}[label=(\alph*)]
		\item $\tilde{\eta}(y)$ is $\alpha$-H\"older continuous, i.e., $|\tilde{\eta}(y)-\tilde{\eta}(y')|\le L\|y-y'\|^\alpha$, where $y,y'\in \RR^K$;
		\item the distribution of $X$ satisfies $\beta$-marginal assumption, i.e., $\PP(0<|\tilde{\eta}(U^TX)-1/2|\le t)\le C_0t^\beta$ for some constant $C_0$;
		\item the support of $X$ is a compact set and the probability density function $\mu(x)$ exists. The probability density function $\mu(x)$ is bounded away from 0 on the support of $X$, i.e., $\mu(x)\ge \mu_{\rm min}$ for some small constant $\mu_{\rm min}$.
	\end{enumerate}
\end{assumption}
These assumptions in Assumption~\ref{ap:knn} are commonly used conditions for analyzing nonparametric classification methods such as $k$-NN \citep{audibert2007fast,samworth2012optimal}. With these conditions, the following theorem characterizes the convergence rate of $k$-NN when different distances are used. 
\begin{theorem}
	\label{thm:knn}
	Suppose assumptions in Section~\ref{sc:model} and Assumption~\ref{ap:knn} hold. If we choose $k=cs^{2\alpha/(2\alpha+d)}$ for some constant $c$, then
	$$
	r(\|\cdot\|^2) \lesssim  s^{-\alpha(1+\beta)/(2\alpha+d)}.
	$$
	On the other hand, if $k=c(s/\kappa^{K-1})^{2\alpha/(2\alpha+K)}$ or $k=cs^{2\alpha/(2\alpha+K)}$ for some constant $c$, then
	$$
	r(D^\ast) \lesssim  (s/\kappa^{K-1})^{-\alpha(1+\beta)/(2\alpha+K)}\qquad {\rm and}\qquad r(D^{\ast\ast}) \lesssim s^{-\alpha(1+\beta)/(2\alpha+K)}.
	$$
	
	Let $\Fcal$ be the collection of regression function $\eta(x)$ and probability density function $\mu(x)$ satisfying Assumption~\ref{ap:knn}. We have 
	$$
	\min_k\sup_{(\eta,\mu)\in \Fcal}r(\|\cdot\|^2) \gtrsim s^{-\alpha(1+\beta)/(2\alpha+d)},
	$$
	$$
	\min_k\sup_{(\eta,\mu)\in \Fcal}r(D^\ast) \gtrsim (s/\kappa^{K-1})^{-\alpha(1+\beta)/(2\alpha+K)}\quad {\rm and}\quad \min_k\sup_{(\eta,\mu)\in \Fcal}r(D^{\ast\ast}) \gtrsim s^{-\alpha(1+\beta)/(2\alpha+K)}.
	$$
\end{theorem}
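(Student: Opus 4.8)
The statement bundles six inequalities --- a matching upper and minimax lower bound for each of $\|\cdot\|^2$, $D^\ast$, $D^{\ast\ast}$ --- and the plan is to route all of them through one structural fact. Proposition~\ref{prop:bayesrule} gives $\eta(x)=\tilde\eta(U^Tx)$, so $Y$ depends on $X$ only through $W:=U^TX\in\RR^K$; at the same time $D^\ast(x_1,x_2)=(W_1-W_2)^T\Lambda(W_1-W_2)$ and $D^{\ast\ast}(x_1,x_2)=\|W_1-W_2\|^2$ depend on $x_1,x_2$ only through $W_1,W_2$. Consequently $\hat f_{D^{\ast\ast}}$ (resp.\ $\hat f_{D^\ast}$) is, up to tie-breaking, the ordinary Euclidean (resp.\ coordinate-rescaled, with weights $\lambda_j$) $k$-NN rule on the $K$-dimensional sample $(W_i,Y_i)$ with regression function $\tilde\eta$, while $\hat f_{\|\cdot\|^2}$ genuinely stays in $\RR^d$. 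The ingredients I would carry to the $W$-problem are: $\tilde\eta$ is $\alpha$-H\"older (Assumption~\ref{ap:knn}(a)); the $\beta$-margin of Assumption~\ref{ap:knn}(b) is literally a statement about the law of $W$; and the law of $W$ has a density bounded away from zero on a compact set, which I would deduce from Assumption~\ref{ap:knn}(c) together with the convolution structure $W=\Lambda^{1/2}Z+U^T\epsilon$ supplied by the factor model.

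\textbf{Upper bounds.} I would first record the standard $k$-NN bias--variance estimate (as in \citet{audibert2007fast,samworth2012optimal}): for a distance $D$, on an event of probability $1-s^{-c}$ one has $\hat f_D(x)=f^\ast(x)$ for every $x$ with $|\eta(x)-1/2|>b_{k,D}(x)+c'\sqrt{\log s/k}$, where $b_{k,D}(x)\le L\,\rho_{k,D}(x)^\alpha$ and $\rho_{k,D}(x)$ --- the largest value of $\|U^T(X_{(i)}-x)\|$ among the $k$ $D$-nearest points --- is controlled by a volume argument through the density lower bound, namely the smallest $D$-ball about $x$ carrying probability mass $k/s$. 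Combined with the $\beta$-margin this gives $r(D)\lesssim C_0\big(L\,\bar\rho_{k,D}^{\,\alpha}+\sqrt{\log s/k}\big)^{1+\beta}$ for a representative neighbor radius $\bar\rho_{k,D}$. For the Euclidean rule $\bar\rho_{k,D}\asymp(k/s\mu_{\min})^{1/d}$, so the prescribed $k\asymp s^{2\alpha/(2\alpha+d)}$ balances the two terms and yields $r(\|\cdot\|^2)\lesssim s^{-\alpha(1+\beta)/(2\alpha+d)}$ --- here one uses that $\eta$ inherits $\alpha$-H\"older continuity on $\RR^d$ from $\tilde\eta$ since $\|U^Tv\|\le\|v\|$. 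For $D^{\ast\ast}$ the identical computation in $\RR^K$ gives $\bar\rho_{k,D^{\ast\ast}}\asymp(k/s\mu_{\min})^{1/K}$, and $k\asymp s^{2\alpha/(2\alpha+K)}$ produces $r(D^{\ast\ast})\lesssim s^{-\alpha(1+\beta)/(2\alpha+K)}$. For $D^\ast$ the neighbor set is an ellipsoid $\{w':\sum_j\lambda_j(w_j-w'_j)^2\le r\}$; comparing its Euclidean radius and Euclidean volume to those of a ball introduces the ratios $\lambda_j/\lambda_K\le\kappa$, which accumulate into an effective loss of sample size of order $\kappa^{K-1}$. I would then take $k\asymp(s/\kappa^{K-1})^{2\alpha/(2\alpha+K)}$, note the stochastic term then equals the target $(s/\kappa^{K-1})^{-\alpha/(2\alpha+K)}$, and check that the bias term is no larger; the coarser choice $k\asymp s^{2\alpha/(2\alpha+K)}$ works as well. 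This gives $r(D^\ast)\lesssim(s/\kappa^{K-1})^{-\alpha(1+\beta)/(2\alpha+K)}$.

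\textbf{Lower bounds.} For each distance I would build, following \citet{audibert2007fast}, a finite subfamily $\{(\eta_\sigma,\mu)\}_{\sigma\in\{-1,1\}^N}\subset\Fcal$ whose members differ only in the signs of $N$ small bump perturbations of $\eta$, each of width $\delta$ and height of order $L\delta^\alpha$ (so $\alpha$-H\"older holds), the bumps carved in the $W$-coordinates and combined with a background profile so that the $\beta$-margin and the density lower bound are met, and the ambient law of $X$ chosen with a $d$-dimensional density bounded below on a compact set --- possible inside the factor model because $\epsilon_{i,j}$ may be taken $d$-dimensional. Assouad's lemma then lower-bounds the excess risk of any classifier, in particular of $\hat f_D$ for every $k$, by the per-bump error probability, and I would argue this is bounded below by a constant for all $k$ once $\delta$ sits at the critical scale: if $k$ is too small the vote noise $\sqrt{1/k}$ swamps the bump height, and if $k$ is too large the $D$-ball about a point overshoots a whole bump and the random signs average out; tuning $\delta$ so that these two regimes abut leaves no admissible $k$. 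For the Euclidean rule the $D$-ball carrying $k$ points has radius $\asymp(k/s\mu_{\min})^{1/d}$, forcing $\delta\asymp s^{-1/(2\alpha+d)}$ and the rate $s^{-\alpha(1+\beta)/(2\alpha+d)}$; for $D^{\ast\ast}$ the same in $\RR^K$ gives $\delta\asymp s^{-1/(2\alpha+K)}$ and $s^{-\alpha(1+\beta)/(2\alpha+K)}$; for $D^\ast$ the ellipsoidal balls are aligned against bumps placed along the least-weighted coordinate and overshoot them on a sample budget reduced by $\kappa^{K-1}$, producing $(s/\kappa^{K-1})^{-\alpha(1+\beta)/(2\alpha+K)}$.

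\textbf{Main obstacle.} The Euclidean and $D^{\ast\ast}$ analyses are the classical nonparametric $k$-NN results in dimensions $d$ and $K$, and are routine once the reduction above is in place. The crux is the $D^\ast$ case: one must show that the anisotropy of the $D^\ast$-metric, played against the \emph{isotropic} H\"older and margin assumptions on $\tilde\eta$, costs a factor of order $\kappa^{K-1}$ in effective sample size, and pinning down this exact power --- neither larger in the upper bound nor smaller in the matching lower bound --- requires a careful interplay among the $D^\ast$-radius, the Euclidean radius, and the Euclidean volume of ellipsoids, together with the right alignment of the lower-bound bumps with the metric's shortest and longest axes. A secondary point worth verifying carefully is that the pushforward density of $W=U^TX$ is bounded below on its support (so that the volumetric arguments for $D^\ast$ and $D^{\ast\ast}$ go through), which I would obtain from the compactness assumption and the additive structure $X=BZ+\epsilon$; and, as is standard for these minimax statements, the lower bounds implicitly require the mild regime in which $\alpha\beta$ is not too large relative to $K$ (resp.\ $d$) so that the bump families are feasible.
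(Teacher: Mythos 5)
Your proposal is correct and follows essentially the same route as the paper: the reduction to the $K$-dimensional problem via Proposition~\ref{prop:bayesrule}, the standard bias--variance bound with the neighbor radius controlled through the volume of $D$-balls and the margin peeling for the upper bounds, and a bump construction with randomized signs plus the two-regime (vote-noise versus ball-overshoot) argument for the lower bounds, with the $\kappa^{K-1}$ factor extracted from the ellipsoid geometry exactly as in the paper. One caution on terminology: Assouad's lemma applied over \emph{all} classifiers would only yield the $K$-dimensional minimax rate (since $\eta$ depends on $x$ only through $U^Tx$) and so could never produce the $d$-dependent lower bound for the Euclidean rule; the bound must be specific to the $k$-NN procedure with the given distance, which is what the paper formalizes through its high-error-set lemmas (Lemmas~\ref{lm:knnlowerbayes} and \ref{lm:knnlower}) --- your own elaboration of the per-bump error probability of the vote is exactly this argument, so the substance is right even if the label is not.
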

We write $a\lesssim b$ for two sequences $a$ and $b$ if there exists a constant $C$ such that $a\le C b$, and $a\gtrsim b$ for two sequences $a$ and $b$ if there exists a constant $c$ such that $a\ge c b$. The two parts in Theorem~\ref{thm:knn} show that the convergence rates are tight. Theorem~\ref{thm:knn} suggests that when the target distances $D^\ast$ and $D^{\ast\ast}$ are used, the curse of dimensionality is alleviated and the convergence rate of $k$-NN can be much improved. The reason for the improvement is that the neighborhood defined by target distance $D^\ast$ and $D^{\ast\ast}$ can better fit the geometry of the Bayes classification rule than that defined by Euclidean distance. To illustrate this point, we compare balls defined by Euclidean distance and target distance, respectively,  denoted by $\Bcal_{\|\cdot\|^2}(x,r)$ and $\Bcal_{D^\ast}(x,r)$. The shapes of the two neighborhoods are quite different: $\Bcal_{\|\cdot\|^2}(x,r)$  is a standard sphere, while $\Bcal_{D^\ast}(x,r)$ is a cylinder, of which axis is in the orthogonal complement of $B$. One toy example in $\RR^2$ is illustrated in Figure~\ref{fg:knn}, where the red area is $\Bcal_{D^\ast}(x,r)$, and the yellow area is $\Bcal_{\|\cdot\|^2}(x,r)$. As pointed out by Proposition~\ref{prop:bayesrule}, the value of $\eta(x)$ only changes along with the directions in the column subspace of $U$, so we can expect values of $\eta(x)$ is more similar in $\Bcal_{D^\ast}(x,r)$ than in $\Bcal_{\|\cdot\|^2}(x,r)$ and thus $\Bcal_{D^\ast}(x,r)$ can lead to a smaller bias than $\Bcal_{\|\cdot\|^2}(x,r)$.

\begin{figure}[h!]
	\begin{center}
		\begin{tikzpicture}[scale=1]
			
			\fill[red!50!white,fill opacity=0.5] (-0.5,-0.5) -- (5,5) -- (6,4) -- (0.5,-1.5) --cycle ; 
			\fill[yellow!50!white,fill opacity=0.5] (2,1) circle (1.5);
			
			\path[-latex,draw,line width=1pt] (-3,0) -- (7,0);
			\path[-latex,draw,line width=1pt] (-1,-1.5) -- (-1,5.5);
			
			\draw[fill=black] (2,1) circle (0.06);
			\draw[line width=1.5pt] (2,1) circle (1.5);
			
			\path[draw,line width=1.5pt,black!80!white,dashed] (-0.5,-0.5) -- (5,5);
			\path[draw,line width=1.5pt,black!80!white,dashed] (0.5,-1.5) -- (6,4);
			
			\path[-latex,draw,line width=1pt] (2,1) -- (0.5,2.5);
			
			\draw[thick,black](0.56,2.9)node{Direction of $U$};
			\draw[thick,black](2,0.7)node{$x$};
			
			\draw[-{Latex[length=1mm, width=1.5mm]},line width=1pt] (4.1,0.3) node[right]{$\Bcal_{\|\cdot\|^2}(x,r)$}--(3.5,0.3);
			
			\draw[-{Latex[length=1mm, width=1.5mm]},line width=1pt] (6.1,3) node[right]{$\Bcal_{D^\ast}(x,r)$ or $\Bcal_{D^{\ast\ast}}(x,r)$ }--(5.5,3);
			
		\end{tikzpicture}
	\end{center}
	\caption{An illustrative example for the neighborhoods defined by Euclidean distance $\|\cdot\|^2$ and target distance $D^\ast$ or $D^{\ast\ast}$.}
	\label{fg:knn}
\end{figure}
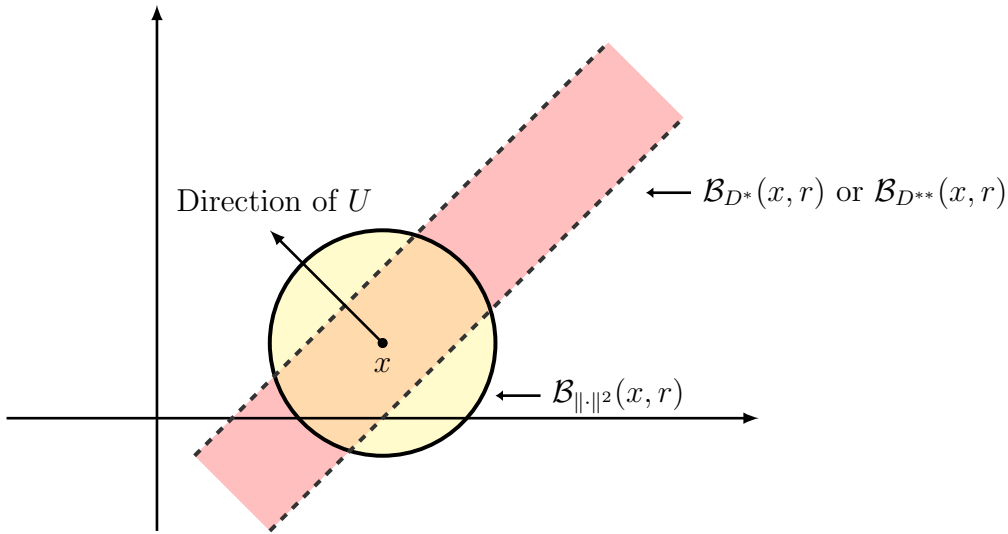

%%%%%%%%%%%%%%%%%%%%%%%%%%%%%%%%%%%%%%%%%%%%%%%
\subsection{Two-Sample Testing}
\label{sc:twosample}
%%%%%%%%%%%%%%%%%%%%%%%%%%%%%%%%%%%%%%%%%%%%%%%

Two-sample testing is central to statistical inferences and an important tool in many applications. Unlike the multi-view data used for metric learning, we observe only one view but with labels for each sample in the standard two-sample testing setting. Specifically, the data we observe in two-sample testing is $(X_1,Y_1),\ldots, (X_s,Y_s)$ and we are interested in the following hypothesis
$$
H_0: \EE(X|Y=-1)=\EE(X|Y=1)\qquad {\rm and}\qquad H_1: \EE(X|Y=-1)\ne \EE(X|Y=1).
$$
In order to test such a hypothesis, many different tests have been proposed. One of the most widely used test families is the distance-based method, including the energy distance test \citep{szekely2005new,sejdinovic2013equivalence}, permutational multivariate analysis of variance (PERMANOVA) \citep{mcardle2001fitting,anderson2014permutational,wang2021hypothesis}, and graph-based test \citep{friedman1979multivariate,chen2017new}. The idea of a distance-based test is that the pairwise distances between samples are first evaluated, and then the test is then constructed based on the distance matrix. The distance-based two-sample test is also closely related to the kernel-based two-sample test, such as the maximum mean discrepancy (MMD) \citep{gretton2012kernel}. In particular, \cite{sejdinovic2013equivalence} shows the equivalence between the energy distance test and the MMD test when the distance is a metric of negative type.  

In this section, we mainly focus on the energy distance test
$$
E(D)={2\over s_+s_{-}}\sum_{Y_i\ne Y_{i'}}D(X_i,X_{i'})-{1\over s_+(s_+-1)}\sum_{Y_i=Y_{i'}=1}D(X_i,X_{i'})-{1\over s_-(s_--1)}\sum_{Y_i=Y_{i'}=-1}D(X_i,X_{i'}),
$$
where $D$ is a given distance, $s_+=|\{i:Y_i=1\}|$, and $s_-=|\{i:Y_i=-1\}|$. The energy distance test compares the average within-group distance and the one across groups and can fully characterize the distribution homogeneity between groups when the distance is a metric of negative type \citep{sejdinovic2013equivalence}. Euclidean distance is a metric of negative type, but neither $D^\ast$ nor $D^{\ast\ast}$ is since they measure the difference only along with $K$ directions. This suggests that the target distances in self-supervised metric learning cannot fully capture the difference between two general distributions but are particularly suitable for the multi-view data, as we show in this section. To make decisions, we still need to choose a critical value for $E(D)$ or transform $E(D)$ to a $p$-value. Here, we consider two different ways to make decisions based on $E(D)$. The first one we consider here is the permutation test. Specifically, let $\Phi_s$ be the set of permutations on $\{1,\ldots, s\}$, i.e., $\Phi_s=\{\phi:\{1,\ldots, s\}\to \{1,\ldots, s\}|\phi(i)\ne \phi(j)\ {\rm if}\ i\ne j\}$. Given a permutation $\phi$, we write $\phi E(D)$ as the energy distance test statistic calculated on $(X_1,Y_{\phi(1)}),\ldots, (X_s,Y_{\phi(s)})$. Let $\phi_1,\ldots, \phi_B$ be $B$ permutations drawn from $\Phi_s$ randomly. Then, the $p$-value can be calculated by
$$
\hat{P}={1+\sum_{b=1}^B\bI_{(\phi_b E(D)\ge E(D))}\over 1+B}.
$$
We reject the null hypothesis when $\hat{P}\le \alpha$. The second way to make the decision is based on asymptotic distribution. We show that under the null hypothesis, $E(D)/{\rm sd}_{H_0}(E(D))\to N(0,1)$, where ${\rm sd}_{H_0}(E(D))$ is the standard deviation of $E(D)$ under the null hypothesis. So we can reject the null hypothesis when $E(D)>z_\alpha {\rm sd}_{H_0}(E(D))$ where $z_\alpha$ is the upper $\alpha$-quantile of standard normal distribution. ${\rm sd}_{H_0}(E(D))$ is usually a function of the covariance matrix and thus can be estimated consistently in practice \citep{chen2010two}.

The energy distance test's performance depends largely on the choice of distance and the difference between distributions in two groups. Here, we mainly study the tests' performance when the means between groups, $\mu=\EE(X|Y=-1)\ne \EE(X|Y=1)$, are different. We consider detection radius for the two-sample testing problem to compare the performance of different distances
$$
r(D,\epsilon)=\inf\left\{r:\underbrace{\PP(\phi_D=1|H_0)}_{type\  I\  error}+\underbrace{\PP(\phi_D=0|H_1(r))}_{type\  II\  error}\le \epsilon\right\},
$$
where $\phi_D$ is the test defined above by permutation test or asymptotic distribution and $H_1(r)=\{\|\mu\|\ge r\}$. Intuitively, the detection radius  $r(D,\epsilon)$ represents the smallest distance to separate the null and alternative hypothesis reliably. Thus, the test is more powerful to distinguish similar samples when $r(D,\epsilon)$ is smaller. To characterize the performance of energy distance test, we make the following assumptions. 
\begin{assumption}
	\label{ap:twosample}
	It holds that
	\begin{enumerate}[label=(\alph*)]
		\item we choose $\alpha=\epsilon/2$;
		\item assume $\PP(Y=1)=\PP(Y=-1)=1/2$;
		\item assume the covariance matrix of $\epsilon_{i,j}$ is $\Sigma$
		\item if  we write the covariance matrix of $X$ given $Y=1$ as $\Sigma_+$ and the covariance matrix of $X$ given $Y=-1$ as $\Sigma_-$, then we assume ${\rm Tr}(\Sigma_{i_1}\Sigma_{i_2}\Sigma_{i_3}\Sigma_{i_4})=o\left(\|\Sigma_++\Sigma_-\|_F^4\right)$ for $i_1,i_2,i_3,i_4=+$ or $-$. We assume it still hold when we replace $\Sigma_+$ and $\Sigma_-$ by $B^T\Sigma_+B$ and $B^T\Sigma_-B$ ($U^T\Sigma_+U$ and $U^T\Sigma_-U$).
		\item for any $1\le i<j\le s$, we assume $\EE(X_i^TX_j)^4=o\left(s\|\Sigma_++\Sigma_-\|_F^4\right)$, $\EE(X_i^TBB^TX_j)^4=o\left(s\|B^T(\Sigma_++\Sigma_-)B\|_F^4\right)$ and $\EE(X_i^TUU^TX_j)^4=o\left(s\|U^T(\Sigma_++\Sigma_-)U\|_F^4\right)$.
	\end{enumerate}
\end{assumption}
The first three assumptions in Assumption~\ref{ap:twosample} are fairly weak conditions, and the last two are moment conditions used for the central limit theorem of $U$-statistics. Similar assumptions also appear in \cite{hall1984central,chen2010two,li2019optimality}. If we use Euclidean distance and the distance $D^\ast$ and $D^{\ast\ast}$ in the energy distance test $E(D)$, the detection radius can be characterized by the following theorem. 

\begin{theorem}
	\label{thm:twosample}
	Suppose assumptions in Section~\ref{sc:model} and Assumption~\ref{ap:twosample} hold. If the test $\phi_D$ is defined by permutation test (permutation test does not need (d) and (e) in Assumption~\ref{ap:twosample}) or asymptotic distribution, then
	$$
	r(\|\cdot\|^2,\epsilon)\lesssim {\|BB^T+\Sigma\|_F^{1/2}\over \sqrt{s}},\qquad r(D^\ast,\epsilon)\lesssim {\|\Lambda^2+B^T\Sigma B\|_F^{1/2}\over \sqrt{s\lambda_K}}
	$$
	and
	$$
	r(D^{\ast\ast},\epsilon)\lesssim {\|\Lambda+U^T\Sigma U\|_F^{1/2}\over \sqrt{s}}.
	$$
	
	Consider  the energy distance test defined by permutation test or asymptotic distribution and the following local alternative hypothesis $\tilde{H}_1(r)=\left\{\mu=ru_K\right\}$.
	If $r=o(\|BB^T+\Sigma\|_F^{1/2}/\sqrt{s})$, then 
	$$
	\PP(\phi_{\|\cdot\|^2}=0|\tilde{H}_1(r))\to 1-\alpha.
	$$
	Similarly, if $r=o(\|\Lambda^2+B^T\Sigma B\|_F^{1/2}/\sqrt{s\lambda_K})$ or $r=o(\|\Lambda+U^T\Sigma U\|_F^{1/2}/\sqrt{s})$, then
	$$
	\PP(\phi_{D^\ast}=0|\tilde{H}_1(r))\to 1-\alpha\qquad {\rm and}\qquad \PP(\phi_{D^{\ast\ast}}=0|\tilde{H}_1(r))\to 1-\alpha.
	$$
\end{theorem}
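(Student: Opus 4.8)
The plan is to rewrite the energy--distance statistic for a Mahalanobis distance $D_M$ as a Chen--Qin-type bilinear $U$-statistic and then specialize $M$ to $I_d$, $BB^T$ and $UU^T$. Expanding $D_M(x,y)=x^TMx-2x^TMy+y^TMy$ and collecting terms, the self-terms cancel and
\begin{equation*}
	E(D_M)=2\left[\frac{\sum_{i\ne j,\,Y_i=Y_j=1}X_i^TMX_j}{s_+(s_+-1)}+\frac{\sum_{i\ne j,\,Y_i=Y_j=-1}X_i^TMX_j}{s_-(s_--1)}-\frac{2\sum_{Y_i=1,\,Y_j=-1}X_i^TMX_j}{s_+s_-}\right],
\end{equation*}
which is unbiased for $2(\mu_+-\mu_-)^TM(\mu_+-\mu_-)=2\mu^TM\mu$, where $\mu_\pm=\EE(X\mid Y=\pm1)$ and $\mu=\mu_+-\mu_-$. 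So $\EE E(D_M)=2\mu^TM\mu$ is the ``signal'' and the rest is a noise analysis. Since $E(D_M)$ is the Chen--Qin statistic applied to $M^{1/2}X$, I can import its variance formula: on a balanced split $s_+\asymp s_-\asymp s$ one gets $\Var(E(D_M))\asymp s^{-2}\|M^{1/2}(\Sigma_++\Sigma_-)M^{1/2}\|_F^2+s^{-1}\mu^TM(\Sigma_++\Sigma_-)M\mu$. Under $H_0$ the factor model forces $\EE(Z\mid Y=1)=\EE(Z\mid Y=-1)$ (because $B$ has full column rank), whence the conditional covariances of $Z$ sum to $2I_K$ by $\Var(Z)=I_K$, so $\Sigma_++\Sigma_-=2(BB^T+\Sigma)$ and $\mathrm{sd}_{H_0}(E(D_M))\asymp s^{-1}\|M^{1/2}(BB^T+\Sigma)M^{1/2}\|_F$.

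The three Frobenius norms then fall out of linear algebra: using $B=U\Lambda^{1/2}$, $U^TU=I_K$ and $U^TB=\Lambda^{1/2}$ one has $U^T(BB^T+\Sigma)U=\Lambda+U^T\Sigma U$, and peeling off the orthonormal factors (and the outer $\Lambda^{1/2}$ when $M=BB^T$, using $\Lambda^{1/2}U^T\Sigma U\Lambda^{1/2}=B^T\Sigma B$) gives $\|BB^T+\Sigma\|_F$ for $M=I_d$, $\|\Lambda^2+B^T\Sigma B\|_F$ for $M=BB^T$, and $\|\Lambda+U^T\Sigma U\|_F$ for $M=UU^T$. For the signal I invoke Theorem~\ref{thm:idealdist}: $\mu=B\theta$ lies in the column space of $B$, so $\mu^TI_d\mu=\|\mu\|^2$, $\mu^TUU^T\mu=\|U^T\mu\|^2=\|\mu\|^2$, while $\mu^TBB^T\mu=\|B^T\mu\|^2=\theta^T\Lambda^2\theta$ with $\|\mu\|^2=\theta^T\Lambda\theta$; minimizing $\theta^T\Lambda^2\theta$ over $\theta^T\Lambda\theta=r^2$ gives $r^2\lambda_K$, attained at $\mu=ru_K$. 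Imposing signal $\gtrsim\mathrm{sd}_{H_0}$ (and checking, via $\|\cdot\|_{\mathrm{op}}\le\|\cdot\|_F$, that the alternative-variance term does not exceed the signal in this regime) reproduces the three detection radii. To convert this into the power statement I need rejection with high probability when signal $\gtrsim\mathrm{sd}_{H_0}$: for the asymptotic test this uses the central limit theorem $E(D_M)/\mathrm{sd}_{H_0}(E(D_M))\to N(0,1)$ under $H_0$ --- which is exactly where Assumption~\ref{ap:twosample}(d)--(e) enter, as the $U$-statistic moment conditions transferred to $B^TX$ and $U^TX$ --- whereas for the permutation test I only need its exact level $\le\alpha$, a Chebyshev bound showing the permutation null is centered at $0$ with variance concentrating at $s^{-2}\|M^{1/2}(\Sigma_++\Sigma_-)M^{1/2}\|_F^2$, and Chebyshev on $E(D_M)$ under $H_1$; hence (d)--(e) are unnecessary there.

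For the converse, under $\tilde H_1(r)=\{\mu=ru_K\}$ with $r=o(\mathrm{radius})$, the mean shift $2\mu^TM\mu$ equals $2r^2\lambda_K$ (for $D^\ast$) or $2r^2$ (for $\|\cdot\|^2$ and $D^{\ast\ast}$), which is $o(\mathrm{sd}_{H_0})$ by hypothesis, and a short computation (using that the $(K,K)$-entry of $\Lambda^2+B^T\Sigma B$ is at most its Frobenius norm, and the analogues for the other two matrices) shows the extra variance $s^{-1}\mu^TM(\Sigma_++\Sigma_-)M\mu$ is $o(\mathrm{sd}_{H_0}^2)$ as well. For the asymptotic test, the same CLT therefore gives $E(D_M)/\mathrm{sd}_{H_0}\to N(0,1)$ under $\tilde H_1(r)$, so $\PP(\phi_D=1)\to\alpha$ and $\PP(\phi_D=0\mid\tilde H_1(r))\to1-\alpha$. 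For the permutation test I would couple the data under $\tilde H_1(r)$ with data under $H_0$ through the shift $X_i\mapsto X_i+Y_i\mu/2$, and show both $E(D_M)$ and the random permutation critical value move by $o_P(\mathrm{sd}_{H_0})$; since the standardized statistic has a non-degenerate limit, the two tests agree with high probability, so $\PP(\phi_D=1\mid\tilde H_1(r))\to\PP(\phi_D=1\mid H_0)\le\alpha$.

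The step I expect to be the main obstacle is the permutation-test analysis in both directions: rigorously showing the permutation distribution of $E(D_M)$ concentrates, with the correct variance scale, around its $H_0$ analog, and that the coupling above is stable --- in particular the constraint $\Var(Z)=I_K$ forces an $O(r^2/\lambda_K)$ perturbation of the conditional covariances under $\tilde H_1(r)$ that must be absorbed without moving the permutation quantile to leading order. The only other nontrivial point is verifying that the $U$-statistic CLT moment conditions genuinely pass from $X$ to $B^TX$ and $U^TX$; the remaining manipulations are routine Chen--Qin-style bookkeeping.
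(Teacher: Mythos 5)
Your proposal is correct and follows the same overall skeleton as the paper's proof: expand $E(D_M)$ into a bilinear $U$-statistic with signal $\EE E(D_M)=2\mu^TM\mu$ and noise of order $s^{-1}\|M^{1/2}(\Sigma_++\Sigma_-)M^{1/2}\|_F$ plus a linear term $s^{-1/2}(\mu^TM(\Sigma_++\Sigma_-)M\mu)^{1/2}$, handle the asymptotic test via a CLT and the permutation test via Markov/Chebyshev (which is exactly why (d)--(e) are dispensable there), translate $\Sigma_++\Sigma_-$ into $BB^T+\Sigma$ through the factor model, and for the converse show that both the mean shift and the extra variance are negligible relative to ${\rm sd}_{H_0}$. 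The sub-steps differ in three places. First, you import the Chen--Qin variance and CLT as black boxes, whereas the paper re-derives the CLT from Hall's martingale central limit theorem and explicitly verifies the two conditions where Assumption~2(d)--(e) enter; your remark that these conditions must "pass to $B^TX$ and $U^TX$" is precisely why the paper assumes them separately for $B^T\Sigma_\pm B$ and $U^T\Sigma_\pm U$ rather than deriving them. Second, you get $\Sigma_++\Sigma_-=2(BB^T+\Sigma)$ only under $H_0$; the paper instead derives the general identity $\tfrac12(\Sigma_++\Sigma_-)=BB^T+\Sigma-\mu\mu^T/4$ by two applications of the law of total variance, which it then uses both to bound $\|\tfrac12(\Sigma_++\Sigma_-)\|_F\le\|BB^T+\Sigma\|_F$ under the alternative (via $BB^T\succeq\mu\mu^T/4$, hence $\mu^TBB^T\mu\ge\|\mu\|^4/4$) and, in the converse with $\mu=ru_K$, to get the reverse bound $\|BB^T+\Sigma\|_F\le\tfrac{K}{K-1}\|\tfrac12(\Sigma_++\Sigma_-)\|_F$ --- this identity is exactly the clean way to absorb the $O(r^2)$ covariance perturbation you flag as the main obstacle, so that difficulty dissolves. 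Third, for the permutation test under $\tilde H_1(r)$ you propose a coupling with $H_0$ data; the paper's route is more direct and simpler: it shows that the observed statistic $E(D)$ and the permuted statistic $\phi_1E(D)$ converge to the \emph{same} normal limit at the same scale $2\sqrt2\|\Sigma_++\Sigma_-\|_F/s$, so the $p$-value is asymptotically uniform and the power is trivially $\alpha$. You may want to adopt that route, since making the coupling rigorous for a data-dependent permutation quantile is more delicate than the direct comparison of limits.
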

Together with the first and second part of Theorem~\ref{thm:twosample}, the detection radius for Euclidean distance and the target distances of self-supervised metric learning are sharp. Theorem~\ref{thm:twosample} suggests that the detection radius of the energy distance test is mainly determined by the variation of $X$, which can be decomposed into two parts: the first part corresponds to the difference between samples and the second part is due to the variation between different views of the same sample. If we assume $\Sigma=\sigma^2I$ in Theorem~\ref{thm:twosample}, we can have 
$$
r(\|\cdot\|^2,\epsilon)\lesssim {(\sqrt{K}\lambda_1+\sqrt{d}\sigma^2)^{1/2}\over \sqrt{s}},\qquad r(D^\ast,\epsilon)\lesssim \sqrt{\kappa}{(\sqrt{K}\lambda_1+\sqrt{K}\sigma^2)^{1/2}\over \sqrt{s}}
$$
and 
$$
r(D^{\ast\ast},\epsilon)\lesssim {(\sqrt{K}\lambda_1+\sqrt{K}\sigma^2)^{1/2}\over \sqrt{s}}.
$$
When self-supervised metric learning is used, variation between different views can be reduced from $\sqrt{d}\sigma^2$ to $\sqrt{K}\sigma^2$. It implies that the energy distance test can be improved by self-supervised metric learning when the variation between different views dominates, i.e., $\sqrt{K}\lambda_1\ll \sqrt{d}\sigma^2$.

%%%%%%%%%%%%%%%%%%%%%%%%%%%%%%%%%%%%%%%%%%%%%%%
\section{Self-Supervised Metric Learning in Multi-View Data}
\label{sc:spectral}
%%%%%%%%%%%%%%%%%%%%%%%%%%%%%%%%%%%%%%%%%%%%%%%

%%%%%%%%%%%%%%%%%%%%%%%%%%%%%%%%%%%%%%%%%%%%%%%
\subsection{Data-Driven Distance on Downstream Tasks}
%%%%%%%%%%%%%%%%%%%%%%%%%%%%%%%%%%%%%%%%%%%%%%%

In the previous section, we show that target distances $D^\ast$ and $D^{\ast\ast}$ in self-supervised metric learning are good distances for downstream analysis. However, we cannot directly adopt target distances in each downstream task as they are usually unknown in advance. In practice, we still need to estimate $D^\ast$ and $D^{\ast\ast}$ from the unlabeled multi-view data. One may wonder if the data-driven distances estimated from unlabeled multi-view data can also improve the downstream tasks similarly to target distances. Our investigation in this section confirms that the data-driven distance can benefit the downstream analysis when the target distances can be estimated accurately. It is sufficient to  estimate the following matrices to estimate the target distances
$$
M^\ast=BB^T\qquad {\rm and}\qquad M^{\ast\ast}=UU^T.
$$
Let $\hat{M}^\ast$ and $\hat{M}^{\ast\ast}$ be some estimators for $M^\ast$ and $M^{\ast\ast}$, and $D_{\hat{M}^\ast}$ and $D_{\hat{M}^{\ast\ast}}$ be the distances defined by them. The measure $\Delta(D,D')$ can be rewritten as the spectral norm of matrix difference, $\Delta(D,D')=\|M-M'\|$, where $D(X_1,X_2)=(X_1-X_2)^TM(X_1-X_2)$ and $D'(X_1,X_2)=(X_1-X_2)^TM'(X_1-X_2)$. The following theorem shows that the estimated distances can still improve downstream analysis. 

\begin{theorem}
	\label{thm:downstream}
	Suppose the data in self-supervised metric learning is independent from the data in downstream tasks and assumptions in Section~\ref{sc:model} hold and $\kappa$ is bounded. Let $\hat{M}^\ast$ and $\hat{M}^{\ast\ast}$ be some estimators of $M^\ast$ and $M^{\ast\ast}$ such that 
	$$
	\Delta(D^\ast,D_{\hat{M}^\ast})\le \delta^\ast\qquad {\rm and}\qquad \Delta(D^{\ast\ast},D_{\hat{M}^{\ast\ast}})\le \delta^{\ast\ast}.
	$$
	\begin{itemize}
		\item ($k$-nearest neighbor classification) Suppose Assumption~\ref{ap:knn} holds and let $c$ be some constant. If $k=c(s/\kappa^{K-1})^{2\alpha/(2\alpha+K)}$, $\delta^\ast\lesssim \lambda_K(s/\kappa^{K-1})^{-1/(2\alpha+K)}$ in $D_{\hat{M}^\ast}$ or $k=cs^{2\alpha/(2\alpha+K)}$, $\delta^{\ast\ast}\lesssim s^{-1/(2\alpha+K)}$ in $D_{\hat{M}^{\ast\ast}}$, then
		$$
		r(D_{\hat{M}^\ast}) \lesssim  (s/\kappa^{K-1})^{-\alpha(1+\beta)/(2\alpha+K)}\qquad {\rm and}\qquad r(D_{\hat{M}^{\ast\ast}}) \lesssim s^{-\alpha(1+\beta)/(2\alpha+K)}.
		$$
		\item (two-sample testing) Suppose Assumption~\ref{ap:twosample} and $\|\Sigma\|\lesssim \lambda_1$ hold and let $c$ be a large enough constant.  If $\delta^\ast=o(\lambda_K)$ in $D_{\hat{M}^\ast}$ or $\delta^{\ast\ast}=o(1)$ in $D_{\hat{M}^{\ast\ast}}$, then
		$$
		r(D_{\hat{M}^\ast},\epsilon)\lesssim {\|\Lambda^2+B^T\Sigma B\|_F^{1/2}\over \sqrt{s}}\qquad {\rm and}\qquad r(D_{\hat{M}^{\ast\ast}},\epsilon)\lesssim {\|\Lambda+U^T\Sigma U\|_F^{1/2}\over \sqrt{s}}.
		$$
		\item ($k$-means clustering) Suppose Assumption~S1 holds  and $t>\log s$. If $\|B^T\mu\|\gg \Psi(\Lambda^2+B^T\Sigma_\pm B)$, $\delta^\ast=o(\lambda_K)$ in $D_{\hat{M}^\ast}$ or  $\|\mu\|\gg \Psi(\Lambda+U^T\Sigma_\pm U)$, $\delta^{\ast\ast}=o(1)$ in $D_{\hat{M}^{\ast\ast}}$, then
		$$
		r(D_{\hat{M}^\ast})\le \Gamma(1+o(1), B^T\mu, B^T\Sigma_\pm B)\qquad {\rm and}\qquad r(D_{\hat{M}^{\ast\ast}}) \le \Gamma(1+o(1), \mu, U^T\Sigma_\pm U)
		$$
		with probability at least $1-s^5-\exp(-\sqrt{v}\|\mu\|)$ where $v\to \infty$.
		\item (sample identification) 	Suppose Assumption~S2 holds and $\lambda_d(\Sigma)\ge c\|\Sigma\|$ where $\lambda_d(\Sigma)$ is the smallest eigenvalue of $\Sigma$. If $\delta^\ast=o(\lambda_K)$ in $D_{\hat{M}^\ast}$ or $\delta^{\ast\ast}=o(1)$ in $D_{\hat{M}^{\ast\ast}}$, then
		$$
		r(D_{\hat{M}^\ast},\epsilon)\lesssim {\|B^T\Sigma B\|_F^{1/2}\over \lambda_K}\quad {\rm and }\quad  r(D_{\hat{M}^{\ast\ast}},\epsilon)\lesssim {\|U^T\Sigma U\|_F^{1/2}\over \sqrt{\lambda_K}}.
		$$
	\end{itemize}
\end{theorem}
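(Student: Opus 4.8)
The plan is to read Theorem~\ref{thm:downstream} as a stability statement and to reduce it to the already-established results for the exact targets. Since the metric-learning sample is independent of the downstream sample, I would first condition on $\hat{M}^\ast$ (resp. $\hat{M}^{\ast\ast}$), so that it becomes a fixed positive semidefinite matrix with $\|\hat{M}^\ast-M^\ast\|\le\delta^\ast$, and then re-run the arguments behind Theorems~\ref{thm:knn} and \ref{thm:twosample} and their $k$-means / sample-identification analogues with the perturbed metric in place of the target. The one elementary fact driving everything is homogeneity of the discrepancy of quadratic forms: $|D_{\hat{M}^\ast}(x_1,x_2)-D^\ast(x_1,x_2)|\le\|\hat{M}^\ast-M^\ast\|\,\|x_1-x_2\|^2\le\delta^\ast\|x_1-x_2\|^2$, and likewise for the isotropic version. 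Writing $\hat{M}^\ast=M^\ast+N$ with $\|N\|\le\delta^\ast$, the component of $N$ supported on the orthogonal complement of the column space of $B$ has operator norm at most $\delta^\ast$, while $M^\ast$ vanishes there; so $\hat{M}^\ast$ still contracts strongly (eigenvalues near $\lambda_1,\dots,\lambda_K$ up to $\delta^\ast$) along the informative directions and only weakly, by at most $\delta^\ast$, along the $d-K$ spurious ones.

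For $k$-nearest neighbor classification I would split $\hat{M}^\ast$ into its near-rank-$K$ part and its $\le\delta^\ast$ part and show that a $D_{\hat{M}^\ast}$-ball around $x$ is sandwiched between two $D^\ast$-balls whose radii differ from the original by $O(\delta^\ast)$ relative corrections; consequently the $D_{\hat{M}^\ast}$-$k$-nearest-neighbor set still has cardinality $k$ and lies in a slightly inflated cylinder aligned with the orthogonal complement of $B$. By Proposition~\ref{prop:bayesrule} the regression function depends on $x$ only through $U^Tx$ and is $\alpha$-H\"older there, so the bias over this neighborhood is still of order $(k/s)^{\alpha/K}$ provided $\delta^\ast$ is small relative to $\lambda_K(k/s)^{1/K}$-type quantities — precisely the role of $\delta^\ast\lesssim\lambda_K(s/\kappa^{K-1})^{-1/(2\alpha+K)}$ (resp. $\delta^{\ast\ast}\lesssim s^{-1/(2\alpha+K)}$) — while the variance term is untouched since there are still $k$ neighbors. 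Feeding these into the same bias–variance-plus-$\beta$-margin computation as in Theorem~\ref{thm:knn} yields the stated rates for $r(D_{\hat{M}^\ast})$ and $r(D_{\hat{M}^{\ast\ast}})$.

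For the remaining three tasks the decision rules are stable functionals of the pairwise distances — the energy statistic is literally linear in $D$ — so I would track the $\delta^\ast$-perturbation directly. For two-sample testing, $E(D_{\hat{M}^\ast})=E(D^\ast)+E(D_N)$; the permutation test has exact level $\alpha$ under $H_0$ for any statistic by exchangeability of the labels, so for that version it suffices to control power, i.e.\ to show $E(D_N)$ has mean and standard deviation of strictly smaller order than the margin by which $E(D^\ast)$ beats its critical value in Theorem~\ref{thm:twosample}: its alternative mean under $\tilde H_1(r)$ is $O(r^2\delta^\ast)=o(r^2\lambda_K)$, and — this is where $\|\Sigma\|\lesssim\lambda_1$ and $\kappa$ bounded enter — its null standard deviation is lower order than that of $E(D^\ast)$; for the asymptotic version one additionally checks that the plug-in variance estimate moves by only a lower-order amount. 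For $k$-means and sample identification I would similarly note that replacing the target distance by $D_{\hat{M}^\ast}$ perturbs each pairwise distance by at most $\delta^\ast\|x_i-x_j\|^2$; since $\delta^\ast=o(\lambda_K)$ this is of smaller order than both the within-cluster spread and the between-class gap governing $\|B^T\mu\|\gg\Psi(\cdot)$ in Assumption~S1 (respectively the separation in Assumption~S2), so the same coupling / concentration argument goes through and delivers the same misclustering rate $\Gamma(1+o(1),\cdot,\cdot)$ and the same detection radius, the high-probability statement changing only within the $o(1)$ already present.

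The main obstacle, common to all four parts, is the accumulation of the $d-K$ nuisance directions: a crude spectral-to-Frobenius bound such as $\|\Sigma^{1/2}N\Sigma^{1/2}\|_F\le\|N\|\,\|\Sigma\|_F$ loses a factor of order $\sqrt{d/K}$ and would force $\delta^\ast$ to scale with $d$, which is far stronger than claimed. To get around this one must exploit that the relevant estimators — and those constructed in Section~\ref{sc:spectral} — output (near-)rank-$K$ matrices whose range is aligned with the column space of $B$, so $N=\hat{M}^\ast-M^\ast$ has effective rank $O(K)$; only then, together with $\kappa$ bounded and $\|\Sigma\|\lesssim\lambda_1$, does the nuisance contribution become genuinely lower order. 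The technical cost is that each task's perturbed analysis must be pushed through its own machinery — the $U$-statistic central limit theorem for the test, the Lloyd-type / coupling argument for $k$-means, the nearest-sample decoding argument for identification, and the bias–variance-plus-margin argument for $k$-NN — rather than invoking the established rates as black boxes.
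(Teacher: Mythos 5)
Your proposal is correct and follows essentially the same route as the paper's proof: condition on the independent metric-learning data, use $|D_{\hat M}(x,y)-D(x,y)|\le\|\hat M-M\|\,\|x-y\|^2$ to sandwich the $D_{\hat M^\ast}$-balls between slightly inflated $D^\ast$-balls for $k$-NN, and for the other three tasks show that the signal and noise functionals (e.g.\ $\|\hat B^T\mu\|^2$ and $\|\hat B^T(BB^T+\Sigma)\hat B\|_F$ with $\hat B\hat B^T=\hat M^\ast$) change only by $(1+o(1))$ factors before re-running the target-distance arguments. Your closing observation about exploiting the (near-)rank-$K$ structure of $\hat M^\ast-M^\ast$ is exactly the device the paper uses implicitly via $\|\hat B-B\|_F\le\sqrt{K}\,\|\hat B-B\|$.
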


Theorem~\ref{thm:downstream} suggests that the estimated distance $D_{\hat{M}^\ast}$ and $D_{\hat{M}^{\ast\ast}}$ from the self-supervised metric learning could help achieve a similar performance as $D^\ast$ and $D^{\ast\ast}$ when the target distances can be estimated accurately. Self-supervised learning can help improve two-sample testing, $k$-means clustering, and sample identification as long as we have enough unlabeled multi-view data to estimate the target distance consistently, i.e., $\Delta(D^\ast,D_{\hat{M}^\ast})=o(\lambda_K)$ or $\Delta(D^{\ast\ast},D_{\hat{M}^{\ast\ast}})=o(1)$. Unlike these three downstream tasks, the improvement of $k$-nearest neighbor classification needs a more accurate estimation of target distance. Theorem~\ref{thm:downstream} assumes the independence between data in metric learning and downstream tasks for the simplicity of analysis. This is a reasonable assumption when we have many unlabeled multi-view data in a typical self-supervised learning setting. If the metric learning and downstream tasks use the same data set, the results in Theorem~\ref{thm:downstream} might still hold, but the analysis can be much more involved. 

%%%%%%%%%%%%%%%%%%%%%%%%%%%%%%%%%%%%%%%%%%%%%%%
\subsection{Spectral Self-Supervised Metric Learning}
\label{sc:metrixfinitesample}
%%%%%%%%%%%%%%%%%%%%%%%%%%%%%%%%%%%%%%%%%%%%%%%

The previous section shows that the downstream task can be improved when the target distances can be estimated accurately. Two questions naturally arise: how shall we estimate the target distances? how much unlabeled multi-view data is sufficient to improve the downstream analysis? To answer these questions, we consider a spectral method to estimate $D^\ast$ and $D^{\ast\ast}$ in this section. Since $M^\ast$ is the optimal solution of \eqref{eq:metriclearning}, a natural idea of estimating $M^\ast$ is to replace $\EE\left(D_M(X_{i,j},X_{i',j'})-D_M(X_{i,j},X_{i,j'})\right)$ with its empirical version. More concretely, its empirical version can be written as
$$
{1\over m(m-1)n^2}\sum_{i\ne i',j,j'}D_M(X_{i,j},X_{i',j'})-{1\over mn(n-1)}\sum_{i,j\ne j'}D_M(X_{i,j},X_{i,j'}).
$$
Here, we consider all pairs of dissimilar and similar data and use $U$-statistics as the estimator. After plugging in the empirical version of distance difference and some calculation, $M^\ast$ can be estimated by the following optimization problem
$$
\max_{M} {\rm Tr}\left(\hat{R}M\right),\qquad {\rm s.t.}\quad \|M\|_F\le 1\quad {\rm and}\quad {\rm rank}(M)\le K.
$$
where $\hat{R}$ is a $d\times d$ matrix 
\begin{align*}
	\hat{R}&={1\over mn(n-1)}\sum_{i,j\ne j'}\left(X_{i,j}X_{i,j'}^T+X_{i,j'}X_{i,j}^T\right)-{1\over m(m-1)}\sum_{i\ne i'}\left(\bar{X}_{i}\bar{X}_{i'}^T+\bar{X}_{i'}\bar{X}_{i}^T\right)
\end{align*}
Here, $\hat{R}$ is an unbiased estimator of $BB^T$ regardless of the $\epsilon_{i,j}$'s distribution. The reason for having unbiased estimator is that we observe several views of each sample. This is different from the classical factor model, where we only observe a single view for each sample \citep{fan2020statistical}. In the above optimization problem, we also add a constraint for the rank of $M$ since $BB^T$ is a low-rank matrix. This optimization problem's form can then naturally lead to a simple spectral algorithm to estimate $M^\ast$, summarized in Algorithm~\ref{ag:metric}. The spectral method in can also be easily adjusted to estimate $M^{\ast\ast}$ when we change the last step, which is also included in Algorithm~\ref{ag:metric}.

\begin{algorithm}[h!]
	\caption{Spectral Metric Learning in Multi-view Data}
	\label{ag:metric}
	\begin{algorithmic}
		\REQUIRE Multi-view data $(X_{i,1},\ldots,X_{i,n_i})$ for $i=1,\ldots m$.
		\ENSURE A matrix $\hat{M}^\ast$ or $\hat{M}^{\ast\ast}$.
		\STATE Evaluate $\hat{R}$.
		\STATE Find the first $K$ eigenvalues and eigenvectors of $\hat{R}$, i.e., $(\hat{\lambda}_1,\ldots,\hat{\lambda}_K)$ and $(\hat{u}_1,\ldots,\hat{u}_K)$.
		\STATE Estimate $\hat{M}^\ast$ or $\hat{M}^{\ast\ast}$ by
		$$
		\hat{M}^\ast=\sum_{k=1}^K\hat{\lambda}_k\hat{u}_k\hat{u}_k^T\qquad {\rm or}\qquad  \hat{M}^{\ast\ast}=\sum_{k=1}^K\hat{u}_k\hat{u}_k^T.
		$$
	\end{algorithmic}
\end{algorithm}

The Algorithm~\ref{ag:metric} seems computationally expensive at first sight since the definition of $\hat{R}$ involves $U$-statistics, which usually requires quadratic time complexity. However, thanks to the special structure of empirical covariance matrix $\hat{R}$, it can be rewritten as the following equivalent form
\begin{align*}
	\hat{R}=\left({n\over n-1}+{1\over m-1}\right){1\over m}\sum_{i}\bar{X}_{i}\bar{X}_{i}^T-{1\over mn(n-1)}\sum_{i,j}X_{i,j}X_{i,j}^T-{m\over m-1}\bar{\bar{X}}\bar{\bar{X}}^T
\end{align*}
where $\bar{X}_i=n^{-1}\sum_j X_{i,j}$ and $\bar{\bar{X}}=m^{-1}\sum_i \bar{X}_i$. Thus, $\hat{R}$ can be computed in a linear time. 

We now investigate the theoretical properties of $\hat{M}^\ast$ or $\hat{M}^{\ast\ast}$ in Algorithm~\ref{ag:metric}. To the end, we make the following assumptions.
\begin{assumption}
	\label{ap:metriclearning}
	It holds that
	\begin{enumerate}[label=(\alph*)]
		\item $\epsilon_{i,j}$ and $Z_i$ follow sub-Gaussian distributions, that is, for any $a\in \RR^d$ and $b\in \RR^K$
		$$
		\EE\left(e^{\langle a,\epsilon_{i,j}\rangle}\right)\le e^{\sigma^2 \|a\|^2/2}\qquad {\rm and}\qquad \EE\left(e^{\langle b,Z_i-\EE(Z_i)\rangle}\right)\le e^{\|b\|^2/2};
		$$
		\item conditional number $\kappa$ is bounded;
		\item assume $K$ is known.
	\end{enumerate}
\end{assumption}
The assumption on sub-Gaussian distributions is the key assumption in Assumption~\ref{ap:metriclearning}, which is commonly used in the study of eigenspace estimation \citep{zhang2018heteroskedastic,chen2020spectral}. Since we observe multi-view data of each sample, we do not assume diagonal or sparse covariance matrix as literature \citep{yao2015sample, zhang2018heteroskedastic}. The following theorem characterizes the convergence rate of $\hat{M}^\ast$ and $\hat{M}^{\ast\ast}$.

\begin{theorem}
	\label{thm:metricupper}
	Suppose assumptions in Section~\ref{sc:model} and Assumption~\ref{ap:metriclearning} hold. If $m\ge c\log(d+m)(\kappa^2K\log(d+m)+d\kappa\sigma^2/n\lambda_K+d\sigma^4/n^2\lambda^2_K)$ for a large enough constant $c$, then, with probability at least $1-{6/(d+m)^5}$, we have
	$$
	\Delta(D^\ast,D_{\hat{M}^\ast})\lesssim {\sqrt{\log(d+m)}\over \sqrt{m}}\left[\sqrt{K}\lambda_1+\sigma{\sqrt{d\lambda_1}\over \sqrt{n}}+\sigma^2{\sqrt{d}\over  n}\right].
	$$
	
	In addition, if $m\ge c\log(d+m)(K+d\kappa\sigma^2/n\lambda_K+d\sigma^4/n^2\lambda^2_K)$ for a large enough constant $c$, we have similar results for $\hat{M}^{\ast\ast}$, that is 
	$$
	\Delta(D^{\ast\ast},D_{\hat{M}^{\ast\ast}})\lesssim {\sqrt{\log(d+m)}\over \sqrt{m}}\left[\sigma{\sqrt{d}\over \sqrt{n\lambda_K}}+\sigma^2{\sqrt{d}\over n\lambda_K}\right]
	$$
	with probability at least $1-{6/(d+m)^5}$.
\end{theorem}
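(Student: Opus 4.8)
The plan is to control $\Delta(D^\ast,D_{\hat M^\ast})=\|\hat M^\ast-BB^T\|$ and $\Delta(D^{\ast\ast},D_{\hat M^{\ast\ast}})=\|\hat M^{\ast\ast}-UU^T\|$ by a standard two-step argument: first bound the perturbation $\|\hat R-BB^T\|$ of the data matrix in spectral norm, then convert this into bounds on the estimated eigenvalues/eigenvectors via matrix perturbation theory (Weyl's inequality and a Davis--Kahan / $\sin\Theta$ argument), and finally reassemble $\hat M^\ast=\sum_k\hat\lambda_k\hat u_k\hat u_k^T$ and $\hat M^{\ast\ast}=\sum_k\hat u_k\hat u_k^T$. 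First I would use the linear-time form of $\hat R$ given in the excerpt and decompose $\hat R-BB^T$ into a sum of mean-zero terms: a quadratic term in $B(\bar Z_i-\EE Z)$, cross terms between $BZ_i$ and the averaged noise $\bar\epsilon_i=n^{-1}\sum_j\epsilon_{i,j}$, a term in $\bar\epsilon_i\bar\epsilon_i^T$ (whose expectation, $\Sigma/n$, must be subtracted off — note $\hat R$ is constructed precisely so that this cancels), and the per-view noise term $(mn(n-1))^{-1}\sum_{i,j}\epsilon_{i,j}\epsilon_{i,j}^T$. Each piece is a sum of independent (across $i$) sub-Gaussian/sub-exponential matrices, so I would apply a matrix Bernstein inequality (or, for the quadratic noise pieces, a Hanson--Wright-type bound for sums of sub-exponential rank-one matrices) to get, with probability $1-O((d+m)^{-5})$,
$$
\|\hat R-BB^T\|\lesssim \sqrt{\frac{\log(d+m)}{m}}\Bigl[\sqrt K\lambda_1+\sigma\sqrt{\frac{d\lambda_1}{n}}+\sigma^2\frac{\sqrt d}{n}\Bigr]+\frac{\log(d+m)}{m}(\cdots),
$$
where the sample-size condition $m\gtrsim \log(d+m)(\kappa^2 K\log(d+m)+d\kappa\sigma^2/n\lambda_K+d\sigma^4/n^2\lambda_K^2)$ is exactly what makes the second-order (variance-squared) Bernstein term negligible compared to the first, leaving the stated rate. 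Call this bound $\rho$.

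Next I would push $\rho$ through perturbation theory. For $\hat M^\ast$: Weyl gives $|\hat\lambda_k-\lambda_k|\le\rho$ for all $k$ (including the zero eigenvalues of $BB^T$, so the discarded bulk contributes nothing once $k>K$); the eigengap of $BB^T$ between its $K$-th and $(K{+}1)$-st eigenvalues is $\lambda_K$, so Davis--Kahan gives $\|\hat u_k\hat u_k^T-u_ku_k^T\|\lesssim \rho/\lambda_K$ — here one needs $\rho=o(\lambda_K)$, which the sample-size assumption guarantees. A triangle-inequality expansion $\hat M^\ast-BB^T=\sum_k[(\hat\lambda_k-\lambda_k)\hat u_k\hat u_k^T+\lambda_k(\hat u_k\hat u_k^T-u_ku_k^T)]$ then yields $\|\hat M^\ast-BB^T\|\lesssim K\rho+\lambda_1\cdot K\cdot(\rho/\lambda_K)\asymp \rho$ when $\kappa$ and $K$ are treated as bounded (absorbed into $\lesssim$). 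For $\hat M^{\ast\ast}=\sum_k\hat u_k\hat u_k^T$ the leading $\hat\lambda_k$ term is absent, so one only pays the eigenvector error: $\|\hat M^{\ast\ast}-UU^T\|\lesssim \rho/\lambda_K$, and substituting the expression for $\rho$ (and noting $\sqrt K\lambda_1/\lambda_K$ is $O(1)$ so that term is dominated, or rather is of lower order than the noise terms under the stated regime) gives
$$
\Delta(D^{\ast\ast},D_{\hat M^{\ast\ast}})\lesssim \sqrt{\frac{\log(d+m)}{m}}\Bigl[\sigma\sqrt{\frac{d}{n\lambda_K}}+\sigma^2\frac{\sqrt d}{n\lambda_K}\Bigr].
$$

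The main obstacle I anticipate is the concentration of the \emph{quadratic noise terms} $(mn(n-1))^{-1}\sum_{i,j}\epsilon_{i,j}\epsilon_{i,j}^T$ and $m^{-1}\sum_i\bar\epsilon_i\bar\epsilon_i^T-\Sigma/n$ in operator norm: these are sums of (centered) rank-one sub-exponential matrices in dimension $d$, so a naive matrix Bernstein bound produces the $\sqrt{d/\,\cdot}$ factors but also a non-negligible intrinsic-dimension term, and getting the variance proxy sharp enough to land exactly on $\sigma\sqrt{d\lambda_1/n}$ and $\sigma^2\sqrt d/n$ (rather than, say, $\sigma^2 d/n$) requires carefully exploiting that these matrices act like $\Sigma/n$ in expectation with fluctuations controlled by $\|\Sigma\|\le\sigma^2$ and $\mathrm{Tr}(\Sigma)\le d\sigma^2$ — essentially a covariance-estimation bound under the effective sample size $mn$. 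A secondary subtlety is verifying that the $\sin\Theta$ bound is applied with the correct gap: $BB^T$ has a flat spectrum only in the worst case, but the relevant gap for recovering the \emph{subspace} $\mathrm{col}(B)$ (which is all $\hat M^{\ast\ast}$ needs) is $\lambda_K-0=\lambda_K$, not an internal gap, so Davis--Kahan applies cleanly; for $\hat M^\ast$ one does not even need internal gaps because the reconstruction formula is gap-free up to the $\lambda_k$ weights. I would also need the assumption $(I_d-B(B^TB)^{-1}B^T)\epsilon_{i,j}\perp B^T\epsilon_{i,j}$ only cosmetically here — it is used elsewhere — since the spectral-norm bounds go through for general sub-Gaussian $\epsilon$.
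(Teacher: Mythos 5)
Your concentration step for $\|\hat R-BB^T\|$ is essentially the paper's: decompose into signal--signal, signal--noise, and noise--noise pieces and apply matrix Bernstein (the paper handles the $U$-statistic pieces by decoupling plus a truncated matrix Bernstein inequality), and the resulting rate and sample-size condition match. The gaps are in how you push this perturbation through to $\hat M^\ast$ and $\hat M^{\ast\ast}$.

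For $\hat M^{\ast\ast}$, applying Davis--Kahan with reference matrix $BB^T$ and perturbation $\rho=\|\hat R-BB^T\|$ gives $\rho/\lambda_K$, which contains the term $\sqrt{K}\lambda_1/\lambda_K\cdot\sqrt{\log(d+m)/m}=\sqrt{K}\kappa\sqrt{\log(d+m)/m}$. Your claim that this is ``dominated by the noise terms under the stated regime'' is false in general: the sample-size condition only forces $\sqrt{K\log(d+m)/m}=o(1)$, not that it is smaller than $\sigma\sqrt{d\log(d+m)/(mn\lambda_K)}$ (take $\sigma$ small), and no such term appears in the stated bound. The paper's key idea, which you are missing, is to take the reference matrix to be the \emph{realized} signal matrix $\hat R_E=B\hat R_Z B^T$ (the part of $\hat R$ built only from the $Z_i$'s): its top-$K$ column space is exactly $\mathrm{col}(U)$, its $K$-th eigenvalue is $\lambda_K(1-o(1))$ with high probability, and the perturbation $\hat R-\hat R_E$ consists only of the noise and cross terms of order $\sqrt{\log(d+m)/m}\,[\sigma\|B\|\sqrt{d/n}+\sigma^2\sqrt d/n]$. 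Davis--Kahan applied to this pair yields the stated bound with no $\sqrt K$ term. For $\hat M^\ast$, your decomposition $\sum_k[(\hat\lambda_k-\lambda_k)\hat u_k\hat u_k^T+\lambda_k(\hat u_k\hat u_k^T-u_ku_k^T)]$ is both unnecessary and problematic: the individual bounds $\|\hat u_k\hat u_k^T-u_ku_k^T\|\lesssim\rho/\lambda_K$ require \emph{internal} eigengaps $\min(\lambda_{k-1}-\lambda_k,\lambda_k-\lambda_{k+1})$, which may vanish (the spectrum of $BB^T$ can be flat), and the triangle inequality over $k$ costs a factor $K$ that cannot be absorbed since $K$ is not bounded. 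The clean argument is gap-free: $\hat M^\ast$ is the best rank-$K$ spectral-norm approximation of $\hat R$, so $\|\hat M^\ast-\hat R\|=\hat\lambda_{K+1}\le\lambda_{K+1}(BB^T)+\rho=\rho$ by Weyl, and the triangle inequality gives $\|\hat M^\ast-BB^T\|\le 2\rho$ directly.
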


Naturally, one may wonder whether the bound for spectral method is tight, and if there are some other methods that can help learn distance $D^\ast$ or $D^{\ast\ast}$ better. To answer these questions, we develop the information-theoretic lower bound that matches the upper bound in Theorem~\ref{thm:metricupper}. To develop the lower bound, we focus on the following Gaussian noise model
$X_{i,j}=BZ_i+\epsilon_{i,j}$, where $Z_i\sim N(0,I)$ and $\epsilon_{i,j}\sim N(0,\sigma^2I)$ and consider the collection of matrix $B$
$$
\Bcal(\nu)=\left\{B\in \RR^{d\times K}:\lambda_1(B)/\lambda_K(B)\le \sqrt{\kappa}, \lambda_K(B)\ge \sqrt{\nu}\right\},
$$
where $\lambda_1(B)$ and $\lambda_K(B)$ are the largest and smallest singular value of $B$.
\begin{theorem}
	\label{thm:metriclower}
	Suppose $\kappa> 1$ is bounded, $m>K$ and $4K\le d$.  Then
	$$
	\inf_{\hat{M}^\ast}\sup_{B\in \Bcal(\nu)}\EE\left(\Delta(D^\ast,D_{\hat{M}^\ast})\right)\gtrsim {1\over \sqrt{m}}\left[\sqrt{K}\nu+\sigma{\sqrt{d\nu}\over \sqrt{n}}+\sigma^2{\sqrt{d}\over n}\right].
	$$
	We also have similar results for $\hat{M}^{\ast\ast}$, that is 
	$$
	\inf_{\hat{M}^{\ast\ast}}\sup_{B\in \Bcal(\nu)}\EE\left(\Delta(D^{\ast\ast},D_{\hat{M}^{\ast\ast}})\right)\gtrsim {1\over \sqrt{m}}\left[\sigma{\sqrt{d}\over \sqrt{n\nu}}+\sigma^2{\sqrt{d}\over  n\nu}\right].
	$$
\end{theorem}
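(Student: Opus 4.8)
The plan is to prove the minimax lower bound by reduction to multiple hypothesis testing (Fano's method), after first collapsing the model to its sufficient statistic. In the Gaussian noise model the within-sample residuals $X_{i,j}-\bar X_i=\epsilon_{i,j}-\bar\epsilon_i$ have a distribution that does not involve $B$ and is independent of $\bar X_i$, so the averages $\bar X_i$, $i=1,\ldots,m$, which are i.i.d.\ $N(0,\Sigma_B)$ with $\Sigma_B:=BB^T+(\sigma^2/n)I_d$, form a sufficient statistic for $B$; equivalently, $KL\bigl(P_B^{(m,n)}\,\|\,P_{B'}^{(m,n)}\bigr)=m\,KL\bigl(N(0,\Sigma_B)\,\|\,N(0,\Sigma_{B'})\bigr)$. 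Hence it is enough to lower bound the risk of estimating $BB^T$ (respectively $UU^T=P_B$, the orthogonal projector onto the column space of $B$) in spectral norm from $m$ i.i.d.\ draws of $N(0,\Sigma_B)$ over $B\in\Bcal(\nu)$. Since each target bound is a sum of terms and the supremum risk dominates a constant multiple of the sum of the individual-term lower bounds, I would establish each term by a dedicated construction, take the maximum, and convert the resulting Fano error-probability bounds into expectation bounds by Markov's inequality.

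For the term $\sqrt K\,\nu/\sqrt m$ in the $D^\ast$ bound, perturb $B$ only inside the leading $K\times K$ block: set $B_A=\sqrt{\kappa'\nu}\,\bigl[(I_K+\tau A)^{1/2};\,0\bigr]$ for symmetric $A$ with $\|A\|\le1$, where $1<\kappa'<\kappa$ is fixed so that every $B_A$ lies in $\Bcal(\nu)$ once $\tau$ is small (this uses $\kappa>1$). Then $\|B_AB_A^T-B_{A'}B_{A'}^T\|\asymp\nu\tau\|A-A'\|$, the bottom $(d-K)\times(d-K)$ block of $\Sigma_{B_A}$ is $A$-independent, and the per-sample Kullback--Leibler divergence between two such Gaussians is $\lesssim\bigl(\nu\tau/\max(\nu,\sigma^2/n)\bigr)^2$. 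Feeding in a standard packing of symmetric matrices with $\log$-cardinality $\asymp K$ and operator-norm separation $\asymp1$, and setting $\tau$ so that $m$ times the per-sample KL is $\asymp K$, Fano's inequality yields $\EE\,\Delta(D^\ast,D_{\hat M^\ast})\gtrsim\max(\nu,\sigma^2/n)\sqrt{K/m}\ge\nu\sqrt{K/m}$; this step is essentially the classical operator-norm lower bound for $K\times K$ covariance estimation transplanted into the $d$-dimensional ambient space.

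For the two remaining terms I would use a single ``subspace-tilt'' family covering both regimes. Fix $B_0=\sqrt{\kappa'\nu}\,[I_K;\,0]$ and a unit vector $v\in\RR^K$, and for a direction $e$ in the $(d-K)$-dimensional orthogonal complement put $B_e=B_0+\sqrt{\kappa'\nu}\,t\,ev^T$, which tilts the column space by an angle $\asymp t$ and keeps $B_e\in\Bcal(\nu)$ once $t$ is small. Then $\|B_eB_e^T-B_{e'}B_{e'}^T\|\asymp\nu t$ and $\|P_{B_e}-P_{B_{e'}}\|\asymp t$ for well-separated $e,e'$, while $\Sigma_{B_e}-\Sigma_{B_{e'}}$ is, to leading order, a rank-$\le4$ matrix of size $\asymp\nu t$ supported on a subspace whose $\Sigma_B$-eigenvalues are $\asymp\nu+\sigma^2/n$ and $\asymp\sigma^2/n$, so the per-sample KL is $\lesssim(\nu t)^2/\bigl[(\nu+\sigma^2/n)(\sigma^2/n)\bigr]$. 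Taking a Varshamov--Gilbert / spherical packing of directions $e\in S^{d-K-1}$ with $\log$-cardinality $\asymp d$ and choosing $t$ so that $m$ times the per-sample KL is $\asymp d$, Fano gives $\EE\,\Delta(D^\ast,D_{\hat M^\ast})\gtrsim\nu t\asymp\sqrt{(\nu+\sigma^2/n)(\sigma^2/n)}\sqrt{d/m}\asymp\sigma\sqrt{d\nu/(mn)}+\sigma^2\sqrt d/(n\sqrt m)$, and with loss separation $\asymp t$ rather than $\asymp\nu t$ the same packing yields $\EE\,\Delta(D^{\ast\ast},D_{\hat M^{\ast\ast}})\gtrsim t\asymp\sigma\sqrt{d/(mn\nu)}+\sigma^2\sqrt d/(n\nu\sqrt m)$, which is exactly the claimed $D^{\ast\ast}$ bound. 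Combining the three constructions by the maximum completes the argument.

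The main obstacle I anticipate is the simultaneous bookkeeping for the tilt family: one must choose the direction packing so that (i) the loss separation $\|B_eB_e^T-B_{e'}B_{e'}^T\|$ (or $\|P_{B_e}-P_{B_{e'}}\|$) stays $\gtrsim$ the claimed rate uniformly over the packing, (ii) the pairwise KL is uniformly $\lesssim d/m$, and (iii) every $B_e$ remains in $\Bcal(\nu)$ --- in particular $\lambda_K(B_e)\ge\sqrt\nu$, which is why the base eigenvalues must sit strictly inside $[\sqrt\nu,\sqrt{\kappa\nu}]$ and $t$ must be small. The secondary technical point is keeping the Gaussian KL estimate tight through the $(\nu+\sigma^2/n)(\sigma^2/n)$ denominator in both the signal-dominated ($\nu\gtrsim\sigma^2/n$) and noise-dominated ($\nu\lesssim\sigma^2/n$) regimes, so that a single choice of $t$ produces both the $\sigma$-term and the $\sigma^2$-term; this, together with tracking the condition-number constant $\kappa'$, is where most of the care goes.
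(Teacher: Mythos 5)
Your proposal is correct, and its core architecture --- reduction to a Gaussian model, separate packing constructions for the $\sqrt{K}\nu$ term and for the two $d$-dependent terms, Varshamov--Gilbert packings of tilt directions, Gaussian KL calculations calibrated so that $m$ times the per-sample KL is of order the log-cardinality, and a final Markov step --- coincides with the paper's. Two steps differ genuinely. First, you pass to the sufficient statistic $\bar X_i\sim N(0,BB^T+(\sigma^2/n)I)$ at the outset; the paper instead stacks the $n$ views into an $nd$-dimensional Gaussian with covariance $\bB\bB^T+\sigma^2 I$ and computes the KL there via the Woodbury identity, arriving at the per-pair divergence $\asymp m\delta^2 n^2\nu^2/(n\nu\sigma^2+\sigma^4)$, which is exactly your $m(\nu t)^2/[(\nu+\sigma^2/n)(\sigma^2/n)]$; your reduction is cleaner but quantitatively equivalent. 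Second, for the $\sqrt{K}\nu/\sqrt m$ term the paper reduces further to the oracle problem of observing $E_i=BZ_i$ exactly (formally $n=\infty$) and applies Assouad's lemma over a hypercube of perturbations in the style of \cite{cai2010optimal}, whereas you stay in the exact model and run Fano over a packing of symmetric perturbations of the leading $K\times K$ block. Both routes work; the one place your version needs care is that the stated per-sample KL bound $\bigl(\nu\tau/\max(\nu,\sigma^2/n)\bigr)^2$ requires the differences $A-A'$ to have Frobenius norm $O(1)$, not merely operator norm $O(1)$, so the ``standard packing'' should be taken as low-rank perturbations (e.g.\ $A=vv^T$ over a sphere packing of $S^{K-1}$) rather than an arbitrary operator-norm packing of symmetric matrices. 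The subspace-tilt family for the $\sigma$- and $\sigma^2$-terms, and the $D^{\ast\ast}$ argument obtained by replacing the separation $\nu t$ with $t$ over the same packing, match the paper's $w_l$-construction essentially verbatim.
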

Through comparing Theorem~\ref{thm:metricupper} and \ref{thm:metriclower}, we can know the results in Theorem~\ref{thm:metricupper} are indeed sharp up to a logarithm factor. As shown in these two theorems, estimating $D_{\hat{M}^{\ast\ast}}$ is easier than $D_{\hat{M}^\ast}$ since there is no need for estimating the eigenvalues $\hat{\lambda}_1,\ldots, \hat{\lambda}_K$. Theorem~\ref{thm:metricupper} also suggests $D_{\hat{M}^\ast}$ and $D_{\hat{M}^{\ast\ast}}$ can improve the downstream analysis provided the sample size of unlabeled multi-view data is large enough. By combining Theorem~\ref{thm:downstream} and \ref{thm:metricupper}, we have the following corollary which precisely characterizes the sample size needed for downstream tasks improvement.

\begin{corollary} \label{cor:complexity} Suppose assumptions in Theorem~\ref{thm:downstream} and \ref{thm:metricupper} hold.
	If $m\gg (s/\kappa^{K-1})^{1/(2\alpha+K)}\log(d+m)(K+d\sigma^2/n\lambda_K+d\sigma^4/n^2\lambda^2_K)$ in $D_{\hat{M}^\ast}$ or $m\gg s^{1/(2\alpha+K)}\log(d+m)(d\sigma^2/n\lambda_K+d\sigma^4/n^2\lambda^2_K)$ in $D_{\hat{M}^{\ast\ast}}$, $k$-NN can achieve the same convergence rate in Theorem~\ref{thm:downstream}. If $m\gg \log(d+m)(K+d\sigma^2/n\lambda_K+d\sigma^4/n^2\lambda^2_K)$ in $D_{\hat{M}^\ast}$ or $m\gg \log(d+m)(d\sigma^2/n\lambda_K+d\sigma^4/n^2\lambda^2_K)$ in $D_{\hat{M}^{\ast\ast}}$, two-sample testing, $k$-means clustering and sample identification can also achieve the same convergence rate in Theorem~\ref{thm:downstream}.
\end{corollary}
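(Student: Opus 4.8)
The plan is to obtain the corollary by composing Theorem~\ref{thm:metricupper} with Theorem~\ref{thm:downstream}. Theorem~\ref{thm:metricupper} controls, as a function of the unlabeled sample size $m$, how small the spectral-norm errors $\Delta(D^\ast,D_{\hat M^\ast})$ and $\Delta(D^{\ast\ast},D_{\hat M^{\ast\ast}})$ are, while Theorem~\ref{thm:downstream} guarantees the target downstream rates as soon as these errors drop below explicit, task-specific thresholds. Since, by the hypotheses carried over from Theorem~\ref{thm:downstream}, the unlabeled data forming $\hat M^\ast,\hat M^{\ast\ast}$ is independent of the downstream data, we may condition on $\hat M^\ast$ (resp.\ $\hat M^{\ast\ast}$) and apply the downstream analysis treating it as a fixed matrix. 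So the whole argument reduces to substituting the bound of Theorem~\ref{thm:metricupper} into the hypothesis of Theorem~\ref{thm:downstream} and solving for $m$.

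Concretely I would proceed in three steps. (i) For each downstream task, read off the accuracy threshold appearing in Theorem~\ref{thm:downstream}: $\delta^\ast=o(\lambda_K)$, resp.\ $\delta^{\ast\ast}=o(1)$, for two-sample testing, $k$-means and sample identification; and $\delta^\ast\lesssim\lambda_K(s/\kappa^{K-1})^{-1/(2\alpha+K)}$, resp.\ $\delta^{\ast\ast}\lesssim s^{-1/(2\alpha+K)}$, for $k$-NN with the prescribed $k$. (ii) Substitute the high-probability bounds of Theorem~\ref{thm:metricupper},
$$
\Delta(D^\ast,D_{\hat M^\ast})\lesssim \sqrt{\tfrac{\log(d+m)}{m}}\Big[\sqrt K\lambda_1+\sigma\sqrt{\tfrac{d\lambda_1}{n}}+\sigma^2\tfrac{\sqrt d}{n}\Big]\quad\text{and}\quad \Delta(D^{\ast\ast},D_{\hat M^{\ast\ast}})\lesssim \sqrt{\tfrac{\log(d+m)}{m}}\Big[\sigma\sqrt{\tfrac{d}{n\lambda_K}}+\sigma^2\tfrac{\sqrt d}{n\lambda_K}\Big],
$$
use that $\kappa$ is bounded (so $\lambda_1\asymp\lambda_K$), and simplify: squaring and discarding the cross terms, which are pairwise geometric means of the leading terms and hence absorbed by their sum, gives $\Delta(D^\ast,D_{\hat M^\ast})^2\asymp \tfrac{\log(d+m)}{m}\big(\lambda_K^2K+\tfrac{d\sigma^2\lambda_K}{n}+\tfrac{d\sigma^4}{n^2}\big)$ and $\Delta(D^{\ast\ast},D_{\hat M^{\ast\ast}})^2\asymp \tfrac{\log(d+m)}{m}\big(\tfrac{d\sigma^2}{n\lambda_K}+\tfrac{d\sigma^4}{n^2\lambda_K^2}\big)$. (iii) Imposing that each of these be below the square of the corresponding threshold from (i) and rearranging for $m$ produces the sample-size conditions stated in the corollary; along the way one checks — again using that $\kappa$ is bounded — that each such condition is at least as strong as the precondition ``$m\ge c\log(d+m)(\cdots)$'' required for Theorem~\ref{thm:metricupper} to hold, so that both theorems may legitimately be invoked.

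Finally, on the event of probability at least $1-6/(d+m)^5$ on which Theorem~\ref{thm:metricupper} yields the stated accuracy, Theorem~\ref{thm:downstream} applied with $\delta^\ast$ (resp.\ $\delta^{\ast\ast}$) set to that accuracy gives the claimed downstream rate. For $k$-means, whose guarantee is already stated ``with probability at least $\ldots$'', this is just an intersection of high-probability events; for $k$-NN and two-sample testing, whose performance measures are bounded by a constant, the complementary event contributes only a lower-order $O((d+m)^{-5})$ term, absorbed into the $\lesssim$. The step requiring the most care is the bookkeeping in (ii)--(iii): $\Delta$ is a first-order (spectral-norm) quantity whereas the sample-size expressions involve the \emph{squared} noise ratios $d\sigma^2/n\lambda_K$ and $d\sigma^4/n^2\lambda_K^2$, so the Theorem~\ref{thm:metricupper} bound must be squared before matching; in addition the factor $\kappa^{K-1}$ (which, unlike $\kappa$, is not absorbed) and the bandwidth factor $s^{2\alpha/(2\alpha+K)}$ hidden in the $k$-NN threshold must be propagated correctly into the final power of $s$, and the $\log(d+m)$ factors must be lined up so that the corollary's hypothesis genuinely implies the precondition of Theorem~\ref{thm:metricupper}. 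Everything else is a routine substitution.
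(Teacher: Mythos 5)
Your plan is exactly the paper's: the paper gives no separate proof of this corollary and presents it as an immediate consequence of plugging the high-probability accuracy bound of Theorem~\ref{thm:metricupper} into the accuracy thresholds of Theorem~\ref{thm:downstream} and solving for $m$, with the independence assumption letting you condition on $\hat M^\ast$, the boundedness of $\kappa$ absorbing the $\lambda_1/\lambda_K$ ratios, and the cross terms in $[\sqrt K\lambda_1+\sigma\sqrt{d\lambda_1/n}+\sigma^2\sqrt d/n]^2$ absorbed as geometric means. For the two-sample testing, $k$-means, and sample-identification thresholds ($\delta^\ast=o(\lambda_K)$, $\delta^{\ast\ast}=o(1)$) your bookkeeping reproduces the stated conditions exactly.

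One caveat on the $k$-NN case, where you assert the rearrangement ``produces the sample-size conditions stated in the corollary'': it does not, quite. The threshold from Theorem~\ref{thm:downstream} is $\delta^\ast\lesssim\lambda_K(s/\kappa^{K-1})^{-1/(2\alpha+K)}$, and since $\Delta(D^\ast,D_{\hat M^\ast})\lesssim\sqrt{\log(d+m)/m}\,A$ with $A^2/\lambda_K^2\asymp K+d\sigma^2/n\lambda_K+d\sigma^4/n^2\lambda_K^2$, the condition you actually obtain after squaring is $m\gg(s/\kappa^{K-1})^{2/(2\alpha+K)}\log(d+m)\,(K+d\sigma^2/n\lambda_K+d\sigma^4/n^2\lambda_K^2)$, i.e.\ exponent $2/(2\alpha+K)$ on $s$, whereas the corollary (and Table~2) state the weaker exponent $1/(2\alpha+K)$. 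A first-order accuracy requirement of order $s^{-1/(2\alpha+K)}$ against an $m^{-1/2}$ estimation rate forces $m\gtrsim s^{2/(2\alpha+K)}$; there is no way to recover the stated $s^{1/(2\alpha+K)}$ from the two theorems as written. So your derivation is the correct one and the discrepancy is in the corollary's statement (or, equivalently, in the exponent of the $k$-NN threshold of Theorem~\ref{thm:downstream}); in a write-up you should either carry the exponent $2/(2\alpha+K)$ or flag the mismatch rather than claim exact agreement.
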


%%%%%%%%%%%%%%%%%%%%%%%%%%%%%%%%%%%%%%%%%%%%%%%
\section{Numerical Experiments}
\label{sc:numerical}
%%%%%%%%%%%%%%%%%%%%%%%%%%%%%%%%%%%%%%%%%%%%%%%

In this section, we conduct several numerical experiments to complement our theoretical developments. In particular, we compare the performance of the four downstream tasks in Section~\ref{sc:downstream} when Euclidean distance and resulting distance from metric learning are used.

%%%%%%%%%%%%%%%%%%%%%%%%%%%%%%%%%%%%%%%%%%%%%%%
\subsection{Simulated Data}
%%%%%%%%%%%%%%%%%%%%%%%%%%%%%%%%%%%%%%%%%%%%%%%

To simulate the data, we consider the Gaussian model $X_{i,j}=BZ_i+\epsilon_{i,j}$, where $\epsilon_{i,j}\sim N(0,\sigma^2I)$. Here, we choose $\lambda_k=\lambda(K-k+1)/K$ for some $\lambda$ and the directions of $B$, $\{b_1/\|b_1\|,\ldots,b_K/\|b_K\|\}$, are obtained  from the first $K$ left-singular vectors of randomly generated $d\times d$ standard Gaussian matrix. We generate $Z_i$ from a mixture model, $0.5N(\alpha,I-\alpha\alpha^T)+0.5N(-\alpha,I-\alpha\alpha^T)$, for some $\alpha\in \RR^K$ with $\|\alpha\|< 1$. We let $Y_i=1$ if $Z_i$ is drawn from $N(\alpha,I-\alpha\alpha^T)$ and $Y_i=-1$ otherwise. 

\paragraph{Sample identification} To study the effect of $\|Z_1-Z_2\|$ and $K$, we vary $\|Z_1-Z_2\|=1,2,3,4,5$ and $K=10, 50$. Specifically, we set the first $K/2$ elements in $Z_1-Z_2$ as zero and the last $K/2$ elements in $Z_1-Z_2$ as the same non-zero constant. We consider 7 distances: Euclidean distance, target distance $D^\ast$ and $D^{\ast\ast}$, estimated distance $D^\ast$ and $D^{\ast\ast}$ by spectral method with $m=1000, 5000$ samples. We choose $d=100$, $\lambda=4$, $\sigma^2=1$ and $n=10$ and repeat the simulation 500 times. We compare the performance of sample identification by power, which is estimated by the number of rejecting null hypothesis. The results are summarized in Table~\ref{tb:sampleiden}. Table~\ref{tb:sampleiden} suggests that self-supervised metric learning is indeed helpful for sample identification, and the helps shrinkage when $K$ becomes larger, which is consistent with the theoretical results. 

\begin{table}[h!]
	\centering
	\begin{tabular}{cccccccccccc}
		\hline\hline
		& \multicolumn{5}{c}{$K=10$} & & \multicolumn{5}{c}{$K=50$} \\ 
		\cline{2-6}\cline{8-12}
		$\|Z_1-Z_2\|$ & $1$ & $2$ &$ 3$ & $4$ & $5$ & & $1$ & $2$ &$ 3$ & $4$ & $5$ \\ 
		\hline
		$\|\cdot\|^2$ & 0.08 & 0.21 & 0.42 & 0.77 & 0.96 & & 0.07 & 0.13 & 0.34 & 0.61 & 0.91\\ 
		$\hat{D}^\ast\ (1000)$ & 0.04 & 0.24 & 0.64 & 0.95 & 1.00 & & 0.08 & 0.14 & 0.28 & 0.53 & 0.85\\ 
		$\hat{D}^\ast\ (5000)$ & 0.05 & 0.27 & 0.65 & 0.96 & 1.00 & & 0.08 & 0.13 & 0.30 & 0.56 & 0.87\\ 
		$D^\ast$ & 0.06 & 0.27 & 0.67 & 0.97 & 1.00 & & 0.08 & 0.13 & 0.30 & 0.56 & 0.87\\ 
		$\hat{D}^{\ast\ast}\ (1000)$ & 0.09 & 0.47 & 0.90 & 0.99 & 1.00 & & 0.09 & 0.22 & 0.50 & 0.83 & 0.99\\ 
		$\hat{D}^{\ast\ast}\ (5000)$ & 0.09 & 0.48 & 0.90 & 1.00 & 1.00 & & 0.08 & 0.20 & 0.49 & 0.82 & 0.99\\ 
		$D^{\ast\ast}$ & 0.09 & 0.47 & 0.90 & 1.00 & 1.00 & & 0.08 & 0.20 & 0.50 & 0.82 & 0.99\\ 
		\hline\hline
	\end{tabular}
	\caption{Comparisons of different distances on sample identification.}
	\label{tb:sampleiden}
\end{table}

\paragraph{Two-sample testing} We now move to the simulation experiment for two-sample testing. Similar to sample identification, we still compare the same 7 distances and choose $d=100$, $\sigma^2=1$, $K=10$, $n=10$ and $s=500$. Let $\alpha$ be a vector such that $\alpha_1=\ldots=\alpha_4=0$ and $\alpha_5=\ldots=\alpha_{10}=r/\sqrt{6}$ for some $r$. We study the effect of $\|\mu\|$ and $\lambda$ by considering the following two experiment settings: 1) $\lambda=1$ and $r=0,0.05,\ldots,0.5$ 2) $\lambda=0.5,1,\ldots, 5$ and $r=0.3/\lambda$ so that $\|\mu\|$ is fixed. To evaluate the power of different methods, we still repeat the simulation 500 times. The results are summarized in Figure~\ref{fg:twosample}. Through Figure~\ref{fg:twosample}, we can conclude that self-supervised metric learning is helpful when  $\lambda/\sigma^2$ is moderate, while all distances perform similarly when $\lambda/\sigma^2$ is large. These results help verify the theoretical conclusion in Theorem~\ref{thm:twosample}.

\begin{figure}[h!]
	\centering
	\includegraphics[width=0.8\textwidth]{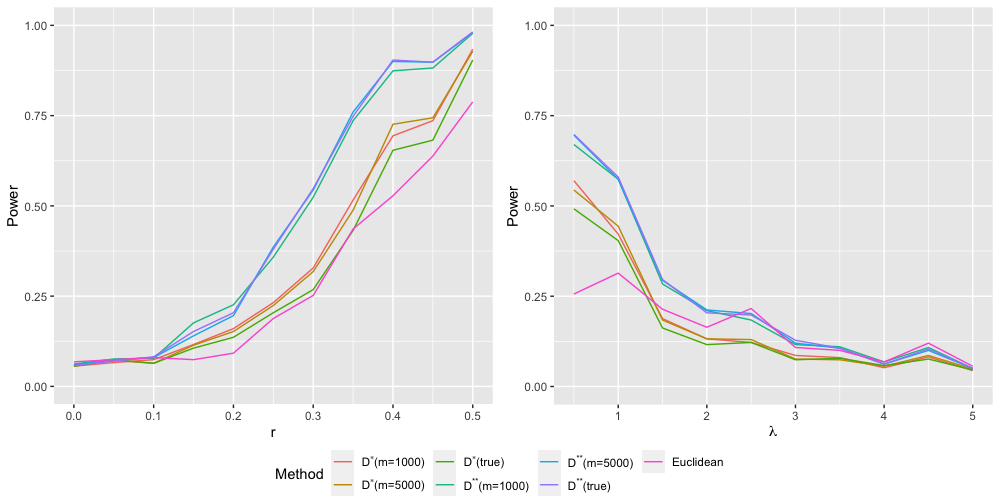}
	\caption{Comparisons of different distances on two-sample testing: left is setting 1 and right is setting 2.}\label{fg:twosample}
\end{figure}

\paragraph{$k$-means clustering} We then consider the simulation experiment for $k$-means clustering. We adopt the same setting in two-sample testing and set $\lambda=2$. We choose $\alpha$ as a vector such that $\alpha_1=\ldots=\alpha_4=r/\sqrt{4}$ for some $r$ and $\alpha_5=\ldots=\alpha_{10}=0$. To compare the required signal, we vary $r=0.4,0.6,0.8,1$ and use the mis-clustering rate as the measure of performance, which is defined in Section~S1.1. We consider two ways to choose the initial estimator of mean in $k$-means: 1) we randomly choose the two points as initial points 2) we use the true mean in each class as initial points. The results based on the 500 times simulation are summarized in Table~\ref{tb:kmeans}. In Table~\ref{tb:kmeans}, even when the starting point is perfect, the performances of $D^\ast$ is not as good as Euclidean distance and $D^{\ast\ast}$ due to the anisotropic transformation. Moreover, the distance $D^{\ast\ast}$ is slightly helpful when random initial points are used. This is again consistent with the theoretical results.  

\begin{table}[h!]
	\centering
	\begin{tabular}{ccccccccccc}
		\hline\hline
		&& \multicolumn{4}{c}{Random Start} & & \multicolumn{4}{c}{Perfect Start} \\ 
		\cline{3-6}\cline{8-11}
		& & $r=0.4$ & $r=0.6$ &$ r=0.8$ & $r=1$  & & $r=0.4$ & $r=0.6$ &$ r=0.8$ & $r=1$\\ 
		\hline
		$\|\cdot\|^2$ && 0.43 & 0.39 & 0.34 & 0.14  & & 0.38 & 0.31 & 0.21 & 0.05 \\ 
		$\hat{D}^\ast\ (1000)$ && 0.43 & 0.40 & 0.36 & 0.23  & & 0.41 & 0.37 & 0.29 & 0.12\\ 
		$\hat{D}^\ast\ (5000)$ && 0.43 & 0.39 & 0.34 & 0.23  & & 0.41 & 0.37 & 0.31 & 0.14 \\ 
		$D^\ast$ && 0.43 & 0.39 & 0.34 & 0.24  & & 0.41 & 0.37 & 0.31 & 0.15 \\ 
		$\hat{D}^{\ast\ast}\ (1000)$ && 0.43 & 0.39 & 0.34 & 0.12  & & 0.40 & 0.34 & 0.24 & 0.05\\ 
		$\hat{D}^{\ast\ast}\ (5000)$ && 0.43 & 0.39 & 0.34 & 0.12  & & 0.39 & 0.34 & 0.24 & 0.05 \\ 
		$D^{\ast\ast}$ && 0.43 & 0.39 & 0.34 & 0.13  & & 0.40 & 0.34 & 0.24 & 0.05 \\ 
		\hline\hline
	\end{tabular}
	\caption{Comparisons of different distances on $k$-means clustering.}
	\label{tb:kmeans}
\end{table}

\paragraph{$k$-nearest neighbor classification} In the last simulation experiment, we compare the performance of  $k$-nearest neighbor classification when it works with different distances. We use the same setting in $k$-means clustering and vary $s$ and $r$ in $\alpha$, where $\alpha_1=\ldots=\alpha_5=0$ and $\alpha_6=\ldots=\alpha_{10}=r/\sqrt{5}$. Specifically, we consider the following two experiment settings: $r=0.9$ and the sample size is different $s=500,1000,\ldots, 5000$; sample size is $s=2000$ and $r=0.1,\ldots,1$. The misclassification error defined in Section~\ref{sc:knn} is used as the measure for performance of different distances. The results are summarized in Figure~\ref{fg:knnsimu}, showing the self-supervised metric learning is helpful for $k$-NN, and the error decreases when the sample size or the difference between populations increases (large $r$ implies large $\beta$ in marginal assumption). 

\begin{figure}[h!]
	\centering
	\includegraphics[width=0.8\textwidth]{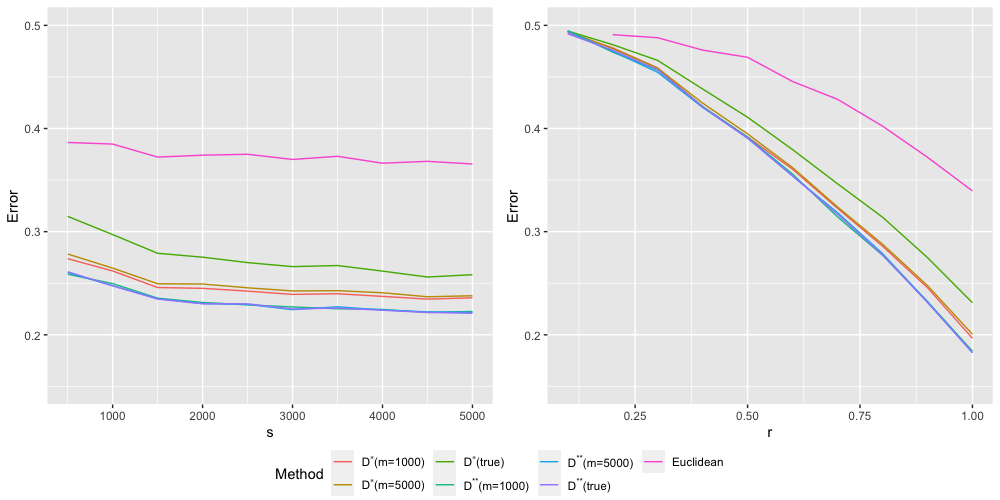}
	\caption{Comparisons of different distances on $k$-nearest neighbor classification.}\label{fg:knnsimu}
\end{figure}

All the numerical results in these four simulation experiments are consistent with theoretical conclusion in Section~\ref{sc:downstream}. Compared with target distance $D^\ast$, the isotropic target distance $D^{\ast\ast}$ is a better choice for all four downstream tasks we consider here. In addition, distance estimated from self-supervised metric learning performs almost as well as the true target distance in these simulation experiments. 

%%%%%%%%%%%%%%%%%%%%%%%%%%%%%%%%%%%%%%%%%%%%%%%
\subsection{Computer Vision Task}
%%%%%%%%%%%%%%%%%%%%%%%%%%%%%%%%%%%%%%%%%%%%%%%
We further compare Euclidean distance and resulting distance from self-supervised metric learning on some computer vision tasks. Specifically, we consider two datasets: MNIST \citep{lecun1998gradient} and Fashion-MNIST \citep{xiao2017fashion}. Both datasets contain $6\times 10^4$ training images and $10^4$ testing images, which are all $28\times 28$ gray-scale images from 10 classes. The difference between the two datasets is that MNIST is a collection of handwritten digits while Fashion-MNIST is a collection of clothing. MNIST and Fashion-MNIST do not contain multi-view data, but we can generate a multi-view dataset by shifting the images. Specifically, we shift the image in 4 different directions (left, right, upper and lower) to generate the multi-view dataset. A toy example of image shifting can be found in Figure~\ref{fg:mnist}.

\begin{figure}[h!]
	\centering
	\includegraphics[width=0.17\textwidth]{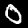}
	\includegraphics[width=0.17\textwidth]{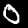}
	\includegraphics[width=0.17\textwidth]{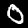}
	\includegraphics[width=0.17\textwidth]{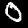}
	\includegraphics[width=0.17\textwidth]{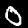}
	\caption{Multi-view data generated from MNIST dataset: from left to right are original, left shift, right shift, upper shift and lower shift.}\label{fg:mnist}
\end{figure}

In each dataset, we consider applying $k$-NN to classify the images. In this numerical experiment, a large unlabeled multi-view dataset ($m=10^4$ and $n=5$) and a small labeled dataset ($s=10^3, 2\times 10^3, 5\times 10^3$) are randomly drawn from training images and then used to train a $k$-NN classifier. We consider the following three ways to train $k$-NN classifier: 1) Euclidean distance is used to train $k$-NN directly on the small labeled dataset; 2) the anisotropic distance $D^\ast$ is estimated by the spectral method from the unlabeled multi-view dataset, and then the estimated distance is used to train $k$-NN; 3) the isotropic distance $D^{\ast\ast}$ is estimated from the unlabeled multi-view dataset and then used to train $k$-NN. To measure the performances, we adopt the misclassification errors, which can be estimated on $10^3$ images randomly drawn from testing images. The misclassification errors are reported in Table~\ref{tb:mnist}. It suggests that the self-supervised metric learning on the dataset from simple image shifting is helpful for the downstream classification task. 

\begin{table}[h!]
	\centering
	\begin{tabular}{ccccccccc}
		\hline\hline
		&& \multicolumn{3}{c}{MNIST} & & \multicolumn{3}{c}{Fashion-MNIST} \\ 
		\cline{3-5}\cline{7-9}
		& & $\|\cdot\|^2$ & $D^\ast$ & $D^{\ast\ast}$   & & $\|\cdot\|^2$ & $D^\ast$ & $D^{\ast\ast}$ \\ 
		\hline
		$s=1000$ && 0.115 & 0.268 & 0.094  & & 0.254 & 0.380 & 0.254\\ 
		$s=2000$ && 0.086 & 0.222 & 0.079  & &0.240 & 0.352 & 0.233 \\ 
		$s=5000$ && 0.062 & 0.169 & 0.059 & & 0.208 & 0.318 & 0.204\\ 
		\hline\hline
	\end{tabular}
	\caption{Comparisons of different distances on computer vision task.}
	\label{tb:mnist}
\end{table}

%%%%%%%%%%%%%%%%%%%%%%%%%%%%%%%%%%%%%%%%%%%%%%%
\section{Conclusion}
\label{sc:conclusion}
%%%%%%%%%%%%%%%%%%%%%%%%%%%%%%%%%%%%%%%%%%%%%%%

This paper conducts a systematic investigation of self-supervised metric learning in unlabeled multi-view data from a downstream task perspective. Building on a latent factor model for multi-view data, we provide theoretical justification for the success of this popular approach. Our analysis precisely characterizes the improvement by self-supervised metric learning on several downstream tasks, including sample identification, two-sample testing, $k$-means clustering, and $k$-nearest neighbor classification. Furthermore, we also establish the upper bound on distance estimation's accuracy and the number of samples sufficient for downstream task improvement. We assume that the number of factors $K$ is known in the analysis. In practice, some data-driven methods can help choose $K$, like Kaiser criterion and scree plot, when it is unknown. See more discussion in Chapter 10 of \cite{fan2020statistical}. The results in this paper rely on the assumption of the latent factor model and are designed for Mahalanobis distance. It could also be interesting to explore if the results can be extended to the deep neural network-based metric learning methods. 

\bibliographystyle{plainnat}
\bibliography{MetricLearning}

\newpage
\appendixpageoff
\appendixtitleoff
\renewcommand{\appendixtocname}{Supplementary Material}
\begin{appendices}
	\setcounter{section}{0}
	\setcounter{equation}{0}
	\setcounter{theorem}{0}
	\setcounter{assumption}{0}
	\setcounter{figure}{0}
	\setcounter{table}{0}
	\def\theequation{S\arabic{section}.\arabic{equation}}
	\def\thesection{S\arabic{section}}
	\def\thetheorem{S\arabic{theorem}}
	\def\theassumption{S\arabic{assumption}}
	\def\thefigure{S\arabic{figure}}
	\def\thetable{S\arabic{table}}

\begin{center}
	\textbf{\LARGE Supplementary Material}
\end{center}
In this supplementary material, we provide some extra results, the proof for the main results and all the technical lemmas. 
	
%%%%%%%%%%%%%%%%%%%%%%%%%%%%%%%%%%%%%%%%%%%%%%%
\section{More Specific Tasks for Self-Supervised Metric Learning}
\label{sc:extra}
%%%%%%%%%%%%%%%%%%%%%%%%%%%%%%%%%%%%%%%%%%%%%%%

%%%%%%%%%%%%%%%%%%%%%%%%%%%%%%%%%%%%%%%%%%%%%%%
\subsection{$k$-Means Clustering}
\label{sc:cluster}
%%%%%%%%%%%%%%%%%%%%%%%%%%%%%%%%%%%%%%%%%%%%%%%

Clustering is another fundamental problem in statistic inference and machine learning. In clustering, we observe one view for each sample $X_1,\ldots, X_s$ and do not observe each sample's latent variable $Z_i$ and label $Y_i$. Our goal in clustering is to recover the samples' labels by putting similar samples together, i.e., find an estimator $\hat{Y}_i$ that is as close to the true labels as possible. One of the most popular clustering algorithms is perhaps $k$-means clustering \citep{macqueen1967some,lloyd1982least,lu2016statistical}. In $k$-means clustering, we need to choose a distance $D$ and then minimize the objective function
$$
\min_{\mu_+,\mu_-}\sum_{i=1}^s \min \left(D(X_i,\mu_+),D(X_i,\mu_-)\right).
$$
We can then assign $\hat{Y}_i=1$ if $D(X_i,\mu_+)<D(X_i,\mu_-)$, and $\hat{Y}_i=-1$ otherwise. $k$-means clustering adopts the following iterative two steps to minimize the objective function: 
\begin{enumerate}
	\item Update centroid of each cluster
	$$
	\hat{\mu}_+^{(t)}=\argmin_{\mu_+}\sum_{\hat{Y}_i^{(t)}=1}D(X_i,\mu_+)\qquad {\rm and}\qquad \hat{\mu}_-^{(t)}=\argmin_{\mu_-}\sum_{\hat{Y}_i^{(t)}=-1}D(X_i,\mu_-).
	$$
	\item Assign each sample to its closest centroid
	$$
	\hat{Y}_i^{(t+1)}=\argmin_{+1,-1} \left(D(X_i,\hat{\mu}_+^{(t)}),D(X_i,\hat{\mu}_-^{(t)})\right).
	$$
\end{enumerate} 
When Euclidean distance is used in $k$-means clustering, the centroid update in step 1 is just the sample mean within each group as it is the unique minimal point in the optimization problem. However, the minimal point can be non-unique when the target distance in self-supervised metric learning is used, because $D^\ast$ and $D^{\ast\ast}$ only measure distance along $K$ directions. In such case, we still use the sample mean as the centroid update since it is one of minimal point. 

To compare the performance of different distance $D$ on $k$-means, we consider Gaussian mixture model, which is one of the most widely used and well-studied models for clustering. Specifically, we make the following assumptions. 
\begin{assumption}
	\label{ap:kmeans}
	It holds that
	\begin{enumerate}[label=(\alph*)]
		\item we assume $\PP(Y=1)=\PP(Y=-1)=1/2$;
		\item we assume $\EE(X|Y=1)-\EE(X|Y=-1)=\mu=B\theta$ for some vector $\mu\in \RR^d$, where $\theta\in\RR^K$;
		\item we assume $\epsilon_{i,j}\sim N(0,\Sigma)$ in factor model \eqref{eq:factormodel}, where $\Sigma$ is the covariance matrix;
		\item we assume the latent variable $Z_i|Y_i=1\sim N(\theta/2,I_K-\theta\theta^T/4)$ and $Z_i|Y_i=-1\sim N(-\theta/2,I_K-\theta\theta^T/4)$ in factor model \eqref{eq:factormodel};
		\item we assume the mis-clustering rate of initial assignment is smaller than $h$ for some $h<1/2$, i.e.,
		$$
		{1\over s}\min\left(\sum_{i=1}^s\bI\left(\hat{Y}^{(1)}_i\ne Y_i\right), \sum_{i=1}^s\bI\left(\hat{Y}^{(1)}_i\ne -Y_i\right)\right)<h;
		$$
		\item we assume $s>d$.
	\end{enumerate}
\end{assumption}
The conditions in Assumption~\ref{ap:kmeans} implies $\Var(X_i|Y_i=1)=\Var(X_i|Y_i=-1)=\Sigma+BB^T-\mu\mu^T/4$. For simplicity, we write $\Sigma_\pm=\Sigma+BB^T-\mu\mu^T/4$. To quantify the performance of $k$-means, we adopt the mis-clustering rate of $\hat{Y}_1,\ldots, \hat{Y}_s$ as our measure
$$
r(D)={1\over s}\min\left(\sum_{i=1}^s\bI\left(\hat{Y}_i\ne Y_i\right), \sum_{i=1}^s\bI\left(\hat{Y}_i\ne -Y_i\right)\right).
$$
To quantify the mis-clustering rate of $k$-means, we define the following quantities for any vector $\mu$ and symmetrical matrix $\Sigma_0$
$$
\Psi(\Sigma_0)=\|\Sigma_0\|+{{\rm Tr}(\Sigma_0)\over s}\qquad {\rm and }\qquad \Gamma(l,\mu,\Sigma_{0})=\PP\left(l\left\|\mu\right\|^2 \le 2\langle \xi_1,\mu+ \xi_2-\xi_3\rangle\right),
$$
where $\|\Sigma_0\|$ is spectral norm of $\Sigma_0$, ${\rm Tr}(\Sigma_0)$ is trace of $\Sigma_0$, $0\le l\le 1$, and $\xi_1$, $\xi_2$ and $\xi_3$ are independent random variable such that $\xi_1\sim N(0,\Sigma_0)$, $\xi_2\sim N(0,\Sigma_0/s_+)$, and $\xi_3\sim N(0,\Sigma_0/s_-)$, where $s_+=|\{i:Y_i=1\}|$ and $s_-=|\{i:Y_i=-1\}|$. The following theorem characterizes the performance of $k$-means clustering algorithm when Euclidean distance, target distances $D^\ast$ and $D^{\ast\ast}$ are used.  

%\max\left(\mu^T\Sigma_{0}\mu,{\|\Sigma_{0}\|_F^2\over s},{\|\mu\|^2\|\Sigma_{0}\|\over \sqrt{s}}\right)

%We usually expect the mis-clustering rate to be different when different distances are adopted in $k$-means clustering. However, the following theorem shows that $k$-means behaves similarly when Euclidean distance and target distance $D^\ast$ are used. 

\begin{theorem}
	\label{thm:kmeans}
	Suppose assumptions in Section~\ref{sc:model} and Assumption~\ref{ap:kmeans} hold. Let $v$ be a sequence of number going to infinity. If $t>\log s$ and $\|\mu\|^2>v\Psi(BB^T+\Sigma_\pm)$, then 
	$$
	r(\|\cdot\|^2) \le \Gamma(1+o(1),\mu,\Sigma_{\pm})
	$$
	with probability at least $1-s^{-5}-\exp(-\sqrt{v}\|\mu\|)$. Similarly, if $t>\log s$ and $\|B^T\mu\|^2>v\Psi(\Lambda^2+B^T\Sigma_\pm B)$ or $\|\mu\|^2>v\Psi(\Lambda+U^T\Sigma_\pm U)$, then 
	$$
	r(D^\ast) \le \Gamma(1+o(1),B^T\mu,B^T\Sigma_{\pm}B) \qquad {\rm and}\qquad r(D^{\ast\ast}) \le \Gamma(1+o(1),\mu,U^T\Sigma_{\pm}U)
	$$
	with probability at least $1-s^{-5}-\exp(-\sqrt{v}\|\mu\|)$. 
	
	On the other hand, if $\|\mu\|^2>v\Psi(BB^T+\Sigma_\pm)$, we have 
	$$
	\sup_{(Y_1,\ldots,Y_s)\in \{-1,1\}^s }r(\|\cdot\|^2) \ge \Gamma(1+o(1),\mu,\Sigma_{\pm}).
	$$
	In addition, when $\|B^T\mu\|^2>v\Psi(\Lambda^2+B^T\Sigma_\pm B)$ or $\|\mu\|^2>v\Psi(\Lambda+U^T\Sigma_\pm U)$, we have
	$$
	\sup_{(Y_1,\ldots,Y_s)\in \{-1,1\}^s }r(D^\ast) \ge \Gamma(1+o(1),B^T\mu,B^T\Sigma_{\pm}B)
	$$
	and
	$$
	\sup_{(Y_1,\ldots,Y_s)\in \{-1,1\}^s }r(D^{\ast\ast}) \ge \Gamma(1+o(1),\mu,U^T\Sigma_{\pm}U).
	$$
\end{theorem}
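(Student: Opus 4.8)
The plan is to reduce all three upper bounds --- and likewise the three lower bounds --- to a single statement about Lloyd's algorithm ($k$-means with sample-mean centroid updates) on a two-component Gaussian mixture, using the identities $D^\ast(X_1,X_2)=\|B^TX_1-B^TX_2\|^2$ and $D^{\ast\ast}(X_1,X_2)=\|U^TX_1-U^TX_2\|^2$. These show that the centroid update $\hat\mu_\pm=$ (sample mean in $\RR^d$) followed by nearest-centroid reassignment under $D^\ast$ (resp. $D^{\ast\ast}$) is exactly Lloyd's algorithm run on the transformed samples $W_i=B^TX_i$ (resp. $W_i=U^TX_i$); the sample mean is a valid (if non-unique) minimizer of the $D^\ast$-objective precisely because $D^\ast$ only sees $B^TX$. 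Under Assumption~\ref{ap:kmeans}, $X_i\mid Y_i\sim N(\pm\mu/2,\Sigma_\pm)$ is Gaussian, so $W_i\mid Y_i$ is a two-component Gaussian mixture in $\RR^K$ with mean gap $B^T\mu$ (resp. $U^T\mu$, which has the same Euclidean norm as $\mu$) and common within-class covariance $B^T\Sigma_\pm B$ (resp. $U^T\Sigma_\pm U$); the initial mis-clustering rate is unchanged since only the label vector is fed to the first centroid step, and $K\le d<s$, so the transformed problem still has more samples than dimensions. It therefore suffices to prove the Euclidean statement for a generic mixture $N(\pm\mu_0/2,\Sigma_0)$ in $\RR^p$ with $p<s$, equal weights, initial mis-clustering rate below $h<1/2$, and separation $\|\mu_0\|^2>v\,\Psi(R+\Sigma_0)$, where $R$ is the rank-$K$ signal-covariance term ($BB^T$, $\Lambda^2$, or $\Lambda$ according to the case): after $t>\log s$ iterations the mis-clustering rate is at most $\Gamma(1+o(1),\mu_0,\Sigma_0)$ with probability $1-s^{-5}-\exp(-\sqrt v\,\|\mu_0\|)$, with a matching worst-case-label lower bound. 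The three claims are the $p=d,\ R=BB^T$ instance and the two $p=K$ instances of this master statement.

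For the upper bound I would follow the iterative-contraction analysis of Lloyd's algorithm of \cite{lu2016statistical}, adapted to anisotropic covariance. First restrict to a \emph{good event} on which both true cluster sizes are $\Theta(s)$, the empirical within-class noise covariance has operator norm $O(\|\Sigma_0\|)$, and $\sup_S\| |S|^{-1}\sum_{i\in S}(X_i-\EE[X_i\mid Y_i])\|=O(\sqrt{\Psi(\Sigma_0)})$ uniformly over index sets $S$ with $|S|\gtrsim s$ --- Gaussian/sub-Gaussian concentration facts (the last via a union bound over subsets, where $p<s$ and the rank-$K$ term in $\Psi$ enter) whose failure probability is absorbed into $s^{-5}$. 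On this event a one-step inequality shows that if the mis-clustering fraction at iteration $t$ is $A_t$, each current centroid lies within $O(A_t)\|\mu_0\|+O(\sqrt{\Psi(\Sigma_0)})$ of its true mean, whence $A_{t+1}\le\frac14 A_t+\varepsilon_0$; iterating from $A_1<h$ gives $A_t\le\varepsilon_0+h\,4^{-t}$, so $t>\log s$ kills the transient. The sharpening bootstraps the one-step bound a bounded number of further times until the centroids are within $o(\|\mu_0\|)$ of the truth; then a point with $Y_i=1$ is misclassified iff $D(X_i,\hat\mu_-^{(t)})<D(X_i,\hat\mu_+^{(t)})$, which on expanding the squared distances and writing $\hat\mu_\pm^{(t)}=$ (true mean) $+$ (fluctuation of covariance $\approx\Sigma_0/s_\pm$) becomes exactly the event $2\langle\xi_1,\mu_0+\xi_2-\xi_3\rangle\ge(1+o(1))\|\mu_0\|^2$ defining $\Gamma$, with $\xi_1=X_i-\EE[X_i\mid Y_i]$. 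Summing the misclassification indicators and invoking a Chernoff-type upper tail (whose failure, controlled via the separation condition, supplies the $\exp(-\sqrt v\,\|\mu_0\|)$ term) yields $A_{t+1}\le\Gamma(1+o(1),\mu_0,\Sigma_0)$.

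For the lower bound it is enough to exhibit one label vector making $r$ large, so I would take $(Y_i)$ with $s_+=s_-=s/2$. Either the iterates fail to settle near the true means, in which case $r$ is bounded below by a positive constant and hence by $\Gamma(1+o(1),\cdot)$ for free, or they settle, in which case the terminal assignment coincides with the near-oracle nearest-centroid rule and the mis-clustering count is a sum of indicators that are independent given the two centroid estimates, each of probability $\Gamma(1-o(1),\mu_0,\Sigma_0)$; a Gaussian anti-concentration / reverse-Bernstein lower tail then forces this count to be at least $(1-o(1))\,s\,\Gamma$ outside a probability matching the stated bound, the gap between $\Gamma(1-o(1),\cdot)$ and $\Gamma(1+o(1),\cdot)$ being absorbed into the theorem's $o(1)$.

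The principal obstacle is the sharp-constant part of the upper bound: propagating the centroid error through the reassignment step tightly enough to replace a crude constant $c$ in $\Gamma(c,\mu_0,\Sigma_0)$ by $1+o(1)$. This requires the bootstrap --- a coarse contraction gives $A_t\le c_1<h$, hence centroid error $\le c_2\|\mu_0\|$, hence $A_{t+1}\le c_3<c_1$, and so on until the error is genuinely $o(\|\mu_0\|)$ --- carried out while keeping the anisotropic covariances $B^T\Sigma_\pm B$ and $U^T\Sigma_\pm U$ explicit rather than collapsing to a scalar noise level, which is what makes the argument more delicate than the isotropic prototype in the literature. The lower-bound anti-concentration is a secondary, milder difficulty, since the indicators depend on the data only through the two estimated centroids.
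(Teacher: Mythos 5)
Your upper-bound plan is essentially the paper's own proof: the Mahalanobis cases are reduced to Euclidean $k$-means on the transformed data $B^TX_i$ or $U^TX_i$, and the Euclidean case follows the Lloyd's-iteration analysis of \cite{lu2016statistical} with good events (cluster sizes $\Theta(s)$, empirical noise covariance control, uniform control of subset-averaged noise) absorbed into $s^{-5}$, a one-step contraction $h^{(t+1)}\le \frac{1}{4}h^{(t)}+(\text{indicator sum})$, and a final identification of the misclassification event with $2\langle\xi_1,\mu+\xi_2-\xi_3\rangle\ge(1+o(1))\|\mu\|^2$; the paper gets the $\exp(-\sqrt{v}\|\mu\|)$ term by bounding the expectation of the indicator sum and applying Markov, which is the same role your Chernoff-type tail plays. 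The self-improving threshold $1-Cv^{-1/4}-h^{(t)}$ inside the paper's indicator is exactly your bootstrap. No issues here.

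Your lower-bound argument, however, has a genuine gap, and it is precisely the point the paper's different construction is designed to avoid. You fix a balanced label vector, condition on the terminal centroids, and assert that the misclassification indicators are then independent, each with probability $\Gamma(1-o(1),\mu_0,\Sigma_0)$, so that an anti-concentration bound forces the count up to $(1-o(1))s\Gamma$. But the centroids $\hat{\mu}_\pm^{(t)}$ are functions of all the $X_i$, so conditioning on them does not make the indicators $\bI\{D(X_i,\hat{\mu}_-^{(t)})<D(X_i,\hat{\mu}_+^{(t)})\}$ independent, nor does it preserve the marginal law of $X_i$; the per-point misclassification event involves $\langle\xi_1,\mu_0+\xi_2-\xi_3\rangle$ where $\xi_1=X_i-\EE[X_i\mid Y_i]$ also appears inside $\xi_2$ or $\xi_3$ through the centroid. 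The paper circumvents this by a reduction to a two-point problem: it fixes the first $2l$ labels, averages over the remaining labels $Y_{2l+1},\ldots,Y_s$ in a parameter class $\Ycal$, pairs each configuration with the one flipping a single coordinate $Y_i$, and notes that for that coordinate the relevant centroids are built from the \emph{other} $s-1$ points, hence genuinely independent of $X_i$; this yields $\PP_\bY(\hat{Y}_i\ne Y_i)+\PP_{\pi_i(\bY)}(\hat{Y}_i\ne Y_i)\ge 2\Gamma(1+o(1),\mu,\Sigma_\pm)$ and hence a lower bound on $\sup_\bY\EE(r)$. Note also that the theorem's lower bound is an in-expectation statement over a supremum of label vectors, not the high-probability statement you are attempting; you are trying to prove something stronger by a route that, as written, does not go through. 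To repair your version you would either need to decouple each $X_i$ from the centroids (e.g., by a leave-one-out argument, which is in effect what the paper's pairing does) or switch to the paper's averaging-over-labels reduction.
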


Theorem~\ref{thm:kmeans} suggests that the performance of $k$-means is fully characterized by the quantity $\Gamma(1,\mu,\Sigma_{0})$. Depending on $\mu$ and $\Sigma_{0}$, the behavior of $\Gamma(1,\mu,\Sigma_{0})$ can be very different. More concretely, if $\|\Sigma_{0}\|_F^2/s=o(\mu^T\Sigma_0\mu)$, then
$$
\Gamma(1,\mu,\Sigma_0)\le \exp\left(-{\|\mu\|^4\over 8\mu^T\Sigma_0\mu}\right).
$$
On the other hand, when $\mu^T\Sigma_0\mu=o(\|\Sigma_{0}\|_F^2/s)$, Lemma~1 suggests
$$
\Gamma(1,\mu,\Sigma_0)\le \begin{cases}
	\exp\left(-\dfrac{s\|\mu\|^4}{8\|\Sigma_0\|_F^2}\right),& \qquad \|\mu\|^2\|\Sigma_{0}\|=o\left(\|\Sigma_{0}\|_F^2/ \sqrt{s}\right)\\[10pt]
	\exp\left(-\dfrac{\sqrt{s}\|\mu\|^2}{2\|\Sigma_0\|}\right),& \qquad  \|\Sigma_{0}\|_F^2/ \sqrt{s}=o\left(\|\mu\|^2\|\Sigma_{0}\|\right).
\end{cases}
$$
If we consider a special case $\Sigma_0=\sigma^2I$, $\Gamma(1,\mu,\sigma^2I)=\exp\left(-{\|\mu\|^2/8\sigma^2}\right)$  recovers the results in \cite{lu2016statistical}. The intuition behind the lower bound in Theorem~\ref{thm:kmeans} is that even we have perfect initialization, that is, $\hat{Y}^{(1)}_i=Y_i$ $i=1,\ldots, s$, $\Gamma(1,\mu,\Sigma_\pm)$ is the mis-clustering rate we can expect after one iteration in $k$-means.

To compare the results in Theorem~\ref{thm:kmeans}, we assume $\Sigma=\sigma^2 I$. The mis-clustering rate and required signal are summarized in Table~\ref{tb:kmeanssummary} when different distances are used. The main benefit of metric learning in $k$-means clustering is that the required condition for convergence becomes weaker because variation between different views is reduced. Specifically, $k$-means with Euclidean distance requires $\lambda_1+\sigma^2+(\sum_{k=1}^K\lambda_k+d\sigma^2)/s$, which relies on the dimension $d$, while $k$-means with target distance $D^{\ast\ast}$ only requires $\lambda_1+\sigma^2+(\sum_{k=1}^K\lambda_k+K\sigma^2)/s$. 

\begin{table}[h!]
	\centering
	\renewcommand{\arraystretch}{1.6}
	\begin{tabular}{ccc}
		\hline\hline
		Measure & Mis-Clustering Rate  & Required Signal  \\ 
		\hline
		Euclidean Distance & $\begin{aligned}\exp\left(-{\|\mu\|^2\over 8(\lambda_1+\sigma^2)}\right) \end{aligned}$ & $\begin{aligned}\lambda_1+\sigma^2+{\sum_{k=1}^K\lambda_k+d\sigma^2\over s}\end{aligned}$  \\ 
		Distance $D^\ast$ & $\begin{aligned}\exp\left(-{\|\mu\|^2\over 8\kappa^2(\lambda_1+\sigma^2)}\right) \end{aligned}$& $\begin{aligned}\kappa\left(\lambda_1+\sigma^2+{\sum_{k=1}^K\lambda_k+K\sigma^2\over s}\right)\end{aligned}$  \\ 
		Distance $D^{\ast\ast}$ & $\begin{aligned}\exp\left(-{\|\mu\|^2\over 8(\lambda_1+\sigma^2)}\right) \end{aligned}$& $\begin{aligned}\lambda_1+\sigma^2+{\sum_{k=1}^K\lambda_k+K\sigma^2\over s}\end{aligned}$  \\ 
		\hline\hline
	\end{tabular}
	\caption{Performance comparisons on $k$-means clustering.}
	\label{tb:kmeanssummary}
\end{table}

%%%%%%%%%%%%%%%%%%%%%%%%%%%%%%%%%%%%%%%%%%%%%%%
\subsection{Sample Identification}
\label{sc:identification}
%%%%%%%%%%%%%%%%%%%%%%%%%%%%%%%%%%%%%%%%%%%%%%%

In sample identification, we observe two views $X_{1}$ and $X_{2}$, and aim to determine if these two views come from the same sample or not. Different from two-sample testing and classification problem, sample identification does not involve any label information. One popular example of sample identification is face identification, where we would like to know if the faces from two different images are the same person or not \citep{guillaumin2009you,nguyen2010cosine,liao2015person}. Although this problem has different formulation, we study it from a hypothesis testing perspective. More concretely, let $Z_1$ and $Z_2$ be the latent variables of $X_1$ and $X_2$, respectively, and we assume $(X_i,Z_i)$, $i=1,2$ follows the same distribution as multi-view data (think we only observe one view from the sample). $Z_1$ and $Z_2$ are random in multi-view data model, but our investigation are conditioned on the value of $Z_1$ and $Z_2$. In other words, we think that $Z_1$ and $Z_2$ are determined values and $Z_1=Z_2$ if $X_1$ and $X_2$ are two different views of the same sample. Then, the hypothesis of interest in sample identification is 
$$
H_0: Z_1=Z_2\qquad {\rm and}\qquad H_1:Z_1\ne Z_2.
$$
To test such a hypothesis, one of the most popular methods is the distance-based method. Specifically, we choose a distance between different views, $D(X_1,X_2)$, as the statistics and the null hypothesis is rejected when the distance is larger than a given threshold $T_D$. The threshold $T_D$ can be estimated by the unlabeled multi-view data alone. For example, we can choose $T_D$ as upper $\alpha$-quantile of $D(X_{i,1},X_{i,2})$, $i=1,\ldots,m$. As the sample size of unlabeled multi-view data is usually large, we can estimate the threshold accurately or assume it is known. Then, the test used in distance-based sample identification can be written as
$$
\phi_D=\bI(D(X_1,X_2)>T_D).
$$
Different choices of distance can lead to different performances. To quantify the performance, we adopt the detection radius of tests to compare different distances, defined as follows
$$
r(D,\epsilon)=\inf\left\{r:\underbrace{\PP(\phi_D=1|H_0)}_{type\  I\  error}+\underbrace{\PP(\phi_D=0|H_1(r))}_{type\  II\  error}\le \epsilon\right\}
$$
where $H_1(r)=\{\|Z_1-Z_2\|\ge r\}$. We also make the following assumptions.
\begin{assumption}
	\label{ap:identification}
	It holds that
	\begin{enumerate}[label=(\alph*)]
		\item we assume $\epsilon_{i,j}\sim N(0,\Sigma)$ in factor model \eqref{eq:factormodel}, where $\Sigma$ is the covariance matrix;
		\item we choose $T_D$ as upper $\alpha=\epsilon/2$-quantile of $D(X_{i,1},X_{i,2})$.
	\end{enumerate}
\end{assumption}
Condition (a) in Assumption~\ref{ap:identification} is a relatively strong condition in practice, but it can help provide some insights into how metric learning benefits sample identification problem. With Assumption~\ref{ap:identification}, we now compare the detection radius of tests defined by standard Euclidean distance $\|\cdot\|^2$ and target distance in self-supervised metric learning, $D^\ast$ and $D^{\ast\ast}$.

\begin{theorem}
	\label{thm:identification}
	Suppose assumptions in Section~\ref{sc:model} and Assumption~\ref{ap:identification} hold. Given $\epsilon>0$, we have
	$$
	r(\|\cdot\|^2,\epsilon)\lesssim {\|\Sigma \|_F^{1/2}\over \sqrt{\lambda_K}},\quad r(D^\ast,\epsilon)\lesssim {\|B^T\Sigma B\|_F^{1/2}\over \lambda_K}\quad {\rm and }\quad  r(D^{\ast\ast},\epsilon)\lesssim {\|U^T\Sigma U\|_F^{1/2}\over \sqrt{\lambda_K}}.
	$$
	Consider the following local alternative hypothesis
	$$
	\tilde{H}_1(r)=\left\{Z_1-Z_2=re_K\right\},
	$$
	where $e_K=(0,\ldots,0,1)$. If $r=o(\|\Sigma \|_F^{1/2}/\sqrt{\lambda_K})$, then 
	$$
	\PP(\phi_{\|\cdot\|^2}=0|\tilde{H}_1(r))\to 1-\alpha.
	$$
	Similarly, if $r=o(\|B^T\Sigma B\|_F^{1/2}/\lambda_K)$ or $r=o(\|U^T\Sigma U\|_F^{1/2}/ \sqrt{\lambda_K})$, then
	$$
	\PP(\phi_{D^\ast}=0|\tilde{H}_1(r))\to 1-\alpha\qquad {\rm and}\qquad \PP(\phi_{D^{\ast\ast}}=0|\tilde{H}_1(r))\to 1-\alpha.
	$$
\end{theorem}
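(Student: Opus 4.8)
\medskip
\noindent\textbf{Proof strategy.}
The plan is to reduce everything to the difference $V:=X_1-X_2$, on which all three statistics depend: $\|\cdot\|^2$ evaluates to $\|V\|^2$, $D^\ast$ to $\|B^TV\|^2$, and $D^{\ast\ast}$ to $\|U^TV\|^2$. Conditionally on $Z_1-Z_2=\delta$ the factor model gives $V=B\delta+(\epsilon_1-\epsilon_2)\sim N(B\delta,2\Sigma)$, so under $H_0$ one has $V\sim N(0,2\Sigma)$, which is exactly the law of $X_{i,1}-X_{i,2}$; hence the choice of $T_D$ makes the type~I error of each test equal to $\alpha=\epsilon/2$, and it remains to control the type~II error. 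Each statistic is a quadratic form $Q_M(V)=V^TMV$ with $M\in\{I_d,BB^T,UU^T\}$, and using $B^TB=\Lambda$ and $U^TB=\Lambda^{1/2}$, $Q_M(V)$ has the law of $\|\mu_M+W_M\|^2$ with $W_M\sim N(0,2\Sigma_M)$, $\Sigma_M\in\{\Sigma,B^T\Sigma B,U^T\Sigma U\}$, and noncentrality $\mu_M\in\{B\delta,\Lambda\delta,\Lambda^{1/2}\delta\}$, so $\|\mu_M\|^2$ equals $\delta^T\Lambda\delta$, $\delta^T\Lambda^2\delta$, and $\delta^T\Lambda\delta$ respectively.

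For the upper bounds I would use only the first two moments. Under $H_0$, $Q_M$ has mean $2\,\mathrm{Tr}(\Sigma_M)$ and variance $8\|\Sigma_M\|_F^2$, so Chebyshev at level $\alpha$ gives $T_D\le 2\,\mathrm{Tr}(\Sigma_M)+C_\epsilon\|\Sigma_M\|_F$. Under $H_1(r)$, for any $\delta$ with $\|\delta\|\ge r$ the standard Gaussian-quadratic-form formulas give $\EE[Q_M]=\|\mu_M\|^2+2\,\mathrm{Tr}(\Sigma_M)$ and $\Var[Q_M]=8\mu_M^T\Sigma_M\mu_M+8\|\Sigma_M\|_F^2\le 8\|\Sigma_M\|\,\|\mu_M\|^2+8\|\Sigma_M\|_F^2$; a second Chebyshev then yields $\PP(Q_M\le T_D\mid H_1(r))\le\epsilon/2$ as soon as $\|\mu_M\|^2\gtrsim_\epsilon\|\Sigma_M\|_F$, the spectral-norm term being absorbed via $\|\Sigma_M\|\le\|\Sigma_M\|_F$. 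Since $\|\mu_M\|^2\ge\lambda_Kr^2$ for $M=I_d,UU^T$ and $\|\mu_M\|^2\ge\lambda_K^2r^2$ for $M=BB^T$, choosing $r$ to be a suitable $\epsilon$-dependent multiple of $\|\Sigma_M\|_F^{1/2}/\sqrt{\lambda_K}$ (respectively of $\|B^T\Sigma B\|_F^{1/2}/\lambda_K$) makes the total error at most $\epsilon$ uniformly over $\|\delta\|\ge r$, yielding the three claimed bounds on $r(\cdot,\epsilon)$.

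For the matching lower bounds I would take $\delta=re_K$, so $B\delta=rb_K$. Writing $V=rb_K+W$ with $W\sim N(0,2\Sigma)$ and $Q_M^{(0)}:=W^TMW$ the value of the statistic under $H_0$ built on the same $W$, expanding the square gives $Q_M(V)=Q_M^{(0)}+r^2b_K^TMb_K+2r\,b_K^TMW$. From $b_K^TB=\lambda_Ke_K^T$ one computes that $b_K^TMb_K$ equals $\lambda_K,\lambda_K^2,\lambda_K$ and that $\Var(2r\,b_K^TMW)$ is at most $8r^2\lambda_K\|\Sigma\|$, $8r^2\lambda_K^2\|B^T\Sigma B\|$, $8r^2\lambda_K\|U^T\Sigma U\|$ for $M=I_d,BB^T,UU^T$ respectively. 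Comparing these with $\mathrm{sd}(Q_M^{(0)})\asymp\|\Sigma_M\|_F$ and using $\|\Sigma_M\|\le\|\Sigma_M\|_F$, one checks in each case that the stated smallness hypothesis on $r$ forces both the deterministic shift $r^2b_K^TMb_K$ and the standard deviation of the Gaussian term to be $o\!\left(\mathrm{sd}(Q_M^{(0)})\right)$; e.g.\ for $D^\ast$ the shift is $r^2\lambda_K^2=o(\|B^T\Sigma B\|_F)$ and the Gaussian term has deviation $\lesssim r\lambda_K\|B^T\Sigma B\|_F^{1/2}=o(\|B^T\Sigma B\|_F)$, both exactly under $r=o(\|B^T\Sigma B\|_F^{1/2}/\lambda_K)$. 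Hence $Q_M(V)=Q_M^{(0)}+o_p\!\left(\mathrm{sd}(Q_M^{(0)})\right)$; since $T_D$ is the $(1-\alpha)$-quantile of $Q_M^{(0)}$, the conclusion $\PP(\phi_D=0\mid\tilde H_1(r))=\PP(Q_M(V)\le T_D)\to 1-\alpha$ follows once $Q_M^{(0)}$ is known to place vanishing mass in any window of width $o(\mathrm{sd}(Q_M^{(0)}))$ around $T_D$, i.e.\ from an anti-concentration bound for quadratic forms in Gaussian vectors (such as Carbery--Wright).

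The moment bookkeeping is routine; the one genuinely delicate step is this last one in the lower bound, converting negligibility of the perturbation into $\PP(Q_M(V)\le T_D)\to1-\alpha$. A naive total-variation comparison of $H_0$ and $\tilde H_1(r)$ does not suffice, since $\|N(rb_K,2\Sigma)-N(0,2\Sigma)\|_{\mathrm{TV}}$ need not be small when $\Sigma$ is ill-conditioned along $b_K$; one must use that the perturbation direction is one-dimensional together with a dimension-free anti-concentration inequality for Gaussian quadratic forms (alternatively, under an added Lindeberg-type condition such as $\mathrm{Tr}(\Sigma_M^4)=o(\|\Sigma_M\|_F^4)$ one could invoke the central limit theorem for $Q_M^{(0)}$ and conclude by Slutsky). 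The remaining care is only in verifying, case by case, that each $o(\cdot)$ threshold on $r$ in the statement is precisely what kills both the deterministic and the stochastic part of the perturbation.
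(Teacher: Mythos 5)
Your proposal is correct and follows essentially the same route as the paper's proof: type I error is fixed by the quantile choice of $T_D$, and both the upper and lower bounds rest on decomposing each quadratic form into the deterministic signal $\|\mu_M\|^2$, a Gaussian cross term, and the centered null statistic, the only cosmetic difference being that you control tails via Chebyshev on the (noncentral) quadratic-form moments where the paper invokes Laurent--Massart chi-square bounds. The one substantive point is the final step of the lower bound, where the paper simply asserts $\PP\left(E_3\ge T(1+o(1))\right)\to\alpha$; your explicit appeal to anti-concentration for Gaussian quadratic forms (Carbery--Wright) to pass from $Q_M(V)=Q_M^{(0)}+o_p(\mathrm{sd}(Q_M^{(0)}))$ to convergence of the rejection probability identifies and fills a step the paper's own argument leaves unjustified.
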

The two parts in Theorem~\ref{thm:identification} suggests that the detection radius are tight. The results of Theorem~\ref{thm:identification} heavily rely on condition (a) in Assumption~\ref{ap:identification} and could be very different if $\epsilon_{i,j}$ follows different distributions. Despite this, Theorem~\ref{thm:identification} still helps understand and characterize the performance of these different distances on sample identification problem. If $\Sigma=\sigma^2 I$, the results in Theorem~\ref{thm:identification} are reduced to
$$
r(\|\cdot\|^2,\epsilon)\lesssim {d^{1/4}\sigma\over \sqrt{\lambda_K}},\quad r(D^\ast,\epsilon)\lesssim \sqrt{\kappa}{K^{1/4}\sigma\over \sqrt{\lambda_K}}\quad {\rm and}\quad r(D^{\ast\ast},\epsilon)\lesssim {K^{1/4}\sigma\over \sqrt{\lambda_K}}.
$$
In particular, when the target distance $D^\ast$ and $D^{\ast\ast}$ are used, the detection radius can be improved by $O(d/K\kappa^2)^{1/4}$ and $O(d/K)^{1/4}$ folds. With this target distance in self-supervised metric learning, we can detect much more similar sample pairs.  

%%%%%%%%%%%%%%%%%%%%%%%%%%%%%%%%%%%%%%%%%%%%%%%
\section{Proofs}
%%%%%%%%%%%%%%%%%%%%%%%%%%%%%%%%%%%%%%%%%%%%%%%

In this section, $C$ and $c$ refer to some constant, which can be different at different places. 

%%%%%%%%%%%%%%%%%%%%%%%%%%%%%%%%%%%%%%%%%%%%%%%
\subsection{Proof of Theorem~\ref{thm:idealdist}}
%%%%%%%%%%%%%%%%%%%%%%%%%%%%%%%%%%%%%%%%%%%%%%%

We first show $U^TX$ is actually a sufficient statistics of $Z$. Since the conditional distribution of $X$ given $Z$ follow a factor model, we have 
$$
X=BZ+\epsilon,
$$
where $\epsilon$ is independent from $Z$. This observation suggests $X-B(B^TB)^{-1}B^TX$ is only a function of $\epsilon$, which implies that $(X-B(B^TB)^{-1}B^TX)\perp Z$. Note that $X-B(B^TB)^{-1}B^TX=(I-B(B^TB)^{-1}B^T)\epsilon$. Since $(I-B(B^TB)^{-1}B^T)\epsilon$ is independent from $B^T\epsilon$, we can conclude 
$$
X\perp Z|B^TX.
$$
This means $U^TX$ is actually a sufficient statistics of $Z$. 

By the Fisher's factorization theorem, the conditional probability density function $f(x|z)$ can be decomposed as 
$$
f(x|z)=h_1(x)h_2(U^Tx|z).
$$
Moreover, we have $f(x|z,y)=f(x|z)=h_1(x)h_2(U^Tx|z)$ as $X\perp Y|Z$. If we write $\pi(z|y)$ as the probability density function of $z$ given $y$, we can conclude
\begin{align*}
	\pi(X=x|Y=y)&=\int f(x|z,y) \pi(z|y)dz\\
	&=\int h_1(x)h_2(U^Tx|z) \pi(z|y)dz\\
	&=h_1(x) g'(U^Tx|y),
\end{align*}
where $g'(U^Tx|y)=\int h_2(U^Tx|z) \pi(z|y)dz$. Thus, the likelihood ratio $\pi(X|Y=1)/\pi(X|Y=-1)=g(U^TX)$ for some function $g$. 

By the definition,
$$
\EE(X|Y)=\EE(\EE(X|Z,Y)|Y)=\EE(\EE(X|Z)|Y)=\EE(BZ|Y)=B\EE(Z|Y).
$$
Thus, if we choose $\theta=\EE(Z|Y=1)-\EE(Z|Y=-1)$, we can know $\EE(X|Y=1)-\EE(X|Y=-1)=B\theta$. 

Next, we prove the second part of theorem. For any given $B$, we assume $\PP(Y=1)=\PP(Y=-1)=1/2$, $\epsilon$ follows a normal distribution $N(0,\sigma^2I)$ and $Z|Y$ follows a normal distribution $N(Y\theta/2,\Sigma)$, where $\Sigma=I-\theta\theta^T/4$. Clearly, assumptions \eqref{eq:factormodel} and \eqref{eq:condindep} are satisfied and $\EE(X|Y=1)-\EE(X|Y=-1)=B\theta$. 

%%%%%%%%%%%%%%%%%%%%%%%%%%%%%%%%%%%%%%%%%%%%%%%
\subsection{Proof of Theorem~\ref{thm:knn}}
%%%%%%%%%%%%%%%%%%%%%%%%%%%%%%%%%%%%%%%%%%%%%%%

\subsubsection{Upper bound}
In this proof, we write $\hat{\eta}(x)=k^{-1}\sum_{i=1}^k \bI(Y_{(i)}=1)$ and $\hat{\eta}^\ast(x)=k^{-1}\sum_{i=1}^k\eta(X_{(i)})$.
\paragraph{Step 1: Euclidean distance}
By Hoeffding's inequality, we can know that for any fixed point $x$, 
$$
\PP\left(\left|\hat{\eta}(x)-\hat{\eta}^\ast(x)\right|>t\right)\le 2e^{-2kt^2},\qquad \forall\ t>0.
$$
Moreover, if we write $\Bcal_{\|\cdot\|^2}(x, r_{2k/s})$ as ball centered at $x$ such that $\mu(\Bcal_{\|\cdot\|^2}(x, r_{2k/s}))=2k/s$, then an application of Chernoff bound suggests 
$$
\PP(X_{(k+1)}>r_{2k/s})=\PP\left(\sum_i\bI(X_i\in \Bcal_{\|\cdot\|^2}(x, r_{2k/s}))\le k\right)\le e^{-k/4}.
$$
When $X_{(k+1)}\le r_{2k/s}$, we can know that 
$$
|\hat{\eta}^\ast(x)-\eta(x)|\le \sup_{\|\delta\|<r_{2k/s}}\left|\eta(x+\delta)-\eta(x)\right|\le Lr_{2k/s}^\alpha
$$
since $\eta(x)$ is $\alpha$-H\"older continuous as $\tilde{\eta}(x)$ is $\alpha$-H\"older continuous. 
The choice of $r_{2k/s}$ suggests that $r_{2k/s}\le C(k/s)^{1/d}$ as the density $\mu(x)$ is bounded away from 0 on the support, i.e. $\mu(x)\ge \mu_{\rm min}$. Therefore, we can know that 
$$
\PP\left( |\hat{\eta}^\ast(x)-\eta(x)|>C\left(k\over s\right)^{\alpha/d}\right)\le e^{-k/4}. 
$$
Putting $\left|\hat{\eta}(x)-\hat{\eta}^\ast(x)\right|$ and $|\hat{\eta}^\ast(x)-\eta(x)|$ together yields 
\begin{equation}
	\label{eq:knncon}
	\PP\left(\left|\hat{\eta}(x)-\eta(x)\right|>t+C\left(k\over s\right)^{\alpha/d}\right)\le 2e^{-2kt^2}+e^{-k/4},\qquad \forall\ t>0.
\end{equation}
We write $\Delta=1/\sqrt{k}+C\left(k/s\right)^{\alpha/d}=Cs^{-\alpha/(2\alpha+d)}$, $A_0=\{x:0<|\eta(x)-1/2|<\Delta\}$ and $A_j=\{x:2^{j-1}\Delta<|\eta(x)-1/2|<2^j\Delta\}$ for $j=1,\ldots, J:=\lceil -\log(\Delta)/\log 2\rceil$. By definition, 
\begin{align*}
	r(\|\cdot\|^2)&=\EE\left(|2\eta(X)-1|\bI(\hat{f}_{\|\cdot\|^2}(X)\ne f^\ast(X))\right)\\
	&=\sum_{j=0}^J\EE\left(|2\eta(X)-1|\bI(\hat{f}_{\|\cdot\|^2}(X)\ne f^\ast(X))\bI(X\in A_j)\right)
\end{align*}
Since $\bI(\hat{f}_{\|\cdot\|^2}(X)\ne f^\ast(X))\le \bI(|\eta(X)-1/2|<\left|\hat{\eta}(X)-\eta(X)\right|)$, we have 
\begin{align*}
	&\EE\left(|2\eta(X)-1|\bI(\hat{f}_{\|\cdot\|^2}(X)\ne f^\ast(X))\bI(X\in A_j)\right)\\
	\le & 2^{j+1}\Delta\EE\left(\bI(|\eta(X)-1/2|<\left|\hat{\eta}(X)-\eta(X)\right|)\bI(X\in A_j)\right)\\
	\le & 2^{j+1}\Delta\EE\left(\bI(2^{j-1}\Delta<\left|\hat{\eta}(X)-\eta(X)\right|)\bI(X\in A_j)\right)\\
	\le & 2^{j+1}\Delta\EE\left(\PP(2^{j-1}\Delta<\left|\hat{\eta}(X)-\eta(X)\right|)\bI(X\in A_j)\right)\\
	\le & 2^{j+1}\Delta\EE\left((2e^{-2k(2^{j-1}\Delta)^2}+e^{-k/4})\bI(X\in A_j)\right)\\
	\le & 2^{j+1}\Delta 2e^{-2k(2^{j-1}\Delta)^2} (2^{j+1}\Delta)^\beta+e^{-k/4}2^{j+1}\Delta\PP(X\in A_j)
\end{align*}
Here we apply the results in \eqref{eq:knncon} and $\beta$-marginal assumption.
Putting these terms together, we can know that 
$$
r(\|\cdot\|^2)\le C\Delta^{\beta+1}\le Cs^{-\alpha(\beta+1)/(2\alpha+d)}.
$$
\paragraph{Step 2: Mahalanobis distance} If we adopt the distance $D^\ast$, the main difference is the shape of the ball $\Bcal_{D^\ast}(x, r_{2k/s})=\{y:D^\ast(x,y)\le r_{2k/s}\}$. 
We can choose $r_{2k/s}\le C\lambda_K(\kappa^{K-1}k/s)^{1/K}$ to make sure $\mu(\Bcal_{D^\ast}(x, r_{2k/s}))=2k/s$, which leads to
$$
|\hat{\eta}^\ast(x)-\eta(x)|\le \sup_{y\in \Bcal_{D^\ast}(x, r_{2k/s})}\left|\eta(y)-\eta(x)\right|\le L(\kappa^{K-1}k/s)^{\alpha/K}.
$$
Then we can adopt the same analysis in Euclidean distance case to show
$$
r(D^\ast) \le  C(s/\kappa^{K-1})^{-\alpha(1+\beta)/(2\alpha+K)}.
$$
With the same analysis, we can know
$$
r(D^{\ast\ast}) \le  Cs^{-\alpha(1+\beta)/(2\alpha+K)}.
$$

\subsubsection{Lower bound}
We now work on the lower bound. The main idea of lower bound proof is to construct difficult instances and then apply Lemma~\ref{lm:knnlowerbayes} and \ref{lm:knnlower}. 
\paragraph{Step 1: Euclidean distance} To construct difficult instances for multi-view data, we need to choose the distribution for $Z$, the conditional distribution of $Y$ given $Z$, the latent factor $B$, and the distribution for $\epsilon$. For simplicity, we choose $\epsilon$ as a uniform distribution on $\{0\}^K\times [0,1]^{d-K}$ and $B$ is the first $K$ basis in $\RR^d$, i.e., $B=[I_K,0^{K\times (d-K)}]^T$, so $X=(Z_1,\ldots,Z_K,u_1,\ldots,u_{d-K})$ where $u_i$ is uniform distribution on $[0,1]$ and $\tilde{\eta}$ is the same with the conditional distribution of $Y$ given $Z$. To choose the distribution for $Z$ and $\tilde{\eta}$, we split $[0,1]^K$ into $q^K$ non-overlap cubes of size $q^{-1}\times\ldots\times q^{-1}$, where $q$ is an integer that will be specified later. We name these small cubes $Q_1,\ldots,Q_{q^K}$. Let $h$ be a nonincreasing infinitely differentiable function defined on $[0,\infty]$
$$
h(w)=\begin{cases}
	\int_{1/4}^{1/2} \exp(-1/(1/2-t)(t-1/4))dt,& 0\le w\le {1/4}\\
	\int_{w}^{1/2} \exp(-1/(1/2-t)(t-1/4))dt,& 1/4\le w\le {1/2}\\
	0,& w\ge {1/2}\\
\end{cases}
$$
and $\phi(z)=C_\phi h(\|z\|)$ with a sufficient small constant $C_\phi$ such that $|\phi(z)-\phi(z')|\le L\|z-z'\|^\alpha$ for any $z,z'\in \RR^K$. It is clear that $\phi(z)=C_M$ when $\|z\|\le 1/4$ where $C_M=C_\phi \int_{1/4}^{1/2} \exp(-1/(1/2-t)(t-1/4))dt$. We pick the first $M$ cubes $Q_1,\ldots,Q_{M}$ where $M< q^K$ and insert a scaled and shifted version of $\phi(z)$ to $\tilde{\eta}(z)$ at each cube. Specifically, we define 
$$
\tilde{\eta}_\sigma(z)={1\over 2}+\sum_{i=1}^M q^{-\alpha}\sigma_i\phi\big(q(z-cen(Q_i))\big),
$$
where $cen(\cdot)$ is center point of the cube and $\sigma=(\sigma_1,\ldots,\sigma_m)$ is a sequence taking value from $\{-1,1\}^M$. Clearly, $\tilde{\eta}_\sigma(z)$ is an $\alpha$-H\"older continuous function. Here, we write the corresponding probability density function of $Z$ as $\lambda$. We define $\lambda$ as 
$$
\lambda(z)={v\over C_q}\sum_{i=1}^M\bI\big(\|z-cen(Q_i)\|\le 1/4q \big)+{1-Mv\over C_q(q^K-M)}\sum_{i=M+1}^{q^K}\bI\big(\|z-cen(Q_i)\|\le 1/4q \big),
$$
where $C_q$ is Lebesgue measure of the ball with radius $1/4q$. By these construction, we know that  
$$
\mu(0<|\eta(X)-1/2|\le t)={Mv}\bI(t\ge C_M/q^\alpha).
$$
So $\beta$-marginal assumption is satisfied as long as $Mv\le C_0(C_M/q^\alpha)^\beta$. 

Now we are going to choose $q$, $\sigma$, $M$ and $v$ in above construction according to different $k$ and $s$ and then apply Lemma~\ref{lm:knnlowerbayes} and \ref{lm:knnlower}. We consider two cases. In the first case, given $k$ and $s$, we can choose $q=\lceil (C_M\sqrt{k})^{1/\alpha} \rceil$ so that $C_M/q^\alpha\le 1/\sqrt{k}$. We then choose $\sigma_1=\ldots=\sigma_m=1$, $v=\mu_{\rm min}C_q$ and $M=\min(\lceil C_0(C_M/q^\alpha)^\beta/(\mu_{\rm min}C_q)\rceil,q^K)$. By these choices, we can know that 
$$
\Bcal_{\|\cdot\|^2}(cen(Q_i),1/4q)\subset \Ecal^+\left(\delta,{1\over \sqrt{k}},{k\over s}\right),\qquad i=1,\ldots, M.
$$
This leads to
$$
\mu\left(\Ecal^+\left(\delta,{1\over \sqrt{k}},{k\over s}\right)\cup \Ecal^-\left(\delta,{1\over \sqrt{k}},{k\over s}\right)\right)\ge Mv,
$$
where $\Ecal^+$ and $\Ecal^-$ are defined in Lemma~\ref{lm:knnlower}.
Therefore, an application of Lemma~\ref{lm:knnlower} suggests
$$
r(\|\cdot\|^2)\ge {c_0Mv\over \sqrt{k}}\ge {C\over k^{(\beta+1)/2}}.
$$
In the second case, we consider $\sigma$ is drawn from a uniform distribution on $\{-1,1\}^M$. We denote the corresponding distribution of $\eta(x)$ by $f_\eta$ when $\sigma$ is chosen in above way. Given $k$ and $s$, we choose $v=q^{-K}$ and $q=\lceil 3/2(\Gamma_ds/k)^{1/d}\rceil$, where $\Gamma_d$ is volume of unit ball in $\RR^d$. This choice of $q$ can ensure that there are at least $3^K$ small cubes $Q_i$ in $\Bcal_{D}(x,r)$ when $r_{k/s}\le r\le r_{2{k/s}}$ and $x\in \Bcal_{\|\cdot\|^2}(cen(Q_i),1/4q)$. Since each $\sigma_i$ is independent from each other in $f_\eta$, we know that there exists a constant $c_K$ relying $K$ such that
$$
\PP_{\eta\sim f_\eta}\left(\eta(x)\ge{1\over 2}+{C_M\over q^\alpha}; \eta(\Bcal_{D}(x,r))\le {1\over 2},\ \forall\ r_\nu\le r\le r_{2\nu} \right)>c_K
$$
for any $x\in \Bcal_{\|\cdot\|^2}(cen(Q_i),1/4q)$. This means 
$$
\mu\left(\Ecal_{f_\eta}^+\left(\delta,{k\over s}\right)\cup \Ecal_{f_\eta}^-\left(\delta,{k\over s}\right)\right)\ge Mv.
$$
If we choose $M=\min(\lceil C_0(C_M/q^\alpha)^\beta/v\rceil,q^K)$, an application of Lemma~\ref{lm:knnlowerbayes} suggests 
$$
\EE_{\eta\sim f_\eta}\left(r(\|\cdot\|^2)\right)\ge {2c_0C_0C_M^{\beta+1}\over q^{\alpha(\beta+1)}}\ge C\left(k\over s\right)^{\alpha(\beta+1)/d}.
$$
Putting the results of two cases together yields
$$
\sup_{\eta\sim f_\eta}r(\|\cdot\|^2)\ge c\left({1\over k^{(\beta+1)/2}}+\left(k\over s\right)^{\alpha(\beta+1)/d}\right)
$$
and 
$$
\min_{k} \sup_{\eta\sim f_\eta}r(\|\cdot\|^2)\ge cs^{-\alpha(1+\beta)/(2\alpha+d)}.
$$
\paragraph{Step 2: Mahalanobis distance} We can conduct the similar analysis as in the case of Euclidean distance if we adopt $D^\ast$ and $D^{\ast\ast}$. We first work on $D^\ast$. The main difference from the case of Euclidean distance is that we choose $B=[\Lambda,0^{K\times (d-K)}]^T$, where $\Lambda={\rm diag}(\lambda_1,\ldots, \lambda_K)$, and $\tilde{\eta}(U^Tx)=g(\Lambda z)$ where $g$ is the conditional distribution of $Y$ given $Z$. With these new choices, the marginal distribution of $(X,Y)$ is still the same as the case of Euclidean distance, but $D^\ast$ put different weights to different directions. In particular, we choose $\lambda_1=\kappa$ and $\lambda_{2}=\ldots=\lambda_{K}=1$. We still consider the two cases as we did in the case of Euclidean distance. For the first case, we can choose the same $q$, $\sigma$, $M$ and $v$ and obtain 
$$
r(D^\ast)\ge {C\over k^{(\beta+1)/2}}.
$$
In the second case, we still consider $\sigma$ is drawn from a uniform distribution on $\{-1,1\}^M$. The main difference is that we choose $q=\lceil 3/2(\Gamma_Ks/\kappa^{K-1}k)^{1/K}\rceil$ due to the shape of neighbor is different. Then we apply the similar analysis to obtain 
$$
\EE_{\eta\sim f_\eta}\left(r(D^\ast)\right)\ge C\left(\kappa^{K-1}k\over s\right)^{\alpha(\beta+1)/K}.
$$
Therefore, we can conclude
$$
\sup_{\eta\sim f_\eta}r(D^\ast)\gtrsim {1\over k^{(\beta+1)/2}}+\left(\kappa^{K-1} k\over s\right)^{\alpha(\beta+1)/K}\quad {\rm and}\quad \min_{k} \sup_{\eta\sim f_\eta}r(D^\ast)\gtrsim \left(s\over \kappa^{K-1}\right)^{-\alpha(1+\beta)/(2\alpha+K)}.
$$

The analysis for $D^{\ast\ast}$ is almost the same with $D^\ast$ since we only need to set $\kappa=1$. So we have 
$$
\sup_{\eta\sim f_\eta}r(D^{\ast\ast})\gtrsim {1\over k^{(\beta+1)/2}}+\left( k\over s\right)^{\alpha(\beta+1)/K}\quad {\rm and}\quad \min_{k} \sup_{\eta\sim f_\eta}r(D^{\ast\ast})\gtrsim s^{-\alpha(1+\beta)/(2\alpha+K)}.
$$

%%%%%%%%%%%%%%%%%%%%%%%%%%%%%%%%%%%%%%%%%%%%%%%
\subsection{Proof of Theorem~\ref{thm:twosample}}
%%%%%%%%%%%%%%%%%%%%%%%%%%%%%%%%%%%%%%%%%%%%%%%
In this proof, we first prove the result for energy distance test equipped with Euclidean distance and then extend the proof to other Mahalanobis distances. We write the marginal covariance matrix of $X$ as $\Sigma_{\pm}$. Without loss of generality, we assume $\EE(X|Y=1)=\mu/2$ and  $\EE(X|Y=-1)=-\mu/2$ in the following proof.

\subsubsection{Upper bound}
\paragraph{Step 1a: Euclidean distance and permutation test} We work on permutation test equipped with Euclidean distance in this step. If we choose $\alpha=\epsilon/2$ in permutation test, the type I error can be controlled at $\epsilon/2$ level nonasymptotically. So we mainly focus on type II error. Applying Markov's inequality suggests
\begin{align*}
	\PP\left(\hat{P}>\alpha\right)&=\PP\left(1+\sum_{b=1}^B\bI_{(\phi_bE(D)\ge E(D))}>(1+B)\alpha\right)\\
	&\le {1+B\PP\left(\phi_1 E(D)\ge E(D)\right)\over (1+B)\alpha}.
\end{align*}
Thus, it is sufficient to show that $\PP\left(\phi_1 E(D)\ge E(D)\right)$ is small when the difference between groups is large enough. When the Euclidean distance is used in $E(D)$, one can verify that
$$
E(D)=\underbrace{{2\over s_+(s_+-1)}\sum_{Y_i=Y_{i'}=1}X_i^TX_{i'}}_{E_1}+\underbrace{{2\over s_-(s_--1)}\sum_{Y_i=Y_{i'}=-1}X_i^TX_{i'}}_{E_2}-\underbrace{{4\over s_+s_{-}}\sum_{Y_i\ne Y_{i'}}X_i^TX_{i'}}_{E_3}.
$$
We also write $\phi_1 E_1$, $\phi_1 E_2$, and $\phi_1 E_3$ as above when labels are permuted by $\phi_1$. We first work on $\phi_1 E(D)$. To the end, we can show that 
$$
{\rm Var}(\phi_1 E_1|s_+,s_-)={4\over s_+^2(s_+-1)^2}\sum_{Y_{\phi_1(i)}=Y_{\phi_1(i')}=1}\EE(X_i^TX_{i'})^2={2\over s_+(s_+-1)}{\rm Tr}(\Sigma_{\pm}^2),
$$
$$
{\rm Var}(\phi_1 E_2|s_+,s_-)={2\over s_-(s_--1)}{\rm Tr}(\Sigma_{\pm}^2)\qquad {\rm and}\qquad {\rm Var}(\phi_1 E_3|s_+,s_-)={4\over s_+s_{-}}{\rm Tr}(\Sigma_{\pm}^2).
$$
Here, $\Sigma_{\pm}=\Sigma_+/2+\Sigma_{-}/2$. It is not hard to  verify that ${\rm Cov}(\phi_1E_1,\phi_1E_2)={\rm Cov}(\phi_1 E_1,\phi_1 E_3)={\rm Cov}(\phi_1 E_3,\phi_1 E_2)=0$. Combining all these terms yields
$$
{\rm Var}(\phi_1 E(D)|s_+,s_-)={2\over s_+(s_+-1)}{\rm Tr}(\Sigma_\pm^2)+{2\over s_-(s_--1)}{\rm Tr}(\Sigma_{\pm}^2)+{4\over s_+s_{-}}{\rm Tr}(\Sigma_\pm^2).
$$
Because $\EE(\phi_1 E(D)|s_+,s_-)=0$, and $s_+=s-s_-$ can seen drawn from binomial distribution ${\rm Bin}(s,1/2)$, law of total variance suggests 
$$
{\rm Var}( \phi_1 E(D))\le {C\over s^2}{\rm Tr}(\Sigma_\pm^2).
$$
An application of Chebyshev's inequality suggests that, for a large enough $C_\epsilon$, we have
$$
\PP\left(\phi_1 E(D)\ge {C_\epsilon\over s}\sqrt{{\rm Tr}(\Sigma_\pm^2)}\right)\le {\epsilon\alpha(1+B)-4\over 8B}.
$$

Next, we work on $E(D)$. Decompose $E_1$ as
$$
E_1=\underbrace{{2\over s_+(s_+-1)}\sum_{Y_i=Y_{i'}=1}\left(X_i-{\mu\over 2}\right)^T\left(X_{i'}-{\mu\over 2}\right)}_{E_{11}}+\underbrace{{2\over s_+}\sum_{Y_i=1}X_i^T\mu}_{E_{12}}-{\mu^T\mu\over 2}.
$$
Similarly we can also decompose $E_2$ and $E_3$
$$
E_2=\underbrace{{2\over s_-(s_--1)}\sum_{Y_i=Y_{i'}=-1}\left(X_i+{\mu\over 2}\right)^T\left(X_{i'}+{\mu\over 2}\right)}_{E_{21}}-\underbrace{{2\over s_-}\sum_{Y_i=-1}X_i^T\mu}_{E_{22}}-{\mu^T\mu\over 2}.
$$
and
$$
E_3=\underbrace{{4\over s_+s_{-}}\sum_{Y_i=1,Y_{i'}=-1}\left(X_i-{\mu\over 2}\right)^T\left(X_{i'}+{\mu\over 2}\right)}_{E_{31}}+\underbrace{{2\over s_-}\sum_{Y_i=-1}X_i^T\mu-{2\over s_+}\sum_{Y_i=1}X_i^T\mu}_{E_{32}}+{\mu^T\mu}
$$
With the similar analysis for $\phi_1 E(D)$, we can know that there exist a large constant $C'_\epsilon$ such that
$$
\PP\left(E_{11}+E_{21}-E_{31}<-{C'_\epsilon\over s}\sqrt{{\rm Tr}\left(\left(\Sigma_++\Sigma_-\right)^2\right)}\right)\le {\epsilon\alpha(1+B)-4\over 16B}.
$$
For $E_{12}+E_{22}-E_{32}$, we have
\begin{align*}
	E_{12}+E_{22}-E_{32}&={4\over s_+}\sum_{Y_i=1}X_i^T\mu-{4\over s_-}\sum_{Y_i=-1}X_i^T\mu\\
	&={4\over s_+}\sum_{Y_i=1}\left(X_i-{\mu\over 2}\right)^T\mu-{4\over s_-}\sum_{Y_i=-1}\left(X_i+{\mu\over 2}\right)^T\mu+4\mu^T\mu.
\end{align*}
The variance of $\left(X_i-{\mu/2}\right)^T\mu$ is $\mu^T\Sigma_+\mu$ when $Y_i=1$ and the variance of $\left(X_i+{\mu/2}\right)^T\mu$ is $\mu^T\Sigma_-\mu$ when $Y_i=-1$.
We can apply Chebyshev's inequality again to obtain
$$
\PP\left(E_{12}+E_{22}-E_{32}<4\mu^T\mu-{C^{''}_\epsilon\over s}\sqrt{\mu^T\left(\Sigma_++\Sigma_-\right)\mu}\right)\le {\epsilon\alpha(1+B)-4\over 16B}
$$
for a large enough constant $C^{''}_\epsilon$. This suggest that we have $\PP\left(\phi_1 E(D)\ge E(D)\right)<{(\epsilon\alpha(1+B)-4)/4B}$ if 
$$
2\mu^T\mu>{C_\epsilon\over s}\sqrt{{\rm Tr}(\Sigma_\pm^2)}+{C^{''}_\epsilon\over s}\sqrt{\mu^T\left(\Sigma_++\Sigma_-\right)\mu}+{C'_\epsilon\over s}\sqrt{{\rm Tr}\left(\left(\Sigma_++\Sigma_-\right)^2\right)}. 
$$
Because $\Sigma_\pm=\Sigma_+/2+\Sigma_-/2+\mu\mu^T/4$, we can know that the sufficient condition for $\PP\left(\hat{P}>\alpha\right)\le \epsilon/2$ is 
$$
\|\mu\|^2\ge {C\over s}\sqrt{{\rm Tr}\left((\Sigma_++\Sigma_{-})^2\right) }= {C\over s}\left\|\Sigma_++\Sigma_{-}\right\|_F
$$
for a large enough constant $C$.

\paragraph{Step 1b: Euclidean distance and asymptotic distribution} Instead of using permutation test, we derive asymptotic distribution for $E(D)$ under null distribution in this step. The idea is to apply central limit theorem for $U$-statistics introduced in \cite{hall2014martingale,hall1984central}. To simplify the analysis, we define $K_{ij}=\gamma_{ij}X_i^TX_j$, where
$$
\gamma_{ij}=\begin{cases}
	1/s_+(s_+-1) & Y_i,Y_j=1\\
	1/s_-(s_--1) & Y_i,Y_j=-1\\\
	-2/s_+s_- & i\le Y_i=-1,Y_j=1\ {\rm or}\ Y_i=1,Y_j=-1,
\end{cases}
$$
$V_j=\sum_{i<j}K_{ij}$, $R_j=\sum_{i=1}^jV_i$ and $\Fcal_j=\{X_1,\ldots,X_j\}$ for $1\le j\le s$. It is clear that $E(D)=2R_s$. Because $\EE(K_{ij}|\Fcal_{j-1})=0$ when $i<j$, $\{R_j;\Fcal_j\}$ is a sequence of zero mean martingale. As discussed in the last step, under null hypothesis, we can know that the variance of $E(D)$ is
$$
\sigma_E^2:={2\over s_+(s_+-1)}{\rm Tr}(\Sigma_+^2)+{2\over s_-(s_--1)}{\rm Tr}(\Sigma_{-}^2)+{4\over s_+s_{-}}{\rm Tr}(\Sigma_+\Sigma_-).
$$
As $s_+=s-s_-$ is drawn from binomial distribution ${\rm Bin}(s,1/2)$, we have
$$
{\sigma_E^2\over 8\|\Sigma_++\Sigma_{-}\|_F^2/s^2}\to 1.
$$
Corollary 3.1 in \cite{hall2014martingale} suggests that 
$$
{E(D)\over \sigma_E}\to N(0,1),\qquad s\to \infty 
$$
provided that
\begin{equation}
	\label{eq:cltcond1}
	{\rm for\ all\ }\epsilon>0,\qquad \sigma_{E}^{-2}\sum_{j=2}^s\EE(V_j^2\bI(|V_j|>\sigma_{E}\epsilon)|\Fcal_{j-1})\to 0
\end{equation}
and
\begin{equation}
	\label{eq:cltcond2}
	{\sum_{j=2}^s \EE(V_j^2|\Fcal_{j-1})\over \sigma_{E}^2}\stackrel{P}{\to} {1\over 4}.
\end{equation}
To show \eqref{eq:cltcond1}, it is sufficient to show that
$$
{ \sum_{j=2}^s\EE(V_j^4) /\sigma_{E}^{4}}\to 0.
$$
Because for any $i_1\ne i_2\ne i_3\ne i_4\ne i_5$, we have
$$
\EE(K_{i_1i_2}K_{i_1i_3}K_{i_1i_4}K_{i_1i_5})=0\qquad {\rm and}\qquad \EE(K_{i_1i_2}K_{i_1i_3}^3)=0,
$$
we can obtain 
\begin{align*}
	\sum_{j=2}^s\EE(V_j^4)&=\sum_{j=2}^s\sum_{i=1}^{j-1}\EE(K^4_{ij})+3\sum_{j=2}^s\sum_{1\le i, i'\le j-1}\EE(K^2_{ij}K^2_{i'j})\\
	&\le \sum_{j=2}^s\sum_{i=1}^{j-1}\EE(K^4_{ij})+3\sum_{j=2}^s\sum_{1\le i, i'\le j-1}\sqrt{\EE(K^4_{ij})\EE(K^4_{i'j})}\\
	&\le Cs^2 {o\left(s\|\Sigma_++\Sigma_{-}\|_F^4\right) \over s^8}+Cs^3{o\left(s\|\Sigma_++\Sigma_{-}\|_F^4\right) \over s^8}\\
	&\le o\left(\|\Sigma_++\Sigma_{-}\|_F^4/s^4  \right)\\
	&\le o\left(\sigma_{E}^{4}\right).
\end{align*}
Here, we use assumption (e) in Assumption~\ref{ap:twosample}. we now complete the proof of \eqref{eq:cltcond1}. Next, we work on \eqref{eq:cltcond2}. Note that
$$
\EE(V_j^2|\Fcal_{j-1})=\sum_{i_1,i_2=1}^{j-1}\EE(K_{i_1,j}K_{i_2,j}|\Fcal_{j-1})=\sum_{i_1,i_2=1}^{j-1}\gamma_{i_1,j}\gamma_{i_2,j}X_{i_1}^T\Sigma_{(j)}X_{i_2}.
$$
where $\Sigma_{(j)}=\Sigma_{+}$ if $Y_j=1$ and $\Sigma_{(j)}=\Sigma_{-}$ if $Y_j=-1$. This suggests 
$$
\EE\left(\sum_{j=2}^s\EE(V_j^2|\Fcal_{j-1})\right)=\sum_{j=2}^s\sum_{i=1}^{j-1}\gamma_{i,j}^2\tr(\Sigma_{(i)}\Sigma_{(j)})={\sigma_{E}^2\over 4}. 
$$
If $i\le j$, then
\begin{align*}
	\EE\left\{\EE(V_i^2|\Fcal_{i-1})\EE(V_j^2|\Fcal_{j-1})\right\}=&\EE\left(\sum_{i_1,i_2=1}^{j-1}\sum_{i_3,i_4=1}^{i-1}\gamma_{i_1,j}\gamma_{i_2,j}\gamma_{i_3,i}\gamma_{i_4,i}X_{i_1}^T\Sigma_{(j)}X_{i_2}X_{i_3}^T\Sigma_{(i)}X_{i_4}\right)\\
	=&\underbrace{4\sum_{1\le i_1<i_2\le i-1}\gamma_{i_1,j}\gamma_{i_2,j}\gamma_{i_1,i}\gamma_{i_2,i}\tr(\Sigma_{(j)}\Sigma_{(i_1)}\Sigma_{(i)}\Sigma_{(i_2)})}_{F_{1i}}\\
	&+\underbrace{\sum_{i_1=1}^{i-1}\gamma_{i_1,j}^2\gamma_{i_1,i}^2\left(\EE(X_{i_1}^T\Sigma_{(j)}X_{i_1}X_{i_1}^T\Sigma_{(i)}X_{i_1})-\tr(\Sigma_{(i_1)}\Sigma_{(i)})\tr(\Sigma_{(i_1)}\Sigma_{(j)})\right)}_{F_{2i}}\\
	&+\underbrace{\sum_{i_1=1}^{i-1}\sum_{i_2=1}^{j-1}\gamma_{i_1,i}^2\gamma_{i_2,j}^2\tr(\Sigma_{(i_1)}\Sigma_{(i)})\tr(\Sigma_{(i_2)}\Sigma_{(j)})}_{F_{3ij}}.
\end{align*}
By assumption (d) in Assumption~\ref{ap:twosample}, we know that $\tr(\Sigma_{(j)}\Sigma_{(i_1)}\Sigma_{(i)}\Sigma_{(i_2)})=o(\left\|\Sigma_++\Sigma_{-}\right\|^4_F)$, and thus
$$
F_{1i}=o\left(\left\|\Sigma_++\Sigma_{-}\right\|^4_F/s^6\right).
$$
For $F_{2i}$, note that 
\begin{align*}
	\EE(X_{i_1}^T\Sigma_{(j)}X_{i_1}X_{i_1}^T\Sigma_{(i)}X_{i_1})&\le \sqrt{\EE(X_{i_1}^T\Sigma_{(j)}X_{i_1})^2\EE(X_{i_1}^T\Sigma_{(i)}X_{i_1})^2}\\
	&\le \sqrt{\EE(X_{i_1}^TX_{j})^4\EE(X_{i_1}^TX_{i})^4}\\
	&=o(s\left\|\Sigma_++\Sigma_{-}\right\|^4_F).
\end{align*}
Here, we use assumption (e) in Assumption~\ref{ap:twosample}.
This leads to 
$$
F_{2i}=o\left(\left\|\Sigma_++\Sigma_{-}\right\|^4_F/s^6\right).
$$
For $F_{3ij}$, one can verify that
$$
F_{3ij}=o\left(\left\|\Sigma_++\Sigma_{-}\right\|^4_F/s^5\right)\qquad {\rm and}\qquad \sum_{1\le i<j\le m}F_{3ij}={\sigma_{E}^4\over 16},
$$
Consequently, putting $F_{1i}$, $F_{2i}$ and $F_{3ij}$ together suggests 
\begin{align*}
	\Var\left(\sum_{j=2}^s\EE(V_j^2|\Fcal_{j-1})\right)&=\EE\left(\sum_{j=2}^s\EE(V_j^2|\Fcal_{j-1})\right)^2-{\sigma_{E}^4\over 16}\\
	&\le o\left(\left\|\Sigma_++\Sigma_{-}\right\|^4_F/s^4\right)\\
	&\le o\left(\sigma_{E}^{4}\right).
\end{align*}
Therefore, we can conclude
$$
\PP\left(\left|{\sum_{j=2}^s\EE(V_j^2|\Fcal_{j-1})\over \sigma_{E}^4}-{1\over 4}\right|\ge \epsilon\right)\le {\Var(\sum_{j=2}^s\EE(V_j^2|\Fcal_{j-1}))\over \epsilon^2\sigma_{E}^4}\to 0.
$$
So we prove \eqref{eq:cltcond2} and ${E(D)/\sigma_E}\to N(0,1)$ under null hypothesis, as $s\to \infty$. This immediately suggests the type I error can be controlled at $\epsilon/2$ level asymptotically if we reject the null hypothesis when $E(D)>z_\alpha \sigma_E$ and $\alpha=\epsilon/2$. 

Now, let's look at type II error. We adopt the same notation in the analysis for permutation test. With the same argument for the null hypothesis, we can show that
$$
{E_{11}+E_{21}-E_{31}\over \sigma_E}\to N(0,1),
$$
which leads to
$$
\PP\left(E_{11}+E_{21}-E_{31}<-C_\epsilon \sigma_E \right)\lesssim {\epsilon\over 8}.
$$
for some large enough constant $C_\epsilon$. Since
$$
E_{12}+E_{22}-E_{32}={4\over s_+}\sum_{Y_i=1}\left(X_i-{\mu\over 2}\right)^T\mu-{4\over s_-}\sum_{Y_i=-1}\left(X_i+{\mu\over 2}\right)^T\mu+4\mu^T\mu,
$$
the central limit theorem suggests that 
$$
\sqrt{s}{E_{12}+E_{22}-E_{32}-4\mu^T\mu \over 4\sqrt{2\mu^T(\Sigma_++\Sigma_{-})\mu}}\to N(0,1).
$$
So we can know that there exists a constant $C'_\epsilon$ such that
$$
\PP\left(E_{12}+E_{22}-E_{32}-4\mu^T\mu<-C'_\epsilon {\sqrt{\mu^T(\Sigma_++\Sigma_{-})\mu} \over \sqrt{s}}  \right)\lesssim {\epsilon\over 8}.
$$
We can control type II error at $\epsilon/2$ level asymptotically if 
$$
2\|\mu\|^2>C_\epsilon \sigma_E+C'_\epsilon {\sqrt{\mu^T(\Sigma_++\Sigma_{-})\mu} \over \sqrt{s}}.
$$ 
We now complete the proof.

\paragraph{Step 2: Mahalanobis distance}	
All the proof in the last two steps can be easily extend to Mahalanobis distance. The Mahalanobis distance $D_M$ can be seen as Euclidean distance after linear transformation. Specifically, if we write $M=BB^T$, $D_M(X_1,X_2)=\|B^T(X_1-X_2)\|^2$. Thus, to  control both type I and II error, we require 
$$
\|B^T\mu\|^2\ge {C\over s}\left\|B^T(\Sigma_++\Sigma_{-})B\right\|_F.
$$
If $M=UU^T$, then we need 
$$
\|U^T\mu\|^2\ge {C\over s}\left\|U^T(\Sigma_++\Sigma_{-})U\right\|_F.
$$
\paragraph{Step 3: covariance calculation}	
Now, we go back to our multi-view model to find the covariance matrix. By law of total variance, we can decompose the covariance structure of $X$ in two different ways. The first one is
\begin{align*}
	{\rm Var}(X)&={\rm Var}(\EE(X|Y))+\EE({\rm Var}(X|Y))\\
	&={\rm Var}(\EE(X|Y))+\EE({\rm Var}(\EE(X|Z,Y)|Y))+\EE({\rm Var}(X|Y,Z))\\
	&={\rm Var}(\EE(X|Y))+\EE({\rm Var}(\EE(X|Z)|Y))+\EE({\rm Var}(X|Z))
\end{align*}
The last step is due to $X\perp Y|Z$. The second one is 
$$
{\rm Var}(X)={\rm Var}(\EE(X|Z))+\EE({\rm Var}(X|Z)).
$$
If we compare these two decomposition, we can conclude that
$$
{\rm Var}(\EE(X|Z))={\rm Var}(\EE(X|Y))+\EE({\rm Var}(\EE(X|Z)|Y)).
$$
In the multi-view model, we know that
$$
{\rm Var}(\EE(X|Z))=BB^T\qquad {\rm and}\qquad {\rm Var}(\EE(X|Y))={\mu\mu^T\over 4}.
$$
So
$$
\EE({\rm Var}(\EE(X|Z)|Y))=BB^T-{\mu\mu^T\over 4}.
$$
This has two implications: $\|\mu/2\|^2\le \lambda_1$ and 
$$
\sum_{k=2}^K\lambda_k^2\le \|\EE({\rm Var}(\EE(X|Z)|Y))\|_F^2\le \sum_{k=1}^K\lambda_k^2.
$$ 
Then we can know that
\begin{align*}
	\left\|{1\over 2}\left(\Sigma_++\Sigma_{-}\right)\right\|_F^2&=\left\|\EE({\rm Var}(X|Y))\right\|_F^2\\
	&=\left\|\EE({\rm Var}(\EE(X|Z)|Y))+\EE({\rm Var}(X|Z))\right\|_F^2\\
	&\le \left\|BB^T+\Sigma\right\|_F^2
\end{align*}
Combing this with Step 1 immediately suggests that 
$$
r(\|\cdot\|^2,\epsilon)\le C{\|BB^T+\Sigma\|_F^{1/2}\over \sqrt{s}}.
$$

When it comes to Mahalanobis distance, we need to find bounds for $\|B^T\mu\|^2$ and $\|B^T(\Sigma_++\Sigma_{-})B\|_F^2$. First, we note that $\mu$ can be written as a linear combination of $b_1,\ldots, b_K$ since $X\perp Y|Z$ suggests 
$$
\EE(X|Y)=\EE(\EE(X|Y,Z)|Y)=\EE(\EE(X|Z)|Y)=B\EE(G(Z)|Y).
$$
Then, we have 
$$
\|B^T\mu\|^2\ge\lambda_K\|\mu\|^2
$$
and
\begin{align*}
	\|B^T(\Sigma_++\Sigma_{-})B\|_F^2&\le 2\|B^T(BB^T+\Sigma)B\|_F^2.
\end{align*}
Therefore, 
$$
r(D^\ast,\epsilon)\le C{\|B^T(BB^T+\Sigma)B\|_F^{1/2}\over \sqrt{s\lambda_K}}.
$$
Similarly, we can show that
$$
r(D^{\ast\ast},\epsilon)\le C{\|U^T(BB^T+\Sigma)U\|_F^{1/2}\over \sqrt{s}}.
$$
\subsubsection{Lower bound} 

We now work on the lower bound. The main idea of lower bound proof is to derive the asymptotic distribution of $E(D)$ under local alternative hypothesis. Recall that the local alternative hypothesis of interest is defined as 
$$
\tilde{H}_1(r)=\left\{\|\mu\|=r,\ \mu=ru_K\right\}.
$$

\paragraph{Step 1a: Euclidean distance and asymptotic distribution} We first show if $ \|\mu\|^2=o(\|\Sigma_++\Sigma_-\|_F/s)$, energy distance test based on asymptotic distribution has trivial power. 
Following the same analysis in step 1b of upper bound, we can know that 
$$
{E_{11}+E_{21}-E_{31}\over 2\sqrt{2}\|\Sigma_++\Sigma_{-}\|_F/s}\to N(0,1),
$$
and 
$$
{E_{12}+E_{22}-E_{32}-4\mu^T\mu \over 4\sqrt{2\mu^T(\Sigma_++\Sigma_{-})\mu/s}}\to N(0,1).
$$
Note that 
$$
\|\mu\|^2=o(\|\Sigma_++\Sigma_-\|_F/s)
$$
and
$$
\sqrt{\mu^T(\Sigma_++\Sigma_{-})\mu/s}\le \|\mu\| \sqrt{\|\Sigma_++\Sigma_-\|_F/s}=o(\|\Sigma_++\Sigma_-\|_F/s).
$$
It suggests that 
$$
E_{12}+E_{22}-E_{32}=o_p(\|\Sigma_++\Sigma_{-}\|_F/s)\quad {\rm and}\quad  \|\mu\|^2=o(\|\Sigma_++\Sigma_-\|_F/s). 
$$
This suggest that under the local alternative hypothesis, we have 
$$
{E(D)\over 2\sqrt{2}\|\Sigma_++\Sigma_{-}\|_F/s}\to N(0,1).
$$
This immediately suggest that 
$$
\PP(\phi_D=0|\tilde{H}_1(r))\to \alpha.
$$

\paragraph{Step 1b: Euclidean distance and permutation test} Similarly, we can show energy distance test (permutation test) has trivial power when $ \|\mu\|^2=o(\|\Sigma_++\Sigma_-\|_F/s)$. In step 1a, we have already shown that under local alternative hypothesis, ${E(D)/ (2\sqrt{2}\|\Sigma_++\Sigma_{-}\|_F/s)}\to N(0,1)$. With the same argument in step 1b of upper bound, we can show that 
$$
{\phi_1 E(D)\over 4\|\Sigma_\pm\|_F/s}\to N(0,1).
$$
Because $ \|\mu\|^2=o(\|\Sigma_++\Sigma_-\|_F/s)$, we can know that 
$$
{4\|\Sigma_\pm\|_F \over 2\sqrt{2}\|\Sigma_++\Sigma_{-}\|_F}\to 1.
$$
Therefore, we can know that 
$$
{\phi_1 E(D)\over 2\sqrt{2}\|\Sigma_++\Sigma_{-}\|_F/s}\to N(0,1).
$$
Now we complete the proof. 

\paragraph{Step 2: Mahalanobis distance}
All the proof in the last two steps can be generalized to Mahalanobis distance. More specifically, we can show that the test has trivial power when the distance is $D^\ast$ and $ \|B^T\mu\|^2=o(\|B^T(\Sigma_++\Sigma_-)B\|_F/s)$, and when the distance is $D^{\ast\ast}$ and $ \|U^T\mu\|^2=o(\|U^T(\Sigma_++\Sigma_-)U\|_F/s)$. 

\paragraph{Step 3: covariance calculation}	
From step 3 of upper bound, we can know that 
$$
{1\over 2}(\Sigma_++\Sigma_-)=BB^T+\Sigma-{\mu\mu^T\over 4}.
$$
Because $\mu=ru_K$, we can know that
$$
\|BB^T+\Sigma\|_F\le {K\over K-1}\left\|{1\over 2}(\Sigma_++\Sigma_-)\right\|_F.
$$
Therefore, we can know that $ \|\mu\|^2=o(\|BB^T+\Sigma\|_F/s)$ can lead to trivial power of energy distance test. 

We next work on $D^\ast$. Because $\mu=ru_K$, note that 
$$
\|B^T\mu\|^2=\lambda_K\|\mu\|^2\qquad {\rm and}\qquad \|B^T(BB^T+\Sigma)B\|_F\le {K\over K-1}\left\|{1\over 2}B^T(\Sigma_++\Sigma_-)B\right\|_F.
$$
Therefore, $\|\mu\|^2=o(\|B^T(BB^T+\Sigma)B\|_F/s\lambda_K)$ implies $ \|B^T\mu\|^2=o(\|B^T(\Sigma_++\Sigma_-)B\|_F/s)$ and thus trivial power of energy distance test. 

Similarly, we can show $\|\mu\|^2=o(\|U^T(BB^T+\Sigma)U\|_F/s)$ implies $ \|U^T\mu\|^2=o(\|U^T(\Sigma_++\Sigma_-)U\|_F/s)$. We now complete the proof. 

%%%%%%%%%%%%%%%%%%%%%%%%%%%%%%%%%%%%%%%%%%%%%%%
\subsection{Proof of Theorem~\ref{thm:metricupper}}
%%%%%%%%%%%%%%%%%%%%%%%%%%%%%%%%%%%%%%%%%%%%%%%
Without loss of generality, we assume $\EE(X_{i,j})=0$ and $\EE(Z_i)=0$. 

\subsubsection{Proof for $\hat{M}^\ast$} 
Recall $\hat{R}$ is defined as
\begin{align*}
	\hat{R}=-{1\over m(m-1)}\sum_{i\ne i'}\left(\bar{X}_{i}\bar{X}_{i'}^T+\bar{X}_{i'}\bar{X}_{i}^T\right)+{1\over mn(n-1)}\sum_{i,j\ne j'}\left(X_{i,j}X_{i,j'}^T+X_{i,j'}X_{i,j}^T\right)
\end{align*}
We write $E_i=\EE(X_{i,j}|Z_i)=BZ_i$.  We bound the above two terms in $\hat{R}$ separately. The first term can be decomposed as
\begin{align*}
	&{1\over m(m-1)}\sum_{i\ne i'}\left(\bar{X}_{i}\bar{X}_{i'}^T+\bar{X}_{i'}\bar{X}_{i}^T\right)\\
	=&{1\over m(m-1)}\sum_{i\ne i'}\left((\bar{X}_{i}-E_i)(\bar{X}_{i'}-E_{i'})^T+(\bar{X}_{i'}-E_{i'})(\bar{X}_{i}-E_i)^T\right)\\
	&+{2\over m(m-1)}\sum_{i\ne i'}\left(E_i(\bar{X}_{i'}-E_{i'})^T+(\bar{X}_{i'}-E_{i'})E_i^T\right)+{1\over m(m-1)}\sum_{i\ne i'}\left(E_iE_{i'}^T+E_{i'}E_i^T\right)\\
	=&A_1+A_2+A_3
\end{align*}
Because $m\gg \log(d+m)$, $\bar{X}_i-E_i$ is a sub-Gaussian vector with parameters $\sigma^2/n$ and $Z_i$ is a sub-Gaussian vector with parameters $1$, an application of Lemma~\ref{lm:ustat} on $A_1$ and $A_3$ suggests 
$$
\PP\left(\|A_1\|>C\sigma^2{\sqrt{d\log (d+m)}+\log (d+m)\over nm}\right)\le {1\over (d+m)^5}
$$
and
$$
\PP\left(\|A_3\|>C\|B\|^2{\sqrt{K\log (d+m)}+\log (d+m)\over m}\right)\le {1\over (d+m)^5}
$$
Similarly, we then apply Lemma~\ref{lm:crossstat} to bound $A_2$
$$
\PP\left(\|A_2\|>C\sigma\|B\|{\sqrt{K\log (d+m)}+\log (d+m)+\sqrt{d}\over \sqrt{n}m}\right)\le {1\over (d+m)^5}
$$
The second term in $\hat{R}$ can be decomposed as
\begin{align*}
	&{1\over mn(n-1)}\sum_{i,j\ne j'}\left(X_{i,j}X_{i,j'}^T+X_{i,j'}X_{i,j}^T\right)-BB^T\\
	=&{1\over mn(n-1)}\sum_{i,j\ne j'}\left((X_{i,j}-E_i)(X_{i,j'}-E_i)^T+(X_{i,j'}-E_i)(X_{i,j}-E_i)^T\right)\\
	&+{2\over mn}\sum_{i,j}\left(E_i(X_{i,j}-E_i)^T+(X_{i,j}-E_i)E_i^T\right)+{1\over m}\sum_{i}E_iE_i^T-BB^T\\
	=&{1\over mn(n-1)}\sum_{i,j\ne j'}\left((X_{i,j}-E_i)(X_{i,j'}-E_i)^T+(X_{i,j'}-E_i)(X_{i,j}-E_i)^T\right)\\
	&+{2\over m}\sum_{i}\left(E_i(\bar{X}_{i}-E_i)^T+(\bar{X}_{i}-E_i)E_i^T\right)+{1\over m}\sum_{i}E_iE_i^T-BB^T\\
	=&A_4+A_5+A_6
\end{align*}
To bound $A_4$, we now define 
$$
T_i={1\over n(n-1)}\sum_{j\ne j'}\left((X_{i,j}-E_i)(X_{i,j'}-E_i)^T+(X_{i,j'}-E_i)(X_{i,j}-E_i)^T\right)
$$
Clearly, $T_i$ is a $U$-statistic and Lemma~\ref{lm:ustat} can be applied
$$
\PP\left(\left\|T_i\right\|\ge C\sigma^2\left({\sqrt{4dt}+4t\over n}+{\sqrt{d\log d}\over n^{3/2}}+{\sqrt{d}(t+\log d)\over n^2}\right)\right)\le 2\exp(-t)
$$
If we choose 
$$
L=C\sigma^2\left({\sqrt{d\log(d+m)}+\log(d+m)\over n}+{\sqrt{d}\log(d+m)\over n^2}\right),
$$
we can have
$$
\PP\left(\left\|T_i\right\|\ge L \right)\le {1\over (d+m)^7}.
$$
Now we turn to truncated expectation
\begin{align*}
	\EE(\|T_i\|\bI(\|T_i\|\ge L) )&=L\PP(\|T_i\|>L)+\int_L^\infty \PP(\|T_i\|>x)dx\\
	&\le L\PP(\|T_i\|>L)+C\int_L^\infty \exp\left(-C\min\left({n^2x^2\over d\sigma^4},{nx\over \sigma^2},{n^2x \over \sqrt{d}\sigma^2}\right)\right)dx\\
	&\le L\PP(\|T_i\|>L)+C\max\left({\sigma^2\over n},{\sqrt{d}\sigma^2\over n^2}\right)\PP(\|T_i\|>L)\\
	&\le {L\over (d+m)^6}
\end{align*}
To apply Lemma~\ref{lm:trunmatrixbern}, we still need to determine the variance statistic
\begin{align*}
	\EE(T_i^2)&={1 \over n(n-1)}\EE\left((X_{i,j}-E_i)(X_{i,j'}-E_i)^T+(X_{i,j'}-E_i)(X_{i,j}-E_i)^T\right)^2\\
	&={2 \over n(n-1)}\left(\Sigma^2+{\rm Tr}(\Sigma)\Sigma\right)
\end{align*}
This leads to
$$
v_1^2=\left\|\sum_{i=1}^m\EE(T_i^2)\right\|= {2m\over n(n-1)}\left\|\Sigma^2+{\rm Tr}(\Sigma)\Sigma\right\|\le {4md\sigma^4\over n(n-1)}
$$
An application of Lemma~\ref{lm:trunmatrixbern} on $T_i$ yields 
$$
\PP\left(\left\|\sum_{i=1}^mT_i\over m\right\|\ge C\left(v_1{\sqrt{t-s+\log d}\over m}+L{t-s+\log d\over m}\right)\right)\le \exp(-t)+{1\over (d+m)^6},
$$
where $s={L/(d+m)^5}\le \log(d+m)/(d+m)^4$. Since $m\gg \log^2(d+m)$, taking $t=6\log(d+m)$ suggests 
$$
\PP\left(\|A_4\|>C\sigma^2{\sqrt{d\log (d+m)}\over n\sqrt{m}}\right)\le {1\over (d+m)^5}.
$$
To bound $A_5$, we apply Lemma~\ref{lm:crossconcen}
$$
\|\|E_i(\bar{X}_{i}-E_i)^T\|\|_{\psi_1}=\|(\bar{X}_{i}-E_i)^TBZ_i\|_{\psi_1}\le 2\sqrt{K}\sigma\|B\|/\sqrt{n}
$$
and observe
\begin{align*}
	v_2^2&=\max\left(\|\sum_{i=1}^m\EE\left(E_i(\bar{X}_{i}-E_i)^T(\bar{X}_{i}-E_i)E_i^T\right)\|,\|\sum_{i=1}^m\EE\left((\bar{X}_{i}-E_i)E_i^TE_i(\bar{X}_{i}-E_i)^T\right)\|\right)\\
	&\le md\sigma^2\|B\|^2/n
\end{align*}
We then can apply Lemma~\ref{lm:matrixbern} to bound $A_5$
$$
\PP\left(\|A_5\|>C\sigma\|B\|{\sqrt{d\log (d+m)}\over \sqrt{nm}}\right)\le {1\over (d+m)^5},
$$
where we use the conditon $m\gg \log^2(d+m)$. We now bound the $A_6$. $A_6$ can be rewritten as
$$
A_6={1\over m}\sum_{i}E_iE_i^T-BB^T=B\left({1\over m}\sum_{i}Z_iZ_i^T-I\right)B^T
$$
An application of Theorem 5.39 in \cite{vershynin2010introduction} yields
$$
\PP\left(\left\|{1\over m}\sum_{i}Z_iZ_i^T-I\right\|>C{\sqrt{K}+\sqrt{t}\over \sqrt{m}}+C{K+t\over m}\right)\le \exp(-t)
$$
Let $t=6\log(d+m)$ and then we can get
$$
\PP\left(\left\|{1\over m}\sum_{i}Z_iZ_i^T-I\right\|>C{\sqrt{K\log (d+m)}\over \sqrt{m}}\right)\le {1\over (d+m)^5}
$$
This leads to
$$
\PP\left(\|A_6\|>C\|B\|^2{\sqrt{K\log (d+m)}\over \sqrt{m}}\right)\le {1\over (d+m)^5}
$$
Now, we can put $A_1, \ldots, A_6$ together to obtain 
$$
\PP\left(\|\hat{R}-BB^T\|>C{\sqrt{\log(d+m)}\over \sqrt{m}}\left[\|B\|^2\sqrt{K}+\sigma\|B\|{\sqrt{d}\over \sqrt{n}}+\sigma^2{\sqrt{d}\over n}\right]\right)\le {6\over (d+m)^5}.
$$

We now define the event 
$$
\Acal=\{\|\hat{R}-BB^T\|\le \delta\},\qquad {\rm where}\quad \delta=C{\sqrt{\log(d+m)}\over \sqrt{m}}\left[\|B\|^2\sqrt{K}+\sigma\|B\|{\sqrt{d}\over \sqrt{n}}+\sigma^2{\sqrt{d}\over n}\right]
$$
then we know $\PP(\Acal)\ge 1- {6/(d+m)^5}$. The rest of analysis is conditioned on the event $\Acal$. We write $\tilde{R}=\sum_{k=1}^K\hat{\lambda}_k\hat{u}_k\hat{u}_k^T$. By Weyl’s inequality \citep{stewart1990matrix}, we can know that
$$
\lambda_k-\delta\le \hat{\lambda}_k\le \lambda_k+\delta,
$$
which leads to
$$
\|\hat{M}^\ast-M^\ast\|\le \delta,
$$
provided $\lambda_K/4>\delta$. 

\subsubsection{Proof for $\hat{M}^{\ast\ast}$} 
The proof for $\hat{M}^{\ast\ast}$ is slightly different from the proof for $\hat{M}^\ast$. We still use the same notation as in the proof for $\hat{M}^\ast$. We define 
$$
\hat{R}_E=A_3+A_6+BB^T={1\over m}\sum_{i}E_iE_i^T+{1\over m(m-1)}\sum_{i\ne i'}\left(E_iE_{i'}^T+E_{i'}E_i^T\right).
$$
Since $\hat{R}-\hat{R}_E=A_1+A_2+A_4+A_5$, we can know that 
$$
\PP\left(\|\hat{R}-\hat{R}_E\|>C{\sqrt{\log(d+m)}\over \sqrt{m}}\left[\sigma\|B\|{\sqrt{d}\over \sqrt{n}}+\sigma^2{\sqrt{d}\over n}\right]\right)\le {4\over (d+m)^5}.
$$
We now define the event 
$$
\Acal_1=\{\|\hat{R}-\hat{R}_E\|\le \delta'\},\qquad {\rm where}\quad \delta'=C{\sqrt{\log(d+m)}\over \sqrt{m}}\left[\sigma\|B\|{\sqrt{d}\over \sqrt{n}}+\sigma^2{\sqrt{d}\over n}\right],
$$
then we know $\PP(\Acal_1)\ge 1- {4/(d+m)^5}$. 

Next, we study the $K$st eigenvalue of $\hat{R}_E$. Because 
$$
\hat{R}_E=B\left({1\over m}\sum_{i}Z_iZ_i^T+{1\over m(m-1)}\sum_{i\ne i'}\left(Z_iZ_{i'}^T+Z_{i'}Z_i^T\right)\right)B^T,
$$
it is sufficient to study the smallest eigenvalue of $\hat{R}_Z$
$$
\hat{R}_Z={1\over m}\sum_{i}Z_iZ_i^T+{1\over m(m-1)}\sum_{i\ne i'}\left(Z_iZ_{i'}^T+Z_{i'}Z_i^T\right).
$$
From the proof for $\hat{M}^\ast$, we can know that 
$$
\PP\left(\left\|{1\over m}\sum_{i}Z_iZ_i^T-I\right\|>C{\sqrt{K\log (d+m)}\over \sqrt{m}}\right)\le {1\over (d+m)^5}.
$$
And an application of Lemma~\ref{lm:ustat} suggests that 
$$
\PP\left(\left\|{1\over m(m-1)}\sum_{i\ne i'}\left(Z_iZ_{i'}^T+Z_{i'}Z_i^T\right)\right\|\ge C{\sqrt{K\log (d+m)}+\log (d+m)\over m}\right)\le {1\over (d+m)^5}.
$$
Putting above two bound suggests
$$
\PP\left(\left\|\hat{R}_Z-I\right\|>C{\sqrt{K\log (d+m)}\over \sqrt{m}}\right)\le {2\over (d+m)^5}.
$$
We now define the event 
$$
\Acal_2=\{\|\hat{R}_Z-I\|\le \delta''\},\qquad {\rm where}\quad \delta''=C{\sqrt{K\log (d+m)}\over \sqrt{m}},
$$
then we know $\PP(\Acal_2)\ge 1- {2/(d+m)^5}$. On the event $\Acal_2$, we can know that the $K$st eigenvalue of $\hat{R}_E$, denoted by $\lambda_K(\hat{R}_E)$,
$$
\lambda_K(\hat{R}_E)\ge \lambda_K(1-\delta'').
$$

Note that the eigenvectors of $\hat{R}_E$ are $U$. We write $\tilde{R}'=\sum_{k=1}^K\hat{u}_k\hat{u}_k^T$. On the event $\Acal_1\cap\Acal_2$, we can apply Davis-Kahan theorem $\hat{R}=\hat{R}_E+(\hat{R}-\hat{R}_E)$ \citep[see, e.g., Corollary 2.8][]{chen2020spectral}
$$
\left\|\tilde{R}'-UU^T\right\|\le {C\delta'\over \delta''}.
$$
When $m\ge c\log(d+m)(K+d\kappa\sigma^2/n\lambda_K+d\sigma^4/n^2\lambda^2_K)$ and $\kappa$ is bounded, we can know that 
$$
{\delta'\over \delta''}\le C{\sqrt{\log(d+m)}\over \sqrt{m}}\left[\sigma{\sqrt{d}\over \sqrt{n\lambda_K}}+\sigma^2{\sqrt{d}\over n\lambda_K}\right].
$$
We now complete the proof.

%%%%%%%%%%%%%%%%%%%%%%%%%%%%%%%%%%%%%%%%%%%%%%%
\subsection{Proof of Theorem~\ref{thm:metriclower}}
%%%%%%%%%%%%%%%%%%%%%%%%%%%%%%%%%%%%%%%%%%%%%%%

\subsubsection{Proof for $\|\hat{M}^\ast-M^\ast\|$} 

To show the lower bound on $\|\hat{M}^\ast-M^\ast\|$, we only need to show the following two inequalities
\begin{equation}
	\label{eq:lowerR1}
	\inf_{\hat{M}^\ast}\sup_{B\in \Bcal(\nu)}\EE\left(\|M^\ast-\hat{M}^\ast\|\right)\ge {c\over \sqrt{m}}\left[\sigma{\sqrt{d\nu}\over \sqrt{n}}+\sigma^2{\sqrt{d}\over n}\right]
\end{equation}
and
\begin{equation}
	\label{eq:lowerR2}
	\inf_{\hat{M}^\ast}\sup_{B\in \Bcal(\nu)}\EE\left(\|M^\ast-\hat{M}^\ast\|\right)\ge {c\sqrt{K}\nu\over \sqrt{m}}.
\end{equation}
We first consider \eqref{eq:lowerR1}. Define a long vector $\bX_i=(X_{i,1}^T,\ldots,X_{1,n}^T)^T\in \RR^{nd\times 1}$, of which distribution is a mean-zero normal distribution with covariance matrix
$$
\Sigma(B)=\begin{pmatrix}
	BB^T+\sigma^2I & BB^T & \cdots & BB^T\\
	BB^T & BB^T+\sigma^2I  & \cdots & BB^T\\
	\cdots & \cdots & \cdots & \cdots \\
	BB^T & BB^T & \cdots & BB^T+\sigma^2I\\
\end{pmatrix}=\bB\bB^T+\sigma^2I_{nd\times nd},
$$
where $\bB=(B^T,\ldots,B^T)^T\in \RR^{nd\times K}$. Equivalently, our observation are independent identical copies $\bX_1, \bX_2,\ldots, \bX_m$.  The proof is then divided into three steps.
\paragraph{Step 1: hypothesis construction}
By the Varshamov-Gilbert bound \citep[][Lemma 4.7]{massart2007concentration}, we can find a collection of vectors $u_1,\ldots, u_N\in \{-1,1\}^{\lfloor d/2\rfloor}$ with $N\ge \exp(d/16)$ such that 
$$
\|u_{l_1}-u_{l_2}\|^2\ge \lfloor d/2\rfloor,\qquad l_1\ne l_2.
$$
Given $u_1,\ldots, u_N$, we can generate $w_1,\ldots, w_N$ such that $w_l=(\delta u_l/\sqrt{\lfloor d/2\rfloor},\sqrt{1-\delta^2})\in \RR^{\lfloor d/2\rfloor+1}$, where $0<\delta<1/2$ is a constant to be specified later. Based on $w_1,\ldots, w_N$, we can construct $B_1,\ldots, B_N$ in the following way
$$
B_l=\begin{pmatrix}
	w_l & 0\\
	0 & I_{(K-1)\times (K-1)}\\
	0 & 0
\end{pmatrix}\times \sqrt{\nu} I_{K\times K}.
$$
By the construction, it is clear that $B_l^TB_l=\nu I_{K\times K}$ so $B_l\in \Bcal(\nu)$. The definition suggest that $M_l$ is 
$$
M_l=\nu\times \begin{pmatrix}
	w_l & 0\\
	0 & I_{(K-1)\times (K-1)}\\
	0 & 0
\end{pmatrix} \times 
\begin{pmatrix}
	w_l^T & 0 &0\\
	0 & I_{(K-1)\times (K-1)} &0
\end{pmatrix}
$$
Then, we can know that, for any $l_1\ne l_2$,
\begin{align*}
	\|M_{l_1}-M_{l_2}\|&=
	\nu\sqrt{1-(w_{l_1}^Tw_{l_2})^2}\ge \nu\sqrt{1-\left(1-{\delta^2\over 2}\right)^2}\ge {\delta\nu\over 2}
\end{align*}
Here, we use the fact
$$
1-2\delta^2\le w_{l_1}^Tw_{l_2}\le 1-{\delta^2\over 2}
$$
because $\lfloor d/2\rfloor\le \|u_{l_1}-u_{l_2}\|^2\le 2d$ and 
$$
w_{l_1}^Tw_{l_2}=1-{1\over 2}\|w_{l_1}-w_{l_2}\|^2=1-{\delta^2\over 2\lfloor d/2\rfloor}\|u_{l_1}-u_{l_2}\|^2.
$$
If we $\mu_l$ corresponds the distribution of each hypothesis $\PP_l$, we can know these hypothesis are separated by $\delta/2$ in terms of the parameter of interest $M$.  

\paragraph{Step 2: bounding KL divergence} We now want bound the KL divergence ${\rm KL}(\PP_{l_1}||\PP_{l_2})$. Since each $\bX_i$ follow normal distribution, we then have
\begin{align*}
	{\rm KL}(\PP_{l_1}||\PP_{l_2})&=m{\rm KL}(N(0,\Sigma(B_{l_1}))||N(0,\Sigma(B_{l_2})))\\
	&={m \over 2}\left({\rm Tr}(\left(\Sigma(B_{l_2})\right)^{-1}\Sigma(B_{l_1}))-nd\right)
\end{align*}
By Woodbury matrix identity, we have
$$
\left(\Sigma(B)\right)^{-1}=(\sigma^2I+\bB\bB^T)^{-1}={1\over \sigma^{2}}I-{1\over n\nu\sigma^2+\sigma^4}\bB\bB^T
$$
which suggests 
$$
\left(\Sigma(B_{l_2})\right)^{-1}\Sigma(B_{l_1})=I+{1 \over \sigma^2}\bB_{l_1}\bB_{l_1}^T-{1\over n\nu+\sigma^2}\bB_{l_2}\bB_{l_2}^T-{1\over n\nu\sigma^2+\sigma^4}\bB_{l_2}\bB_{l_2}^T\bB_{l_1}\bB_{l_1}^T
$$
Since
$$
{\rm Tr}\left(\bB_{l_1}\bB_{l_1}^T\right)=Kn\nu,\qquad {\rm Tr}\left(\bB_{l_2}\bB_{l_2}^T\right)=Kn\nu
$$
and
$$
{\rm Tr}\left(\bB_{l_2}\bB_{l_2}^T\bB_{l_1}\bB_{l_1}^T\right)=(K-1+(w_{l_1}^Tw_{l_2})^2)n^2\nu^2,
$$
we then have
\begin{align*}
	{\rm Tr}(\left(\Sigma(B_{l_2})\right)^{-1}\Sigma(B_{l_1}))&=nd+{Kn\nu\over \sigma^2}-{Kn\nu\over n\nu+\sigma^2}-{(K-1+(w_{l_1}^Tw_{l_2})^2)n^2\nu^2\over n\nu\sigma^2+\sigma^4}\\
	&=nd+{(1-(w_{l_1}^Tw_{l_2})^2)n^2\nu^2\over n\nu\sigma^2+\sigma^4}\\
	&\le nd+{(1-(1-2\delta^2)^2)n^2\nu^2\over n\nu\sigma^2+\sigma^4}\\
	&\le nd+{4\delta^2n^2\nu^2\over n\nu\sigma^2+\sigma^4}
\end{align*}
Therefore, we have 
$$
{\rm KL}(\PP_{l_1}||\PP_{l_2})\le {2m\delta^2n^2\nu^2\over n\nu\sigma^2+\sigma^4}
$$

\paragraph{Step 3: application of Fano's lemma} In particular, we apply Lemma 3 in \cite{yu1997assouad}
\begin{align*}
	\inf_{\hat{M}^\ast}\sup_{B\in \Bcal(\nu)}\EE\left(\|M^\ast-\hat{M}^\ast\|\right)&\ge \inf_{\hat{M}^\ast}\sup_{B=B_l,l=1\ldots, N}\EE\left(\|M^\ast-\hat{M}^\ast\|\right)\\
	&\ge {\delta\nu\over 4}\left(1-\left({2m\delta^2n^2\nu^2\over n\nu\sigma^2+\sigma^4}+\log 2\right)\bigg/\log N\right)\\
	&\ge {\delta\nu\over 4}\left(1-16\left({2m\delta^2n^2\nu^2\over n\nu\sigma^2+\sigma^4}+\log 2\right)\bigg/d\right)
\end{align*}
If we choose 
$$
\delta=\sqrt{d(n\nu\sigma^2+\sigma^4)\over 256mn^2\nu^2}
$$
then we can finish the proof for \eqref{eq:lowerR1}.

Now, we turn to the proof for \eqref{eq:lowerR2}. To prove \eqref{eq:lowerR2}, we assume $n$ is infinity, i.e. $E_i=\EE(X_{i,j}|Z_i)=BZ_i$ is completely known and the entries of $B$ between $(K+1)$'s row and $d$'s row are zero. In other words, we work on the following simpler problem: we observe $E_1,\ldots, E_m\in \RR^K$ which follows normal distribution, i.e. $E_i\sim N(0,\Sigma)$, where $\Sigma\in \RR^{K\times K}$ and the goal is to estimate $M^\ast=\Sigma$. We consider a similar construction in \cite{cai2010optimal}. Without loss of generality, we assume $K$ is an even number. 

\paragraph{Step 1: hypothesis construction}
Specifically, for any $u\in \{0,1\}^{K/2}$, we define
$$
\Sigma(u)=2\nu I+{\tau \nu \over \sqrt{mK}}\sum_{l=1}^{K/2}u_lT_l.
$$
where $\tau<1/2$ is some small constant and $T_l=(t_{i,j,l})_{K\times K}$ is a matrix such that 
$$
t_{i,j,l}=I(i=l\ {\rm and}\ l+1\le j\le K)+I(j=l\ {\rm and}\ l+1\le i\le K).
$$
For any given $u$, we have 
$$
\left\|{\tau \over \sqrt{mK}}\sum_{l=1}^{K/2}u_lT_l\right\|\le \left\|{\tau \over \sqrt{mK}}\sum_{l=1}^{K/2}u_lT_l\right\|_F\le \sqrt{K^2\cdot {\tau^2\nu^2\over mK}}=\tau\nu\sqrt{ K\over m}
$$
Since $m>K$, we basically show that the eigenvalues of $\Sigma(u)$ are between $(2-\tau)\nu$ and $(2+\tau)\nu$. If we decompose $\Sigma(u)=B(u)B(u)^T$, then we can know that $B(u)\in \Bcal(\nu)$. For any $u\ne u'\in \{0,1\}^{K/2}$, 
\begin{align*}
	\Sigma(u)-\Sigma(u')&={\tau \nu \over \sqrt{mK}}\sum_{l=1}^{K/2}(u_l-u'_l)T_l.
\end{align*}
Let $z$ be a vector in $\RR^K$ such that $z_l=1$ when $K/2<l\le K$ and $z_l=0$ when $1\le l\le K/2$. We write $z'=(M(u)-M(u'))z$. If we write $H(u,u')$ as Hamming distance between $u$ and $u'$, there are at least $H(u,u')$ entries in $z'$ such that 
$$
|z'_l|\ge {\tau \sqrt{K}\nu\over 2\sqrt{m}}.
$$
As $\|z\|^2=K/2$, 
$$
\left\|M(u)-M(u')\right\|^2\ge {\|z'\|^2 \over \|z\|^2}\ge {H(u_1,u_2)\tau^2\nu^2\over 18m}.
$$

\paragraph{Step 2: application of Assouad’s lemma} The Lemma 6 in \cite{cai2010optimal} suggests that 
$$
\min_{H(u_1,u_2)=1}\|\PP_{u_1}\wedge \PP_{u_2}\|\ge c',
$$
where $\PP_{u}$ is the joint distribution of $G_1,\ldots, G_m$ and $c'$ is small constant. We are now ready to apply Lemma 4 in \cite{cai2010optimal} to obtain 
$$
\min_{\hat{M}^\ast}\max_{B\in B(\nu)} \left\|\hat{M}^\ast-M^\ast\right\|^2\ge \min_{\hat{M}^\ast}\max_{u\in \{0,1\}^{K/2}} \left\|\hat{M}^\ast-M(u)\right\|^2\ge c{K\nu^2\over m}
$$
We can then complete the proof for \eqref{eq:lowerR2}. 

\subsubsection{Proof for $\|\hat{M}^{\ast\ast}-M^{\ast\ast}\|$} 

The proof for $\hat{M}^{\ast\ast}$ is almost the same with the one for proving \eqref{eq:lowerR1}. We can use the same construction for $B_l$, $l=1,\ldots, N$, but define $M_l$ as
$$
M_l=\begin{pmatrix}
	w_l & 0\\
	0 & I_{(K-1)\times (K-1)}\\
	0 & 0
\end{pmatrix} \times 
\begin{pmatrix}
	w_l^T & 0 &0\\
	0 & I_{(K-1)\times (K-1)} &0
\end{pmatrix}
$$
Then, we can know that, for any $l_1\ne l_2$,
\begin{align*}
	\|M_{l_1}-M_{l_2}\|&=
	\sqrt{1-(w_{l_1}^Tw_{l_2})^2}\ge \sqrt{1-\left(1-{\delta^2\over 2}\right)^2}\ge {\delta\over 2}.
\end{align*}
With the same analysis, we can show that 
\begin{align*}
	\inf_{\hat{M}^{\ast\ast}}\sup_{B\in \Bcal(\nu)}\EE\left(\|M^{\ast\ast}-\hat{M}^{\ast\ast}\|\right)\ge  {\delta\over 4}\left(1-16\left({2m\delta^2n^2\nu^2\over n\nu\sigma^2+\sigma^4}+\log 2\right)\bigg/d\right)
\end{align*}
We can still choose 
$$
\delta=\sqrt{d(n\nu\sigma^2+\sigma^4)\over 256mn^2\nu^2}
$$
and obtain 
$$
\inf_{\hat{M}^{\ast\ast}}\sup_{B\in \Bcal(\nu)}\EE\left(\|M^{\ast\ast}-\hat{M}^{\ast\ast}\|\right)\ge {c\over \sqrt{m}}\left[\sigma{\sqrt{d}\over \sqrt{n\nu}}+\sigma^2{\sqrt{d}\over n\nu}\right].
$$

	\subsection{Proof of Theorem~\ref{thm:downstream}}
	%%%%%%%%%%%%%%%%%%%%%%%%%%%%%%%%%%%%%%%%%%%%%%%
	
	%%%%%%%%%%%%%%%%%%%%%%%%%%%%%%%%%%%%%%%%%%%%%%%
	\subsubsection{$k$-nearest neighbor classification}
	%%%%%%%%%%%%%%%%%%%%%%%%%%%%%%%%%%%%%%%%%%%%%%%
	
	\paragraph{Proof for $D_{\hat{M}^\ast}$}
	If we adopt the distance $D_{\hat{M}^\ast}$ in $k$-NN, the main difference is the shape of the ball $\Bcal_{D_{\hat{M}^\ast}}(x, r_{2k/s})=\{y:D_{\hat{M}^\ast}(x,y)\le r_{2k/s}\}$ is a bit different from that of the ball $\Bcal_{D^\ast}(x, r_{2k/s})=\{y:D^\ast(x,y)\le r_{2k/s}\}$, compared with the proof of Theorem~\ref{thm:knn}. We write the support of $X$ as $\Xcal\subset \RR^d$, which is a compact set, i.e., $\sup_{x\in \Xcal}\|x\|\le O$ for some constant $O$. Given a radius $r$ and $x\in \Xcal$, if $y\in \Bcal_{D_{\hat{M}^\ast}}(x, r)$, i.e., $(y-x)^T\hat{M}^\ast(y-x)\le r$, then we can know that 
	$$
	(y-x)^TM^\ast(y-x)=(y-x)^T\hat{M}^\ast(y-x)+(y-x)^T(M^\ast-\hat{M}^\ast)(y-x)\le r+O^2{\delta^\ast}.
	$$
	This means
	$$
	\Bcal_{D_{\hat{M}^\ast}}(x, r)\cap\Xcal\subset \Bcal_{D^\ast}(x, r+O^2\delta^\ast)\cap\Xcal.
	$$
	We can choose $r_{2k/s}\le C\lambda_K(\kappa^{K-1}k/s)^{1/K}$ to make sure $\Bcal_{D_{\hat{M}^\ast}}(x, r_{2k/s})=2k/s$. When $\delta^\ast\lesssim \lambda_K(\kappa^{K-1}k/s)^{1/K}$, then $O^2\delta^\ast\le C\lambda_K(\kappa^{K-1}k/s)^{1/K}$. So we can know that 
	$$
	\Bcal_{D_{\hat{M}^\ast}}(x, r_{2k/s})\cap\Xcal\subset \Bcal_{D^\ast}(x, C\lambda_K(\kappa^{K-1}k/s)^{1/K})\cap\Xcal.
	$$
	This leads to
	$$
	|\hat{\eta}^\ast(x)-\eta(x)|\le \sup_{y\in \Bcal_{D_{\hat{M}^\ast}}(x, r_{2k/s})}\left|\eta(y)-\eta(x)\right|\le L(C(\kappa^{K-1}k/s)^{1/K})^\alpha.
	$$
	Then we can follow the same analysis as in the proof of Theorem~\ref{thm:knn} and conclude
	$$
	r(D_{\hat{M}^\ast}) \le  C(s/\kappa^{K-1})^{-\alpha(1+\beta)/(2\alpha+K)}.
	$$
	
	\paragraph{Proof for $D_{\hat{M}^{\ast\ast}}$}
	
	We can adopt a similar strategy for $D_{\hat{M}^{\ast\ast}}$. In particular, we can show that when $\delta^{\ast\ast}\lesssim (k/s)^{1/K}$, 
	$$
	\Bcal_{D_{\hat{M}^{\ast\ast}}}(x, r_{2k/s})\cap\Xcal\subset \Bcal_{D^{\ast\ast}}(x, C(k/s)^{1/K})\cap\Xcal.
	$$
	Therefore, we can conclude
	$$
	r(D_{\hat{M}^{\ast\ast}}) \le  Cs^{-\alpha(1+\beta)/(2\alpha+K)}.
	$$

	%%%%%%%%%%%%%%%%%%%%%%%%%%%%%%%%%%%%%%%%%%%%%%%
	\subsubsection{Two-sample testing}
	%%%%%%%%%%%%%%%%%%%%%%%%%%%%%%%%%%%%%%%%%%%%%%%
	
	\paragraph{Proof for $D_{\hat{M}^\ast}$} If we follow the same analysis in proof of Theorem~\ref{thm:twosample}, we can show that the required signal is 
	$$
	\|\hat{B}^T\mu\|^2\ge {C\over s}\|\hat{B}^T(BB^T+\Sigma)\hat{B}\|_F,
	$$
	where $\hat{B}\hat{B}^T=\hat{M}^{\ast}$ and $\hat{B}=\argmin_{EE^T=\hat{M}^{\ast}}\|E-B\|$. By definition, we have
	$$
	\left|\mu^T\hat{B}\hat{B}^T\mu-\mu^TBB^T\mu\right|\le \|\hat{B}\hat{B}^T-BB^T\|\|\mu\|^2\le \delta^\ast\|\mu\|^2.
	$$
	Since $\delta^\ast=o(\lambda_K)$, we have $\|\hat{B}^T\mu\|^2\ge (1+o(1))\lambda_K\|\mu\|^2$. Observe that
	\begin{align*}
		\|\hat{B}^T(BB^T+\Sigma)\hat{B}\|_F&\le \|\hat{B}^T(BB^T+\Sigma)B\|_F+\|\hat{B}^T(BB^T+\Sigma)\|\|\hat{B}-B\|_F\\
		&\le \|\hat{B}^T(BB^T+\Sigma)B\|_F+C(\lambda_1+\delta^\ast)^{3/2}\sqrt{K}(\delta^\ast)^{1/2}\\
		&\le \|B^T(BB^T+\Sigma)B\|_F+2C(\lambda_1+\delta^\ast)^{3/2}\sqrt{K}(\delta^\ast)^{1/2}\\
		&\le \|B^T(BB^T+\Sigma)B\|_F+(2+o(1))C\sqrt{K}\lambda_1^{3/2}(\delta^\ast)^{1/2}
	\end{align*}
	Since $\|B^T(BB^T+\Sigma)B\|_F\ge \sqrt{K}\lambda_K^2$, we can know that 
	$$
	\|\hat{B}^T(BB^T+\Sigma)\hat{B}\|_F\le C\|B^T(BB^T+\Sigma)B\|_F.
	$$
	This suggests that a sufficient condition for $\|\hat{B}^T\mu\|^2\ge {C}\|\hat{B}^T(BB^T+\Sigma)\hat{B}\|_F/s$ is 
	$$
	\|\mu\|^2\ge {C'\over s}\|B^T(BB^T+\Sigma)B\|_F
	$$ 
	for a large enough constant $C'$. We complete the proof for 
	$$
	r(D_{\hat{M}^\ast},\epsilon)\lesssim {\|B^T(BB^T+\Sigma)B\|_F^{1/2}\over \sqrt{s}}.
	$$
	
	\paragraph{Proof for $D_{\hat{M}^{\ast\ast}}$} We work on $D_{\hat{M}^{\ast\ast}}$ and define $\hat{U}=\argmin_{EE^T=\hat{M}^{\ast\ast}}\|E-U\|$. So 
	$$
	\left|\mu^T\hat{U}\hat{U}^T\mu-\mu^TUU^T\mu\right|\le \|\hat{U}\hat{U}^T-UU^T\|\|\mu\|^2\le {\delta^{\ast\ast}}\|\mu\|^2.
	$$
	and 
	$$
	\|\hat{U}^T(BB^T+\Sigma)\hat{U}\|_F\le \|U^T(BB^T+\Sigma)U\|_F+C\sqrt{K}\lambda_1(\delta^{\ast\ast})^{1/2}
	$$
	Because $\|U^T(BB^T+\Sigma)U\|_F\ge \sqrt{K}\lambda_K$, we can conclude
	$$
	\|\hat{U}^T(BB^T+\Sigma)\hat{U}\|_F\le (1+C(\delta^{\ast\ast})^{1/2})\|U^T(BB^T+\Sigma)U\|_F.
	$$
	Since $\delta^{\ast\ast}=o(1)$, we can know
	$$
	r(D_{\hat{M}^{\ast\ast}},\epsilon)\lesssim {\|U^T(BB^T+\Sigma)U\|_F^{1/2}\over \sqrt{s}}.
	$$

	%%%%%%%%%%%%%%%%%%%%%%%%%%%%%%%%%%%%%%%%%%%%%%%
	\subsubsection{$k$-means clustering}
	%%%%%%%%%%%%%%%%%%%%%%%%%%%%%%%%%%%%%%%%%%%%%%%
	
	\paragraph{Proof for $D_{\hat{M}^\ast}$} 
	
	If we follow the same procedure in proof of Theorem~\ref{thm:kmeans}, we can know that 
	$$
	r(D_{\hat{M}^\ast})\le \PP\left((1+a_s)\|\hat{B}^T\mu\|^2\le 2\langle \hat{B}^T\xi_1,\hat{B}^T\mu+\hat{B}^T\xi_2-\hat{B}^T\xi_3\rangle\right),
	$$
	where $\xi_1\sim N(0,\Sigma_\pm)$, $\xi_2\sim N(0,\Sigma_\pm/s_+)$, $\xi_3\sim N(0,\Sigma_\pm/s_-)$ and $a_s\to 0$. Observe that
	$$
	\left|\|\hat{B}^T\mu\|^2-\|B^T\mu\|^2\right|\le \|\hat{B}\hat{B}^T-BB^T\|\|\mu\|^2\le \delta^\ast\|\mu\|^2\le {\delta^\ast\over \lambda_K}\|B^T\mu\|^2
	$$
	$\delta^\ast=o(\lambda_K)$ suggests $\|\hat{B}^T\mu\|^2=(1+o(1))\|B^T\mu\|^2$. We define the following notations
	$$
	\Delta_{B,1}=\xi_1^T(\hat{B}\hat{B}^T-BB^T)\mu\qquad {\rm and}\qquad \Delta_{B,2}=\xi_1^T(\hat{B}\hat{B}^T-BB^T)(\xi_2-\xi_3).
	$$
	Note that $\EE(\Delta_{B,1})=0$ and $\Var(\Delta_{B,1})\le (\delta^\ast)^2\|\Sigma_\pm\|\|\mu\|^2=o(\|B^T\mu\|^4)$. This suggests that $\Delta_{B,1}=o_p(\|B^T\mu\|^2)$. Similarly, we can show that $\EE(\Delta_{B,2})=0$ and 
	$$
	\Var(\Delta_{B,2})\le {C\over s}\|\Sigma_\pm^{1/2}(\hat{B}\hat{B}^T-BB^T)\Sigma_\pm^{1/2}\|_F^2\le {C\over s}\|\Sigma_\pm\|_F^2\|\hat{B}\hat{B}^T-BB^T\|^2\le {C(\delta^\ast)^2\|B^T\mu\|^4}
	$$
	Here, we use $\|B^T\mu\|^2\gg \Psi(\Lambda^2+B^T\Sigma_\pm B)$. So we can also conclude that $\Delta_{B,2}=o_p(\|B^T\mu\|^2)$. Putting all together suggest that 
	$$
	r(D_{\hat{M}^\ast})\le \PP\left((1+b_s)\|B^T\mu\|^2\le 2\langle B^T\xi_1,B^T\mu+B^T\xi_2-B^T\xi_3\rangle\right)
	$$
	where $b_s\to 0$ is different from $a_s$. Therefore, we can still obtain 
	$$
	r(D_{\hat{M}^\ast})\le \Gamma(1+o(1),B^T\mu,B^T\Sigma_\pm B).
	$$
	
	\paragraph{Proof for $D_{\hat{M}^{\ast\ast}}$} We use the similar strategy to work on $D_{\hat{M}^{\ast\ast}}$. For $D_{\hat{M}^{\ast\ast}}$, we have 
	$$
	r(D_{\hat{M}^{\ast\ast}})\le \PP\left((1+a_s)\|\hat{U}^T\mu\|^2\le 2\langle \hat{U}^T\xi_1,\hat{U}^T\mu+\hat{U}^T\xi_2-\hat{U}^T\xi_3\rangle\right).
	$$
	for some sequence $a_s\to 0$. If we apply the same analysis for $D_{\hat{M}^\ast}$, we can show that $\|\hat{U}^T\mu\|^2=(1+o(1))\|\mu\|^2$, $\Delta_{U,1}=o_p(\|\mu\|^2)$ and $\Delta_{U,2}=o_p(\|\mu\|^2)$, where 
	$$
	\Delta_{U,1}=\xi_1^T(\hat{U}\hat{U}^T-UU^T)\mu\qquad {\rm and}\qquad \Delta_{U,2}=\xi_1^T(\hat{U}\hat{U}^T-UU^T)(\xi_2-\xi_3).
	$$
	So we can conclude that 
	$$
	r(D_{\hat{M}^\ast})\le \PP\left((1+b_s)\|\mu\|^2\le 2\langle U^T\xi_1,\mu+U^T\xi_2-U^T\xi_3\rangle\right)
	$$
	for some $b_s\to 0$. This complete the proof. 
	
	%%%%%%%%%%%%%%%%%%%%%%%%%%%%%%%%%%%%%%%%%%%%%%%
	\subsubsection{Sample identification}
	%%%%%%%%%%%%%%%%%%%%%%%%%%%%%%%%%%%%%%%%%%%%%%%
	
	\paragraph{Proof for $D_{\hat{M}^\ast}$} Following the same strategy in proof of Theorem~\ref{thm:identification}, we can show that we can control type II error if 
	$$
	\|\hat{B}^TB(Z_1-Z_2)\|^2\ge C\|\hat{B}^T\Sigma \hat{B}\|_F.
	$$
	We observe that 
	$$
	|\|\hat{B}^TB(Z_1-Z_2)\|^2-\|\Lambda(Z_1-Z_2)\|^2|\le \|\hat{B}\hat{B}^T-BB^T\|\|B(Z_1-Z_2)\|^2\le \delta^\ast \|B(Z_1-Z_2)\|^2.
	$$
	Hence,
	$$
	\|\hat{B}^TB(Z_1-Z_2)\|^2\ge (1+o(1))\lambda_K^2 \|Z_1-Z_2\|^2.
	$$
	Note that
	\begin{align*}
		\|\hat{B}^T\Sigma \hat{B}\|_F&\le \|\hat{B}^T\Sigma B\|_F+ \|\hat{B}^T\Sigma\|\|\hat{B}-B\|_F\\
		&\le  \|\hat{B}^T\Sigma B\|_F+ C(\lambda_1+\delta^\ast)^{1/2}\|\Sigma\|\sqrt{K}(\delta^\ast)^{1/2}\\
		&\le  \|B^T\Sigma B\|_F+ 2C(\lambda_1+\delta^\ast)^{1/2}\|\Sigma\|\sqrt{K}(\delta^\ast)^{1/2}.
	\end{align*}
	Since $\lambda_d(\Sigma)\ge c\|\Sigma\|$, we can know $\|B^T\Sigma B\|_F\ge c\sqrt{K}\|\Sigma\|\lambda_K$, which leads to
	$$
	\|\hat{B}^T\Sigma \hat{B}\|_F\le (1+o(1))\|B^T\Sigma B\|_F.
	$$
	Therefore, we can control type II error if 
	$$
	\lambda_K^2\|Z_1-Z_2\|^2\ge C\|B^T\Sigma B\|_F.
	$$
	
	\paragraph{Proof for $D_{\hat{M}^{\ast\ast}}$} We now work on $D_{\hat{M}^{\ast\ast}}$ and the proof is similar to the one for $D_{\hat{M}^\ast}$. We still follow the proof of Theorem~\ref{thm:identification} and show that we can show that we can control type II error if 
	$$
	\|\hat{U}^TB(Z_1-Z_2)\|^2\ge C\|\hat{U}^T\Sigma \hat{U}\|_F.
	$$
	So, we can show that 
	$$
	\|\hat{U}^TB(Z_1-Z_2)\|^2\ge (1+o(1))\lambda_K \|Z_1-Z_2\|^2
	$$
	and 
	$$
	\|\hat{U}^T\Sigma \hat{U}\|_F\le (1+o(1))\|U^T\Sigma U\|_F.
	$$
	So, we now conclude that we can control type II error provided
	$$
	\lambda_K\|Z_1-Z_2\|^2\ge C\|U^T\Sigma U\|_F.
	$$

	%%%%%%%%%%%%%%%%%%%%%%%%%%%%%%%%%%%%%%%%%%%%%%%
	\subsection{Proof of Theorem~\ref{thm:identification}}
	%%%%%%%%%%%%%%%%%%%%%%%%%%%%%%%%%%%%%%%%%%%%%%%
	
	\subsubsection{Upper bound}
	As $T_D$ is chosen as upper $\epsilon/2$-quantile of $D(X_{i,1},X_{i,2})$, the we can control type I error at $\epsilon/2$ level no matter what distance is used. Now, we work on type II error. Because $\epsilon_i\sim N(0,\Sigma)$, under null hypothesis, we have 
	$$
	\|X_1-X_2\|^2=\|\epsilon_1-\epsilon_2\|^2\sim 2\sum_{l=1}^da_lZ_l^2,
	$$ 
	where $Z_l$ follow independent standard normal distribution and $a_1\ge \ldots \ge a_d$ are eigenvalues of $\Sigma$. By Lemma 1 in \cite{laurent2000adaptive}, we have 
	$$
	\PP\left(\sum_{l=1}^dZ_l^2>\sum_{l=1}^da_l+2\sqrt{(\sum_l a_l^2)t}+2a_1t\right)\le \exp(-t). 
	$$
	Thus, we can know $T_{\|\cdot\|^2}\ge 2(\sum_{l=1}^da_l+C\sqrt{\sum_l a_l^2})$. Under the alternative hypothesis, 
	$$
	\|X_1-X_2\|^2=\|B(Z_1-Z_2)\|^2+2(Z_1-Z_2)^TB^T(\epsilon_1-\epsilon_2)+\|\epsilon_1-\epsilon_2\|^2.
	$$ 
	Clearly, $\|B(Z_1-Z_2)\|^2\ge \lambda_K\|Z_1-Z_2\|^2$. As 
	$$
	(Z_1-Z_2)^TB^T(\epsilon_1-\epsilon_2)\sim N(0, V),
	$$ 
	where $V=(Z_1-Z_2)^TB^T\Sigma B(Z_1-Z_2)$, there exist a constant $C$ such that
	$$
	\PP\left(2(Z_1-Z_2)^TB^T(\epsilon_1-\epsilon_2)<-C\sqrt{V}\right)\le \epsilon/4.
	$$
	An application of Lemma 1 in \cite{laurent2000adaptive} suggests 
	$$
	\PP\left(\|\epsilon_1-\epsilon_2\|^2<2 \left(\sum_{l=1}^da_l-C\sqrt{\sum_l a_l^2}\right)\right)\le \epsilon/4.
	$$
	Because $V\le \|Z_1-Z_2\|^2\lambda_1a_1\le \|Z_1-Z_2\|^2\sqrt{\sum_l a_l^2}$, we can control type II error at $\epsilon/2$ level when 
	$$
	\lambda_K\|Z_1-Z_2\|^2\ge C\sqrt{\sum_l a_l^2}+C\|Z_1-Z_2\|\left(\sum_l a_l^2\right)^{1/4}.
	$$
	If we note $\sum_l a_l^2=\|\Sigma\|_F^2$, this finishes the proof for 
	$$
	r(\|\cdot\|^2,\epsilon)\le C{\|\Sigma\|_F^{1/2}\over \sqrt{\lambda_K}}.
	$$
	
	Next, we can show the results for $D^\ast$ in a similar way as Euclidean distance. Under the null hypothesis, we can show 
	$$
	T_{D^\ast}\ge 2\left({\rm Tr}(B^T\Sigma B) +C\|B^T\Sigma B\|_F\right).
	$$
	Similarly, under alternative hypothesis, 
	\begin{align*}
		&D^\ast(X_1,X_2)\\
		=&\|B^T(X_1-X_2)\|^2\\
		=&\|B^TB(Z_1-Z_2)\|^2+2(\epsilon_1-\epsilon_2)^TBB^TB(Z_1-Z_2)+\|B^T(\epsilon_1-\epsilon_2)\|^2.
	\end{align*}
	Thus, we can control type II error at $\epsilon/2$ level if 
	$$
	\lambda_K^2\|Z_1-Z_2\|^2\ge C\|B^T\Sigma B\|_F.
	$$
	for a large enough constant $C$. Thus, we can conclude that
	$$
	r(D^\ast,\epsilon)\le C{\|B^T\Sigma B\|_F^{1/2}\over \lambda_K}.
	$$
	Finally, we can work on $D^{\ast\ast}$ in a exact the same way to show
	$$
	r(D^{\ast\ast},\epsilon)\le C{\|U^T\Sigma U\|_F^{1/2}\over \sqrt{\lambda_K}}.
	$$
	
	\subsubsection{Lower bound} 
	
	We now work on the lower bound. The main idea of lower bound proof is to derive the type II error under local alternative hypothesis 
	$$
	\tilde{H}_1(r)=\left\{Z_1-Z_2=re_K\right\}.
	$$
	
	We first work on Euclidean distance when $r=o({\|\Sigma\|_F^{1/2}/\sqrt{\lambda_K}})$. We decompose $\|X_1-X_2\|^2$ into three parts
	$$
	\|X_1-X_2\|^2=\underbrace{\|B(Z_1-Z_2)\|^2}_{E_1}+\underbrace{2(Z_1-Z_2)^TB^T(\epsilon_1-\epsilon_2)}_{E_2}+\underbrace{\|\epsilon_1-\epsilon_2\|^2}_{E_3}.
	$$ 
	Because $Z_1-Z_2=re_K$, $E_1=\lambda_Kr^2$ and ${\rm Var}(E_2)=4\lambda_Kr^2u_K^T\Sigma u_K$. Because $r=o({\|\Sigma\|_F^{1/2}/\sqrt{\lambda_K}})$, we can know that $E_1+E_2=o_p(\|\Sigma\|_F)$. Since the proof in upper bound shows that 
	$$
	T_{\|\cdot\|^2}-2{\rm Tr}(\Sigma)\ge C\|\Sigma\|_F,
	$$
	we can conclude that
	$$
	\PP\left(\|X_1-X_2\|^2\ge T_{\|\cdot\|^2}\middle|\tilde{H}_1(r)\right)=\PP\left({E_3-2{\rm Tr}(\Sigma)\over \|\Sigma\|_F}\ge {T_{\|\cdot\|^2}-2{\rm Tr}(\Sigma)\over \|\Sigma\|_F}(1+o(1)) \middle|\tilde{H}_1(r)\right)\to \alpha. 
	$$
	
	We next work on $D^\ast$ when $r=o({\|B^T\Sigma B\|_F^{1/2}/\lambda_K})$. With a similar strategy, we also decompose $\|B^T(X_1-X_2)\|^2$ as $E_1$, $E_2$ and $E_3$
	$$
	\|B^T(X_1-X_2)\|^2=\underbrace{\|B^TB(Z_1-Z_2)\|^2}_{E_1}+\underbrace{2(\epsilon_1-\epsilon_2)^TBB^TB(Z_1-Z_2)}_{E_2}+\underbrace{\|B^T(\epsilon_1-\epsilon_2)\|^2}_{E_3}
	$$ 
	and show that $E_1+E_2=o_p(\|B^T\Sigma B\|_F)$. So we can also have
	$$
	\PP\left(D^\ast(X_1,X_2)\ge T_{D^\ast}\middle|\tilde{H}_1(r)\right)\to \alpha. 
	$$
	The results for $D^{\ast\ast}$ can be proved in a same way. 
	
	%%%%%%%%%%%%%%%%%%%%%%%%%%%%%%%%%%%%%%%%%%%%%%%
	\subsection{Proof of Theorem~\ref{thm:kmeans}}
	%%%%%%%%%%%%%%%%%%%%%%%%%%%%%%%%%%%%%%%%%%%%%%%
	
	Without loss of generality, we assume $\EE(X|Y=1)=\mu/2$ and $\EE(X|Y=-1)=-\mu/2$ and write $\epsilon_i=X_i-\EE(X_i|Y_i)$ rather than $\epsilon$ in  latent factor model in the proof. 
	
	\subsubsection{Upper bound}
	We first show the proof when the Euclidean distance is used. The analysis is similar with proof in \cite{lu2016statistical}.  Then, we discuss the case of Mahalanobis distance. In this proof, we define a generalized version of mis-clustering rate as
	$$
	h(\hat{Y},Y)=\max\left({ |\{i:\hat{Y}_i=-1, Y_i=1\}| \over \min(|\{i:\hat{Y}_i=-1\}|,|\{i:Y_i=1\}|)},{|\{i:\hat{Y}_i=1, Y_i=-1\}|\over \min(|\{i:\hat{Y}_i=1\}|,|\{i:Y_i=-1\}|)}\right).
	$$
	We also write $h^{(t)}=h(\hat{Y}^{(t)},Y)$. 
	In addition, we define the following events
	$$
	\Acal_{1,+}=\left\{\left\|\bar{\epsilon}_+-\bar{\epsilon}_+^\ast\right\|^2\le 3{h(\hat{Y},Y)\over \min(s_+,\tilde{s}_+)}\left({\rm Tr}(\Sigma_{\pm})+\sqrt{{\rm Tr}(\Sigma_{\pm}^2)s}+\|\Sigma_{\pm}\|s\right)\quad \forall \ \hat{Y}_i\right\},
	$$
	where $\bar{\epsilon}_+^\ast=\sum_{Y_i=1}\epsilon_i/s_{+}$ and $\bar{\epsilon}_+=\sum_{\hat{Y}_i=1}\epsilon_i/|\{i:\hat{Y}_i=1\}|$. Here, $\forall \ \hat{Y}_i$ means all possible realization of $\hat{Y}_i$. Similarly, we can define events $\Acal_{1,-}$ for $\bar{\epsilon}_+=\sum_{\hat{Y}_i=-1}\epsilon_i/|\{i:\hat{Y}_i=-1\}|$ and $\bar{\epsilon}_-^\ast=\sum_{Y_i=-1}\epsilon_i/s_{-}$. We also define 
	$$
	\Acal_{2}=\left\{s_+\|\bar{\epsilon}_+^\ast\|^2,s_-\|\bar{\epsilon}_-^\ast\|^2\le \left({\rm Tr}(\Sigma_{\pm})+\sqrt{{\rm Tr}(\Sigma_{\pm}^2)\log s}+\|\Sigma_{\pm}\|\log s \right)\right\},
	$$
	$$
	\Acal_{3}=\left\{\left\|{1\over s}\sum_i \epsilon_i\epsilon_i^T\right\|\le \|\Sigma_{\pm}\|\left(1+{d\over s}\right), \right\}
	$$
	and
	$$
	\Acal_{4}=\left\{s/2-\sqrt{8s\log s}\le s_+\le s/2+\sqrt{8s\log s}\right\}.
	$$
	By Lemma~\ref{lm:kmeansevent}, we can know that
	$$
	\PP\left(\Acal_{1,+}\cap\Acal_{1,-}\cap \Acal_{2}\cap \Acal_{3}\cap \Acal_{4}\right)\ge 1-s^{-5}.
	$$
	We conduct the analysis conditioning on these events. 
	
	%We also define the convergence rate of center estimation in this proof
	%$$
	%\eta^{(t)}=\max\left(D\left(\hat{\mu}_+^{(t)},{\mu \over 2}\right), D\left(\hat{\mu}_-^{(t)},-{\mu \over 2}\right)\right).
	%$$
	
	\paragraph{Step 1a: centroid of cluster} We start with one step analysis. At the $t$th step, the centroids of each group are 
	$$
	\hat{\mu}_+^{(t)}={1\over s_+^{(t)}}\sum_{\hat{Y}_i^{(t)}=1}X_i\qquad {\rm and}\qquad \hat{\mu}_-^{(t)}={1\over s_-^{(t)}}\sum_{\hat{Y}_i^{(t)}=-1}X_i,
	$$
	where $s_+^{(t)}=|\{i:\hat{Y}_i^{(t)}=1\}|$ and $s_-^{(t)}=|\{i:\hat{Y}_i^{(t)}=-1\}|$. We can then decompose the error of centroids into two parts: the uncertainty due to noise and mis-clustering
	\begin{align*}
		\hat{\mu}_+^{(t)}-{\mu \over 2}&={1\over s_+^{(t)}}\sum_{\hat{Y}_i^{(t)}=1,Y_i=1}\left(X_i-{\mu \over 2}\right)+{1\over s_+^{(t)}}\sum_{\hat{Y}_i^{(t)}=1, Y_i=-1}\left(X_i-{\mu \over 2}\right)\\
		&={1\over s_+^{(t)}}\sum_{\hat{Y}_i^{(t)}=1,Y_i=1}\epsilon_i+{1\over s_+^{(t)}}\sum_{\hat{Y}_i^{(t)}=1, Y_i=-1}\left(-\mu+\epsilon_i\right)\\
		&={1\over s_+^{(t)}}\sum_{\hat{Y}_i^{(t)}=1}\epsilon_i-\mu{s_{+-}^{(t)}\over s_+^{(t)}}\\
		&=\bar{\epsilon}_+^{(t)}-\mu{s_{+-}^{(t)}\over s_+^{(t)}}.
	\end{align*}
	Here, $s_{+-}^{(t)}=|\{i:\hat{Y}_i^{(t)}=1, Y_i=-1\}|$ and $\bar{\epsilon}_+^{(t)}=\sum_{\hat{Y}_i^{(t)}=1}\epsilon_i/s_{+}^{(t)}$. Similarly, if we write $s_{-+}^{(t)}=|\{i:\hat{Y}_i^{(t)}=-1, Y_i=1\}|$ and $\bar{\epsilon}_-^{(t)}=\sum_{\hat{Y}_i^{(t)}=-1}\epsilon_i/s_{-}^{(t)}$, 
	$$
	\hat{\mu}_-^{(t)}+{\mu \over 2}=\bar{\epsilon}_-^{(t)}+\mu{s_{-+}^{(t)}\over s_-^{(t)}}.
	$$
	This clearly suggests that
	$$
	\hat{\mu}_+^{(t)}-\hat{\mu}_-^{(t)}=\mu\left(1-{s_{+-}^{(t)}\over s_+^{(t)}}-{s_{-+}^{(t)}\over s_-^{(t)}}\right)+\bar{\epsilon}_+^{(t)}-\bar{\epsilon}_-^{(t)}
	$$
	and
	$$
	\hat{\mu}_+^{(t)}+\hat{\mu}_-^{(t)}=\mu\left({s_{-+}^{(t)}\over s_-^{(t)}}-{s_{+-}^{(t)}\over s_+^{(t)}}\right)+\bar{\epsilon}_+^{(t)}+\bar{\epsilon}_-^{(t)}.
	$$
	Note that 
	\begin{align*}
		\left\|\hat{\mu}_+^{(t)}-{\mu \over 2}\right\|&=\left\|\bar{\epsilon}_+^{(t)}-\mu{s_{+-}^{(t)}\over s_+^{(t)}}\right\|\\
		&\le \left\|\bar{\epsilon}_+^{(t)}-\bar{\epsilon}_+^\ast\right\|+\|\bar{\epsilon}_+^\ast\|+h^{(t)}\|\mu\|.
	\end{align*}
	Since the analysis is conditioned on $\Acal_{1,+}$, $\Acal_2$ and $\|\mu\|^2>v(\|\Sigma_{\pm}\|+{\rm Tr}(\Sigma_{\pm})/s)$, we have
	$$
	\left\|\bar{\epsilon}_+^{(t)}-\bar{\epsilon}_+^\ast\right\|^2\le 3{h^{(t)}\over \min(s_+,\tilde{s}_+)}\left({\rm Tr}(\Sigma_{\pm})+\sqrt{{\rm Tr}(\Sigma_{\pm}^2)s}+\|\Sigma_{\pm}\|s\right)\le  9{h^{(t)}\over \min(s_+,\tilde{s}_+)}{s\|\mu\|^2\over v}
	$$
	and 
	$$
	\|\bar{\epsilon}_+^\ast\|\le {{\rm Tr}(\Sigma_{\pm})+\sqrt{{\rm Tr}(\Sigma_{\pm}^2)\log s}+\|\Sigma_{\pm}\|\log s \over s_+}\le {3\over s_+}{s\|\mu\|^2\over v}
	$$
	Since $h^{(t)}<1/2$ and $s_+\ge s/2-\sqrt{s\log s}$, we can conclude that
	$$
	\left\|\hat{\mu}_+^{(t)}-{\mu \over 2}\right\|\le \left(\sqrt{C\over v}+h^{(t)}\right)\|\mu\|.
	$$
	Similarly, we can know that 
	$$
	\left\|\hat{\mu}_-^{(t)}+{\mu \over 2}\right\|\le \left(\sqrt{C\over v}+h^{(t)}\right)\|\mu\|.
	$$
	
	\paragraph{Step 1b:  sample assignment} 
	We now bound $s_{-+}^{(t+1)}$ by the results from step 1a
	\begin{align*}
		s_{-+}^{(t+1)}&=\sum_{i}\bI(\hat{Y}_i^{(t+1)}=-1, Y_i=1)\\
		&=\sum_{Y_i=1}\bI\left(\left\|{\mu\over 2}+\epsilon_i-\hat{\mu}_-^{(t)}\right\|^2\le \left\|{\mu\over 2}+\epsilon_i-\hat{\mu}_+^{(t)}\right\|^2\right)\\
		&=\sum_{Y_i=1}\bI\left(\left\|{\mu\over 2}-\hat{\mu}_-^{(t)}\right\|^2- \left\|{\mu\over 2}-\hat{\mu}_+^{(t)}\right\|^2\le 2\langle \epsilon_i,\hat{\mu}_-^{(t)}-\hat{\mu}_+^{(t)}\rangle \right)\\
		&=\sum_{Y_i=1}\bI\left(\left\|\mu\right\|^2\left(1-\sqrt{C\over v}-h^{(t)}\right) \le 2\langle \epsilon_i,\mu+ \bar{\epsilon}_+^\ast-\bar{\epsilon}_-^\ast\rangle+\Delta_{i}^{(t)} \right),
	\end{align*} 
	where $\Delta_{i}^{(t)}$ is defined as
	$$
	\Delta_{i}^{(t)}=2\left\langle \epsilon_i, -\mu\left({s_{+-}^{(t)}\over s_+^{(t)}}+{s_{-+}^{(t)}\over s_-^{(t)}}\right)+\left(\bar{\epsilon}_+^{(t)}-\bar{\epsilon}_+^\ast\right)-\left(\bar{\epsilon}_-^{(t)}-\bar{\epsilon}_-^\ast\right)\right\rangle.
	$$
	Since $\bI(x<y)\le y^2/x^2$, we have 
	\begin{align*}
		s_{-+}^{(t+1)}&\le \sum_{Y_i=1}\bI\left(\left\|\mu\right\|^2\left(1-\sqrt{C\over v}-h^{(t)}\right)\le 2\langle \epsilon_i,\mu+ \bar{\epsilon}_+^\ast-\bar{\epsilon}_-^\ast\rangle+\Delta_{i}^{(t)} \right)\\
		&\le \sum_{Y_i=1}\bI\left(\left\|\mu\right\|^2\left(1-Cv^{-1/4}-h^{(t)}\right) \le 2\langle \epsilon_i,\mu+ \bar{\epsilon}_+^\ast-\bar{\epsilon}_-^\ast\rangle\right)+\bI\left(\|\mu\|^2/v^{1/4} \le \Delta_{i}^{(t)}\right)\\
		&\le \sum_{Y_i=1}\bI\left(\left\|\mu\right\|^2\left(1-Cv^{-1/4}-h^{(t)}\right) \le 2\langle \epsilon_i,\mu+ \bar{\epsilon}_+^\ast-\bar{\epsilon}_-^\ast\rangle\right)+\sum_{Y_i=1}{{\Delta_{i}^{(t)}}^2\sqrt{v}\over  \|\mu\|^4}.
	\end{align*}
	We bound $\Delta_{i}^{(t)}$ term conditioned on event $\Acal_{1,+}$, $\Acal_{1,-}$, and $\Acal_3$
	\begin{align*}
		\sum_{Y_i=1}{ {\Delta_{i}^{(t)}}^2 \sqrt{v} \over \|\mu\|^4}&\le {v\over \|\mu\|^4}\sum_{Y_i=1} \left[\left\langle \epsilon_i, -\mu\left({s_{+-}^{(t)}\over s_+^{(t)}}+{s_{-+}^{(t)}\over s_-^{(t)}}\right)\right\rangle^2+\left\langle \epsilon_i, \left(\bar{\epsilon}_+^{(t)}-\bar{\epsilon}_+^\ast\right)-\left(\bar{\epsilon}_-^{(t)}-\bar{\epsilon}_-^\ast\right)\right\rangle^2\right] \\
		&\le {2\sqrt{v}{h^{(t)}}^2\over \|\mu\|^4}\sum_{Y_i=1} \left\langle \epsilon_i, \mu\right\rangle^2+{2\sqrt{v}\over \|\mu\|^4}\left\|\left(\bar{\epsilon}_+^{(t)}-\bar{\epsilon}_+^\ast\right)-\left(\bar{\epsilon}_-^{(t)}-\bar{\epsilon}_-^\ast\right)\right\|^2\left\|\sum_{Y_i=1} \epsilon_i\epsilon_i^T\right\|\\
		&\le \left[{2\sqrt{v}{h^{(t)}}^2\over \|\mu\|^2}+{2\sqrt{v}\over \|\mu\|^4}\left\|\left(\bar{\epsilon}_+^{(t)}-\bar{\epsilon}_+^\ast\right)-\left(\bar{\epsilon}_-^{(t)}-\bar{\epsilon}_-^\ast\right)\right\|^2\right]\left\|\sum_{Y_i=1} \epsilon_i\epsilon_i^T\right\|\\
		&\le \left[2\sqrt{v}{h^{(t)}}^2+{6\sqrt{v}h^{(t)}\over \|\mu\|^2\min(s_+,\tilde{s}_+)}\left({\rm Tr}(\Sigma_{\pm})+\sqrt{{\rm Tr}(\Sigma_{\pm}^2)s}+\|\Sigma_{\pm}\|s\right)\right]{\|\Sigma_{\pm}\|\over \|\mu\|^2}\left(s+d\right)\\
		&\le \left({C{h^{(t)}}\over \sqrt{v}}+{C{h^{(t)}}\over v}\right)s\\
		&\le {Cs{h^{(t)}}\over \sqrt{v}}
	\end{align*}
	Here, we use the fact $\|\mu\|^2>v(\|\Sigma_{\pm}\|+{\rm Tr}(\Sigma_{\pm})/s)$ and $h^{(t)}\le 1$. The bound for $\Delta_{i}^{(t)}$ help yield
	$$
	s_{-+}^{(t+1)}\le \sum_{Y_i=1}\bI\left(\left\|\mu\right\|^2\left(1-Cv^{-1/4}-h^{(t)}\right) \le 2\langle \epsilon_i,\mu+ \bar{\epsilon}_+^\ast-\bar{\epsilon}_-^\ast\rangle\right)+{Cs{h^{(t)}}\over \sqrt{v}}.
	$$
	We can bound $s_{+-}^{(t+1)}$ in a similar argument and show, when $v$ is large enough,
	$$
	h^{(t+1)}\le {1\over s}\sum_{i}\bI\left(\left\|\mu\right\|^2\left(1-Cv^{-1/4}-h^{(t)}\right) \le 2\langle \epsilon_i,\mu+ \bar{\epsilon}_+^\ast-\bar{\epsilon}_-^\ast\rangle\right)+{h^{(t)}\over 4}.
	$$

	\paragraph{Step 1c:  multiple iterations}
	We first show that for any $0<\delta<1/2$,
	\begin{equation}
		\label{eq:invariant}
		{1\over s}\sum_{i}\bI\left(\left\|\mu\right\|^2\left(1-\delta\right) \le 2\langle \epsilon_i,\mu+ \bar{\epsilon}_+^\ast-\bar{\epsilon}_-^\ast\rangle\right)\le \exp\left(-{(1-2\delta)^2\|\mu\|^4\over 8\Gamma(\mu,\Sigma_{\pm})}\right)
	\end{equation}
	with probability at least $1-\exp(-\sqrt{v}\|\mu\|)$. Here, $\Gamma(\mu,\Sigma_{\pm})=\max\left(\mu^T\Sigma_{\pm}\mu,\|\Sigma_{\pm}\|_F^2/s,\|\Sigma_{\pm}\|\|\mu\|^2/\sqrt{s}\right)$. We can note that 
	\begin{align*}
		&{1\over s}\sum_{i}\bI\left(\left\|\mu\right\|^2\left(1-\delta\right)/2 \le \langle \epsilon_i,\mu+ \bar{\epsilon}_+^\ast-\bar{\epsilon}_-^\ast\rangle\right)\\
		\le & {1\over s}\sum_{i}\bI\left(\left\|\mu\right\|^2\delta_1\le \langle \epsilon_i,\mu\rangle\right)+{1\over s}\sum_{i}\bI\left(\left\|\mu\right\|^2(\delta_2+\delta_3) \le \langle \epsilon_i, \bar{\epsilon}_+^\ast-\bar{\epsilon}_-^\ast\rangle\right),
	\end{align*}
	where $\delta_1+\delta_2+\delta_3=(1-\delta)/2$ and $\delta_1$, $\delta_2$ and $\delta_3$ will be specified later.
	For the first term, we have 
	$$
	\EE\left({1\over s}\sum_{i}\bI\left(\left\|\mu\right\|^2\delta_1 \le \langle \epsilon_i,\mu\rangle\right)\right)\le \exp\left(-{\delta_1^2\|\mu\|^4\over 2\mu^T\Sigma_{\pm}\mu}\right)
	$$
	because $\langle \epsilon_i,\mu\rangle$ follows the normal distribution $N(0,\mu^T\Sigma_{\pm}\mu)$. 
	For the second term, we have 
	\begin{align*}
		&\EE\left({1\over s}\sum_{i}\bI\left(\left\|\mu\right\|^2(\delta_2+\delta_3) \le \langle \epsilon_i, \bar{\epsilon}_+^\ast-\bar{\epsilon}_-^\ast\rangle\right)\right)\\
		\le &\EE\left({1\over s}\sum_{i}\bI\left(\left\|\mu\right\|^2\delta_2 \le \langle \epsilon_i, \tilde{\epsilon}_{+,i}^\ast-\tilde{\epsilon}_{-,i}^\ast\rangle\right)\right)+\EE\left({1\over s}\sum_{i}\bI\left(\left\|\mu\right\|^2\delta_3 \le \langle \epsilon_i, \epsilon_i/\min(s_+,s_-)\rangle\right)\right)\\
		\le & \PP\left(\left\|\mu\right\|^2\delta_2 \le \langle \epsilon_i, \tilde{\epsilon}_{+,i}^\ast-\tilde{\epsilon}_{-,i}^\ast\rangle\right)+\PP\left(\left\|\mu\right\|^2\delta_3 \le \langle \epsilon_i, \epsilon_i/\min(s_+,s_-)\rangle\right)\\
		\le & \exp\left(-\min\left({s\delta_2^2\|\mu\|^4\over 8\|\Sigma_{\pm}\|_F^2}, {\sqrt{s}\delta_2\|\mu\|^2\over 4\|\Sigma_{\pm}\|} \right)\right)+\exp\left(-\min\left({s^2\delta_3^2\|\mu\|^4\over 8\|\Sigma_{\pm}\|_F^2}, {s\delta_3\|\mu\|^2\over 8\|\Sigma_{\pm}\|} \right)\right)
	\end{align*}
	$\tilde{\epsilon}_{+,i}^\ast$ and $\tilde{\epsilon}_{-,i}^\ast$ are just $\bar{\epsilon}_+^\ast$ and $\bar{\epsilon}_-^\ast$ terms by removing $\epsilon_i$. Here, we apply Lemma 1 in \cite{laurent2000adaptive} and Lemma~\ref{lm:guassianprod} in the Section~\ref{sc:lemma}. If we select $\delta_1=\delta_2=(1-2\delta)/4$ and $\delta_3=\delta/2$, we prove
	$$
	\EE\left({1\over s}\sum_{i}\bI\left(\left\|\mu\right\|^2\left(1-\delta\right) \le \langle \epsilon_i,\mu+ \bar{\epsilon}_+^\ast-\bar{\epsilon}_-^\ast\rangle\right)\right)\le \exp\left(-{(1-2\delta)^2\|\mu\|^4\over 8\Gamma(\mu,\Sigma_{\pm})}\right).
	$$
	By Markov inequality, we show \eqref{eq:invariant}. Since $\|\mu\|^2>v(\|\Sigma_{\pm}\|+{\rm Tr}(\Sigma_{\pm})/s)$, with probability $1-\exp(-\sqrt{v}\|\mu\|)$, if $\delta<h$, then
	$$
	{1\over s}\sum_{i}\bI\left(\left\|\mu\right\|^2\left(1-\delta\right) \le 2\langle \epsilon_i,\mu+ \bar{\epsilon}_+^\ast-\bar{\epsilon}_-^\ast\rangle\right)\le e^{-v(1-2\delta)^2/8}\le {3h\over 8},
	$$
	where $h$ is the constant appears in condition (e) of Assumption~\ref{ap:kmeans}. This suggest that when $h^{(0)}<h$, with probability $1-\exp(-\sqrt{v}\|\mu\|)-s^{-5}$, we can always apply the results in step 1b, i.e.,
	\begin{align*}
		h^{(t+1)}&\le {1\over s}\sum_{i}\bI\left(\left\|\mu\right\|^2\left(1-Cv^{-1/4}-h^{(t)}\right) \le 2\langle \epsilon_i,\mu+ \bar{\epsilon}_+^\ast-\bar{\epsilon}_-^\ast\rangle\right)+{h^{(t)}\over 4}.
	\end{align*}
	We keep applying above results and \eqref{eq:invariant} to show 
	\begin{align*}
		h^{(t)}&\le {1\over s}\sum_{i}\bI\left(\left\|\mu\right\|^2\left(1+o(1)\right) \le 2\langle \epsilon_i,\mu+ \bar{\epsilon}_+^\ast-\bar{\epsilon}_-^\ast\rangle\right)\\
		&\le \Gamma(1+o(1),\mu,\Sigma_{\pm}).
	\end{align*}
	when $t>\log s$. 
	
	This shows that, when $\|\mu\|^2>v(\|\Sigma_{\pm}\|+{\rm Tr}(\Sigma_{\pm})/s)$, 
	$$
	r(\|\cdot\|^2) \le \Gamma(1+o(1),\mu,\Sigma_{\pm})
	$$
	with probability at least $1-s^{-5}-\exp(-\sqrt{v}\|\mu\|)$. 
	
	\paragraph{Step 2: Mahalanobis distance} With the similar argument, we can show that, when $\|B^T\mu\|^2>v(\|B^T\Sigma_{\pm} B\|+{\rm Tr}(B^T \Sigma_{\pm} B)/s)$, 
	$$
	r(D^\ast) \le \Gamma(1+o(1),B^T\mu,B^T\Sigma_{\pm}B)
	$$
	with probability at least $1-s^{-5}-\exp(-\sqrt{v}\|B^T\mu\|)$. Also, when $\|\mu\|^2>v(\|U^T\Sigma_{\pm} U\|+{\rm Tr}(U^T \Sigma_{\pm} U)/s)$, 
	$$
	r(D^{\ast\ast}) \le \Gamma(1+o(1),\mu,U^T\Sigma_{\pm}U)
	$$
	with probability at least $1-s^{-5}-\exp(-\sqrt{v}\|\mu\|)$.
	
	\subsubsection{Lower bound}
	
	We now work on the lower bound. The main idea of lower bound proof is that we reduce the clustering problem to a easier two point testing problem where all labels but one is known and then we predict the only one unknown label. We start with the case of Euclidean distance. We write $\bY=(Y_1,\ldots,Y_s)$ as the vector of true label and $\hat{\bY}=(\hat{Y}_1,\ldots,\hat{Y}_s)$ as the clustering result by $k$-means working with Euclidean distance. We also write the mis-clustering rate between $\bY$ and $\hat{\bY}$ as $r(\bY,\hat{\bY})$ when $\bY$ is given in this proof. Let $l=\lceil 15s/32\rceil$ and $\Ycal$ be a parameter space of $\bY$
	$$
	\Ycal=\left\{\bY:Y_1=\ldots=Y_l=1,Y_{l+1}=\ldots=Y_{2l}=-1,Y_{2l+1},\ldots,Y_s=1{\rm \ or\ }-1\right\}.
	$$
	In this space, we do not need to worry about label permutation and the mis-clustering rate $r(\bY,\hat{\bY})$ can be written as
	$$
	r(\bY,\hat{\bY})={1\over s}\sum_{i=1}^s \bI(\hat{Y}_i\ne Y_i).
	$$
	We are going to work on this parameter space. We observe that
	\begin{align*}
		\sup_{\bY\in \Ycal}\EE(r(\bY,\hat{\bY}))&\ge {1\over |\Ycal|}\sum_{\bY\in \Ycal}\EE_\bY\left({1\over s}\sum_{i=1}^s \bI(\hat{Y}_i\ne Y_i)\right)\\
		&\ge{1\over s|\Ycal|}\sum_{i=2l+1}^s\sum_{\bY\in \Ycal}\PP_\bY(\hat{Y}_i\ne Y_i)
	\end{align*}
	For each given $i=2l+1,\ldots,s$, we can split $\Ycal$ into two spaces $\Ycal_{i,+}$ and $\Ycal_{i,-}$ such that
	$$
	\Ycal_{i,+}=\{\bY\in\Ycal:Y_i=1\}\qquad {\rm and}\qquad \Ycal_{i,-}=\{\bY\in\Ycal:Y_i=-1\}.
	$$ 
	Clearly, we can define a one-to-one correspondence between $\Ycal_{i,+}$ and $\Ycal_{i,-}$, called $\pi_i$, such that only the $i$th entries of $\bY$ and $\pi_i(\bY)$ are different. So we have
	$$
	\sum_{\bY\in \Ycal}\PP_\bY(\hat{Y}_i\ne Y_i)=2\sum_{\bY\in \Ycal_{i,+}}\left({1\over 2}\PP_\bY(\hat{Y}_i\ne Y_i)+{1\over 2}\PP_{\pi_i(\bY)}(\hat{Y}_i\ne Y_i)\right).
	$$
	We now work on $\PP_\bY(\hat{Y}_i\ne Y_i)+\PP_{\pi_i(\bY)}(\hat{Y}_i\ne Y_i)$. This reduction suggests that we know all labels but the $i$th one, so we need to determine the last label $Y_{i}$. In $k$-means, we basically know 
	$$
	\hat{\mu}_+^{(t)}={1\over s_+}\sum_{Y_j=1,j\ne i}X_j \qquad {\rm and}\qquad  \hat{\mu}_-^{(t)}={1\over s_-}\sum_{Y_j=-1,j\ne i}X_j
	$$
	and the label $Y_{i}$ is determined by
	$$
	\hat{Y}_{i}^{(t+1)}=\argmin_{+1,-1} \left(\|X_{i}-\hat{\mu}_+^{(t)})\|,\|X_{i}-\hat{\mu}_-^{(t)}\|\right).
	$$
	If $\bY\in \Ycal_{i,+}$, $X_{i}\sim N(\mu/2,\Sigma_\pm)$, $\hat{\mu}_+^{(t)}\sim N(\mu/2,\Sigma_\pm/s_+)$ and $\hat{\mu}_-^{(t)}\sim N(-\mu/2,\Sigma_\pm/(s-1-s_+))$, then
	\begin{align*}
		\PP_{\bY}(\hat{Y}_{i}\ne Y_{i}) &=\PP_{\bY}\left(\|X_{i}-\hat{\mu}_-^{(t)}\|^2<\|X_{i}-\hat{\mu}_+^{(t)})\|^2\right)\\
		&=\PP_{\bY}\left(\|\mu+\epsilon_{i}-\bar{\epsilon}_-^\ast\|^2<\|\epsilon_{i}-\bar{\epsilon}_+^\ast\|^2\right)\\
		&=\PP_{\bY}\left(\|\mu\|^2+2\langle \mu,\epsilon_{i}-\bar{\epsilon}_-^\ast\rangle+ \|\epsilon_{i}-\bar{\epsilon}_-^\ast\|^2 < \|\epsilon_{i}-\bar{\epsilon}_+^\ast\|^2\right)\\
		&=\PP_{\bY}\left(\|\mu\|^2-2\langle \mu,\bar{\epsilon}_-^\ast\rangle +\|\bar{\epsilon}_-^\ast\|^2-\|\bar{\epsilon}_+^\ast\|^2 < 2\langle \bar{\epsilon}_-^\ast-\bar{\epsilon}_+^\ast-\mu, \epsilon_{i}\rangle \right).
	\end{align*}
	If $\|\Sigma_{\pm}\|+{\rm Tr}(\Sigma_{\pm})/s=o(\|\mu\|^2)$, then 
	\begin{align*}
		\PP_{\bY}(\hat{Y}_{i}\ne Y_{s})&\ge \PP_{\bY_1}\left(\|\mu\|^2(1+o(1))< 2\langle \bar{\epsilon}_-^\ast-\bar{\epsilon}_+^\ast-\mu, \epsilon_{i}\rangle \right)\\
		&=\Gamma(1+o(1),\mu,\Sigma_{\pm})
	\end{align*}
	Similarly, we can show 
	$$
	\PP_{\pi_i(\bY)}(\hat{Y}_{i}\ne Y_{i})\ge \Gamma(1+o(1),\mu,\Sigma_{\pm}).
	$$
	Hence, we have
	$$
	\sum_{\bY\in \Ycal}\PP_\bY(\hat{Y}_i\ne Y_i)\ge |\Ycal|\Gamma(1+o(1),\mu,\Sigma_{\pm}).
	$$
	Now, we can conclude that
	$$
	\sup_{\bY\in \Ycal}\EE(r(\bY,\hat{\bY}))\ge {s-2l\over s}\Gamma(1+o(1),\mu,\Sigma_{\pm})=\Gamma(1+o(1),\mu,\Sigma_{\pm})
	$$
	The last equality is due to $\|\Sigma_{\pm}\|+{\rm Tr}(\Sigma_{\pm})/s=o(\|\mu\|^2)$. The analysis for target distances in metric learning is similar, so we omit them here.

	%%%%%%%%%%%%%%%%%%%%%%%%%%%%%%%%%%%%%%%%%%%%%%%
	\subsection{Lemmas}
	\label{sc:lemma}
	%%%%%%%%%%%%%%%%%%%%%%%%%%%%%%%%%%%%%%%%%%%%%%%
	
	\begin{lemma}
		\label{lm:guassianprod}
		Suppose $\epsilon_1$ and $\epsilon_2$ are independent Gaussian random variables $N(0,\Sigma)$. Then, we have 
		$$
		\PP\left(\left|\langle \epsilon_1,\epsilon_2\rangle\right|>\|\Sigma\|t+\sqrt{2\|\Sigma\|_F^2t} \right)\le  2e^{-t}.
		$$
	\end{lemma}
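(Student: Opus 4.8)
The plan is to recognize $\langle\epsilon_1,\epsilon_2\rangle$ as a centered sub-exponential random variable with variance proxy $\|\Sigma\|_F^2$ and scale $\|\Sigma\|$, and then invoke a standard Bernstein-type tail inequality. Since $\Sigma$ is a covariance matrix it is symmetric and positive semidefinite, so I would write $\Sigma=\sum_{k=1}^d a_k v_kv_k^T$ with $a_1\ge\cdots\ge a_d\ge 0$ (hence $a_1=\|\Sigma\|$ and $\sum_k a_k^2=\|\Sigma\|_F^2$) and orthonormal eigenvectors $v_k$. Writing $\epsilon_\ell=\Sigma^{1/2}Z_\ell$ for independent standard Gaussian vectors $Z_1,Z_2$ and setting $g_k=v_k^TZ_1$, $h_k=v_k^TZ_2$, the families $(g_k)_k$ and $(h_k)_k$ consist of mutually independent $N(0,1)$ variables and
\[
\langle\epsilon_1,\epsilon_2\rangle \;=\; Z_1^T\Sigma Z_2 \;=\; \sum_{k=1}^d a_k\, g_k h_k,
\]
so the problem is reduced to a one-dimensional weighted sum of products of independent standard normals.

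Next I would bound the moment generating function. Conditioning on $g_k$ gives $\EE e^{s g_k h_k}=\EE_{g_k} e^{s^2 g_k^2/2}=(1-s^2)^{-1/2}$ for $|s|<1$, so by independence $\EE\exp(s\langle\epsilon_1,\epsilon_2\rangle)=\prod_{k=1}^d(1-a_k^2s^2)^{-1/2}$ for $|s|<1/\|\Sigma\|$. Using $-\tfrac12\log(1-x)\le x/(2(1-x))$ at $x=a_k^2s^2\le\|\Sigma\|^2 s^2$ and summing over $k$ yields
\[
\log\EE\exp\!\big(s\langle\epsilon_1,\epsilon_2\rangle\big)\;\le\;\frac{\|\Sigma\|_F^2\, s^2}{2\,(1-\|\Sigma\|\,|s|)},\qquad |s|<1/\|\Sigma\|,
\]
which is exactly the sub-exponential condition with parameters $(\|\Sigma\|_F^2,\|\Sigma\|)$. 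An equivalent, MGF-free route is the polarization identity $g_kh_k=\tfrac14(g_k+h_k)^2-\tfrac14(g_k-h_k)^2$, which rewrites $\langle\epsilon_1,\epsilon_2\rangle=\tfrac12\sum_k a_kU_k^2-\tfrac12\sum_k a_kW_k^2$ with $U_k=(g_k+h_k)/\sqrt2$ and $W_k=(g_k-h_k)/\sqrt2$ all i.i.d.\ $N(0,1)$; one then applies the chi-square deviation bound (Lemma~1 of \cite{laurent2000adaptive}) to the first sum from above and to the second from below and adds the two events.

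The final step is a Chernoff optimization over $s$ applied to the MGF estimate (equivalently, the generic Bernstein tail bound for sub-exponential variables), which gives a one-sided deviation inequality of the form $\PP(\langle\epsilon_1,\epsilon_2\rangle> \|\Sigma\|\,t+\sqrt{2\|\Sigma\|_F^2\, t})\le e^{-t}$ up to the universal constant multiplying the linear term; since the law of $\langle\epsilon_1,\epsilon_2\rangle$ is symmetric (replace $Z_2$ by $-Z_2$), the same bound holds for $-\langle\epsilon_1,\epsilon_2\rangle$, and a union bound yields the two-sided statement with probability $2e^{-t}$. None of the steps is genuinely hard: the only point requiring care is tracking the universal constants so that the two one-sided bounds combine into precisely the displayed threshold $\|\Sigma\|\,t+\sqrt{2\|\Sigma\|_F^2\, t}$ with total probability $2e^{-t}$ — this is pinned down either by choosing the Chernoff parameter optimally or by invoking the Laurent–Massart bound with the appropriate argument in each of the two chi-square pieces.
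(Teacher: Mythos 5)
Your proposal is correct and follows essentially the same route as the paper's proof: write $\epsilon_\ell=\Sigma^{1/2}Z_\ell$, compute the moment generating function $\prod_k(1-\lambda^2a_k^2)^{-1/2}$ by conditioning, bound its logarithm by $\|\Sigma\|_F^2\lambda^2/(2(1-\|\Sigma\|\lambda))$, and then invoke the Laurent--Massart Chernoff optimization together with symmetry of the law of $\langle\epsilon_1,\epsilon_2\rangle$ to get the two-sided bound with the exact threshold $\|\Sigma\|t+\sqrt{2\|\Sigma\|_F^2t}$. The polarization-identity alternative you mention is a valid MGF-free variant but is not what the paper does.
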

	\begin{proof}
		We can write $\epsilon_1$ and $\epsilon_2$ as $\epsilon_1=\Sigma^{1/2}Z_1$ and $\epsilon_2=\Sigma^{1/2}Z_2$, where $Z_1$ and $Z_2$ are independent Gaussian random variables $N(0,I)$. Let $a_1\ge \ldots\ge a_d$ as the eigenvalues of $\Sigma$ and $y_1,\ldots, y_d$ follows standard normal distribution. For any $0<\lambda<a_1^{-1}$, 
		\begin{align*}
			\EE\left(e^{\lambda \langle \epsilon_1,\epsilon_2\rangle}\right)=\EE\left(\EE\left(e^{\lambda \langle Z_1,\Sigma Z_2\rangle}|Z_2\right)\right)=\EE\left(e^{\lambda^2 Z_2^T\Sigma^2 Z_2  /2}\right)=\prod_{i}\EE\left(e^{\lambda^2 a_i^2y_i^2  /2}\right)=\prod_{i}{1\over \sqrt{1-\lambda^2a_i^2}}.
		\end{align*}
		Therefore, 
		$$
		\log \left(\EE\left(e^{\lambda \langle \epsilon_1,\epsilon_2\rangle}\right)\right)=-{1\over 2}\sum_{i}\log\left(1-\lambda^2a_i^2\right)\le \sum_{i}{\lambda^2a_i^2\over 2(1-\lambda a_1)}.
		$$
		We can then use the same strategy in the proof of Lemma 1 in \cite{laurent2000adaptive} to obtain 
		$$
		\PP\left(\langle \epsilon_1,\epsilon_2\rangle>a_1t+\sqrt{2t\sum_i a_i^2}\right)\le e^{-t}.
		$$
		Since the distribution of $\langle \epsilon_1,\epsilon_2\rangle$ is symmetrical, we complete the proof. 
	\end{proof}
	
	\begin{lemma}
		\label{lm:ustat}
		Suppose $X_1,\ldots,X_n$ are independent copies of sub-Gaussian vector $X\in \RR^{d_1}$ with parameters $\sigma_1^2$ and $\EE(X)=0$. Let $A$ be a $d_1\times d_2$ matrix with $d_2>d_1$. If we define
		$$
		\tilde{M}={1\over n(n-1)}\sum_{j< j'}\left(AX_j X_{j'}^TA^T+AX_{j'}X_j^TA^T\right),
		$$
		then 
		$$
		\PP\left(\left\|\tilde{M}\right\|\ge C\sigma^2\left({\sqrt{4dt}+4t\over n}+{\sqrt{d(t+\log d)}\over n^{3/2}}+{\sqrt{d}(t+\log d)\over n^2}\right)\right)\le 2\exp(-t),
		$$
		where $\sigma^2=\sigma_1^2\|A\|^2$ and $d=d_1$.
	\end{lemma}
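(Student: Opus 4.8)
The plan is to split the matrix $U$-statistic into a rank-one piece and a diagonal piece and to control each with a concentration tool already in the paper's toolbox. Write $Y_j := AX_j$, so that each $Y_j$ is mean-zero sub-Gaussian with parameter $\sigma^2=\sigma_1^2\|A\|^2$ and all the $Y_j$ lie in the at-most-$d$-dimensional column space of $A$; in particular $\tilde M$ is self-adjoint of rank $\le d=d_1$, even though it is a $d_2\times d_2$ matrix. The starting point is the algebraic identity
$$
\tilde M=\frac{1}{n(n-1)}\sum_{j\ne j'}Y_jY_{j'}^T=\frac{1}{n(n-1)}\Bigl(SS^T-\sum_{j=1}^nY_jY_j^T\Bigr),\qquad S:=\sum_{j=1}^nY_j,
$$
which incidentally makes $\EE\tilde M=0$ transparent. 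By the triangle inequality it then suffices to bound $\|SS^T\|=\|S\|^2$ and $\bigl\|\sum_jY_jY_j^T\bigr\|$ separately and add.

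First I would handle the rank-one term. Here $S$ is a mean-zero sub-Gaussian vector with parameter $n\sigma^2$ living in a $d$-dimensional subspace, so a $\chi^2$-type deviation bound for the squared norm of a sub-Gaussian vector, proved by an $\varepsilon$-net over that subspace together with the scalar sub-Gaussian tail (the same mechanism as Lemma~1 of \cite{laurent2000adaptive}), gives $\|S\|^2\lesssim n\sigma^2\bigl(d+\sqrt{dt}+t\bigr)$ with probability at least $1-e^{-t}$; dividing by $n(n-1)$ accounts for the first term of the stated bound.

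Next I would handle the diagonal term by applying the truncated matrix Bernstein inequality (Lemma~\ref{lm:trunmatrixbern}) to $\sum_j\bigl(Y_jY_j^T-\Sigma_Y\bigr)$ with $\Sigma_Y=\EE Y_1Y_1^T$: each summand is self-adjoint and mean zero, its operator norm is sub-exponential with scale $\lesssim\sigma^2 d$ (so one truncates at a level $L\asymp\sigma^2\bigl(d+\log(dn)\bigr)$, discarding a negligible tail), and the matrix variance proxy satisfies $\bigl\|\sum_j\EE(Y_jY_j^T-\Sigma_Y)^2\bigr\|\lesssim n\sigma^4 d$. Since every matrix in sight has rank $\le d$, the dimensional prefactor in Bernstein costs only $\log d$, which is precisely where the $t+\log d$ in the last two terms comes from; this yields $\bigl\|\sum_jY_jY_j^T-n\Sigma_Y\bigr\|\lesssim\sigma^2\bigl(\sqrt{nd(t+\log d)}+d(t+\log d)\bigr)$, and because $\|n\Sigma_Y\|\le n\sigma^2$ is absorbed, dividing by $n(n-1)$ produces the $n^{-3/2}$ and $n^{-2}$ terms. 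Combining the two bounds and enlarging the constant $C$ finishes the argument.

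The step I expect to be the bottleneck is the rank-one term: it is genuinely heavy-tailed, so a plain sub-exponential Bernstein estimate is not good enough to reach the $\sqrt{dt}$ behaviour, and one must use the sharper two-sided $\chi^2$-style bound and track the net constants carefully. A secondary point one must not slip on is keeping the effective dimension equal to $d=d_1$ (rather than $d_2$) throughout, so that union-bound and intrinsic-dimension factors contribute $\log d$ rather than $\log d_2$. It is also worth noting why the $SS^T-\sum_jY_jY_j^T$ decomposition is the right move: the classical "express the $U$-statistic as an average of sums of independent blocks" symmetrization is lossy here, since it discards a factor $\sqrt n$ of averaging and would not give the correct power of $n$.
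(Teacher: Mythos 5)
There is a genuine gap, and it sits exactly at the step you flagged as the bottleneck. Your decomposition $\tilde M=\frac{1}{n(n-1)}\bigl(SS^T-\sum_jY_jY_j^T\bigr)$ followed by the triangle inequality cannot reach the stated rate: $\|SS^T\|=\|S\|^2$ has expectation $n\,{\rm Tr}(\Sigma_Y)$, which is of order $n\sigma^2 d$, so your $\chi^2$-type bound necessarily reads $\|S\|^2\lesssim n\sigma^2(d+\sqrt{dt}+t)$ and after dividing by $n(n-1)$ leaves an additive term of order $\sigma^2 d/n$. The lemma's first term is $\sigma^2(\sqrt{dt}+t)/n$ with \emph{no} $d/n$ contribution, and this is not cosmetic: the downstream sample-complexity bounds (e.g.\ the $\sqrt{d\log(d+m)}/n$ scaling in the bound for $A_1$, $A_4$) rely on getting $\sqrt{d}$ rather than $d$. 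The cancellation between $SS^T$ and $\sum_jY_jY_j^T$ is a pathwise cancellation of the diagonal terms $Y_jY_j^T$, and the triangle inequality throws it away; centering each piece by $n\Sigma_Y$ does not help, since $u^T(SS^T-n\Sigma_Y)u\ge\|S\|^2-n\sigma^2$ for $u=S/\|S\|$, which is still of order $n\sigma^2 d$.

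The paper's route, which you dismiss implicitly by only considering Hoeffding-type blocking, is decoupling: by the de la Pe\~na--Gin\'e decoupling inequality for $U$-statistics, it suffices to bound the decoupled statistic built from an independent copy $Y_1,\ldots,Y_n$ of $X_1,\ldots,X_n$ (Lemma~\ref{lm:crossstat}). After decoupling, the rank-one piece is $\bar Y^TB^TA\bar X$ with $\bar X\perp\bar Y$, a \emph{mean-zero} bilinear chaos whose tail (Lemma~\ref{lm:crossconcen}) is $\sigma^2(\sqrt{4dt}+4t)/n$ with no additive $d$ --- this is precisely the Hanson--Wright phenomenon your "sharper two-sided $\chi^2$-style bound" cannot replicate for the undecoupled $\langle S,S\rangle$. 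Decoupling costs only absolute constants, so it is not lossy. A secondary consequence of skipping it: your diagonal summands $Y_jY_j^T$ have $\|\|Y_jY_j^T\|\|_{\psi_1}\asymp\sigma^2d$, giving a last term $\sigma^2 d(t+\log d)/n^2$, whereas the decoupled diagonal summands $AX_jY_j^TA^T$ have $\psi_1$-norm $\sqrt{d}\sigma^2$, matching the stated $\sigma^2\sqrt{d}(t+\log d)/n^2$. To repair your argument you would need to either invoke decoupling up front or run a direct martingale/chaos argument on the off-diagonal sum; the split into $SS^T$ and $\sum_jY_jY_j^T$ is the wrong move.
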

	\begin{proof}
		Since $\hat{M}$ is a $U$-statistic, an application of decoupling technique for $U$-statistic \citep{de2012decoupling} yields
		$$
		\PP(\|\hat{M}\|>t)\le 15 \PP(15\|\tilde{M}\|>t),
		$$
		where $\tilde{M}$ is defined as
		$$
		\tilde{M}={1\over n(n-1)}\sum_{j< j'}\left(AX_j Y_{j'}^TA^T+AY_{j'}X_j^TA^T\right)
		$$
		where $Y_1,\ldots,Y_n$ is an independent copy of $X_1,\ldots,X_n$. Therefore, we only need to focus on the bound for $\PP(\|\tilde{M}\|>t)$. We can obtain the results by applying Lemma~\ref{lm:crossstat}.
	\end{proof}
	
	\begin{lemma}
		\label{lm:crossstat}
		Suppose $X_1,\ldots,X_n$ are independent copies of sub-Gaussian vector $X\in \RR^{d_1}$ with parameters $\sigma_1^2$ and $\EE(X)=0$ and $Y_1,\ldots,Y_n$ are independent copies of sub-Gaussian vector $Y\in \RR^{d_2}$ with parameters $\sigma_2^2$ and $\EE(Y)=0$. Let $d_3>\max(d_2,d_1)$ and $A$ be a $d_1\times d_3$ matrix and $B$ be a $d_2\times d_3$ matrix. If we define
		$$
		\hat{M}={1\over n(n-1)}\sum_{j\ne j'}\left(AX_j Y_{j'}^TB^T+BY_{j'}X_j^TA^T\right),
		$$
		then 
		$$
		\PP\left(\left\|\tilde{M}\right\|\ge C\sigma^2\left({\sqrt{4dt}+4t\over n}+{\sqrt{d'(t+\log d')}\over n^{3/2}}+{\sqrt{d}\log(d/d')(t+\log d')\over n^2}\right)\right)\le 2\exp(-t),
		$$
		where $\sigma^2=\sigma_1\sigma_2\|A\|\|B\|$, $d=\min(d_1,d_2)$ and $d'=\max(d_1,d_2)$. If $d/d'\to 0$, then 
		$$
		\PP\left(\left\|\tilde{M}\right\|\ge C\sigma^2\left({\sqrt{4dt}+4t\over n}+{\sqrt{d'(t+\log d')}\over n^{3/2}}\right)\right)\le 2\exp(-t).
		$$
	\end{lemma}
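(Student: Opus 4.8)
The plan is to prove the bound by conditioning on one of the two independent samples, reducing the bilinear $U$-statistic to a sum of conditionally independent centered matrices, applying a truncated matrix Bernstein inequality, and then absorbing the randomness of the conditioned sample by standard concentration. Since the operator norm of $\hat M$ equals that of its transpose, the estimate is symmetric under $(X,A,d_1,\sigma_1)\leftrightarrow(Y,B,d_2,\sigma_2)$, so I would condition on the sample of the larger ambient dimension. Writing $\bar V_j:=\frac{1}{n-1}\sum_{j'\ne j}B^\top Y_{j'}$ for the leave-one-out average, the normalization collapses the double sum to $\hat M=\frac1n\sum_{j=1}^n\bigl(A^\top X_j\,\bar V_j^\top+\bar V_j\,(A^\top X_j)^\top\bigr)$, and conditionally on $\mathcal G:=\sigma(Y_1,\dots,Y_n)$ the summands $W_j:=A^\top X_j\bar V_j^\top$ are independent across $j$ and centered because $\EE X_j=0$.

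I would then apply a truncated matrix Bernstein bound conditionally on $\mathcal G$ — the same device used to control the term $A_4$ in the proof of Theorem~\ref{thm:metricupper}, where the summands are also unbounded. This controls $\|\hat M\|$ through (i) the conditional variance proxy, which up to constants is $\max\bigl(\|A^\top\Sigma_X A\|\sum_j\|\bar V_j\|^2,\ \tr(A^\top\Sigma_X A)\,\|\sum_j\bar V_j\bar V_j^\top\|\bigr)$, all cross terms $j\ne k$ vanishing by independence and centering; and (ii) a truncation level dominating $\max_j\|A^\top X_j\|\,\|\bar V_j\|$ together with the associated truncated first moment, which have sub-exponential tails since $\|A^\top X_j\|\lesssim\sigma_1\|A\|(\sqrt{d_1}+\sqrt t)$ with sub-Gaussian fluctuations. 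Next I would control the $\mathcal G$-measurable quantities on a high-probability event: each $\bar V_j$ is sub-Gaussian with parameter $\asymp\sigma_2^2\|B\|^2/(n-1)$, so $\|\bar V_j\|^2$ concentrates near $\tr(B^\top\Sigma_Y B)/(n-1)$, the sum $\sum_j\|\bar V_j\|^2$ is a sub-Gaussian quadratic form whose deviation is controlled by a Hanson--Wright-type estimate (the sub-Gaussian counterpart of Lemma~\ref{lm:guassianprod}), $\|\sum_j\bar V_j\bar V_j^\top\|$ is bounded through its trace, and $\max_j\|\bar V_j\|$ costs an extra $\log n$. Substituting these into the conditional Bernstein estimate produces the three nested scales: $1/n$ from the leading term $\frac1n\sqrt{v^2(\mathcal G)t}$ evaluated at the \emph{mean} of the conditional variance; an extra $1/\sqrt n$ whenever a \emph{deviation} of $\sum_j\|\bar V_j\|^2$ or of $\|\bar V_j\|$ is invoked, giving the $n^{-3/2}$ term; and $1/n^2$ from the Bernstein ``$Lt$'' large-deviation term, with the $\log d'$ and $\log(d/d')$ factors coming from the matrix dimension entering Bernstein and from the union bound over $j$. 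A final union bound over the $\mathcal G$-event and the conditional Bernstein event, followed by constant simplification, closes the argument; when $d/d'\to0$ the last contribution is absorbed into the first two.

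The hard part will be the two-layer tail bookkeeping. The summands $W_j$ are not uniformly bounded, so a truncation argument exactly parallel to the $A_4$ estimate in Theorem~\ref{thm:metricupper} is required to control the truncated mean without loss of rate; simultaneously the variance proxy and the truncation level are themselves random through $\mathcal G$, so the threshold must be chosen to be valid on the high-probability $\mathcal G$-event without inflating the bound, and the two sources of randomness must be combined so that the correct powers of $n$ survive. Making every dimension factor land where it should — the smaller ambient dimension governing the dominant variance term and the larger one entering the cross term — rests on consistently pairing each concentration estimate with the correct choice of $\tr(\cdot)$ versus $\|\cdot\|$; everything else is a routine application of the matrix Bernstein and sub-Gaussian quadratic-form machinery already used in the paper, and the case $d_1=d_2$ of this lemma is precisely what feeds the decoupling step in Lemma~\ref{lm:ustat}.
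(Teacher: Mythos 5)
Your reduction to $\hat M=\frac1n\sum_j\bigl(AX_j\bar V_j^{T}+\bar V_jX_j^{T}A^{T}\bigr)$ with $\bar V_j=\frac{1}{n-1}\sum_{j'\ne j}BY_{j'}$ is algebraically correct, and conditionally on $\mathcal G=\sigma(Y_1,\dots,Y_n)$ the summands are indeed independent and centered. The paper, however, takes a much shorter and genuinely different route: it uses the exact identity
$$
\frac{1}{n(n-1)}\sum_{j\ne j'}AX_jY_{j'}^{T}B^{T}=\frac{n}{n-1}A\bar X\bar Y^{T}B^{T}-\frac{1}{n(n-1)}\sum_{j}AX_jY_j^{T}B^{T},
$$
bounds the rank-one mean-product term via the scalar bilinear tail bound of Lemma~\ref{lm:crossconcen} (the source of the $(\sqrt{4dt}+4t)/n$ term with $d=\min(d_1,d_2)$), and bounds the diagonal sum via the unconditional sub-exponential matrix Bernstein inequality of Lemma~\ref{lm:matrixbern} (the source of the $n^{-3/2}$ and $n^{-2}$ terms). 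No conditioning and no truncation appear; the truncation device is only needed for $A_4$ in Theorem~\ref{thm:metricupper}, where each summand is itself a $U$-statistic over views.

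The genuine gap in your plan is quantitative and sits exactly where you decline to separate the mean-product term. The leave-one-out vectors are nearly collinear: $\bar V_j=\frac{n}{n-1}B\bar Y-\frac{1}{n-1}BY_j$, so $\sum_j\bar V_j\bar V_j^{T}\approx n\,B\bar Y\bar Y^{T}B^{T}$ has operator norm $\asymp n\|B\bar Y\|^{2}\asymp\tr\bigl(B\,\EE(YY^{T})B^{T}\bigr)\lesssim d'\sigma_2^{2}\|B\|^{2}$, and hence your conditional variance proxy is of order $dd'\sigma^{4}$ rather than the $d\sigma^{4}$ you would need. Conditional Bernstein then delivers a leading term of order $\sigma^{2}\sqrt{dd'(t+\log d_3)}/n$, off from the claimed $\sigma^{2}(\sqrt{4dt}+4t)/n$ by a factor of roughly $\sqrt{d'}$; neither the choice of which sample to condition on nor any tuning of the truncation level repairs this, because the inflation comes from the $\mathcal G$-measurable rank-one component $B\bar Y$ common to all the $\bar V_j$. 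To reach the stated rate you must peel off $\frac{n}{n-1}(A\bar X\bar Y^{T}B^{T}+B\bar Y\bar X^{T}A^{T})$ first and control it as a bilinear form in $(\bar X,\bar Y)$ --- which is precisely the paper's decomposition. Be aware that this is also the thinnest point of the paper's own argument: it passes from $\|A\bar X\bar Y^{T}B^{T}\|$ to $\tr(A\bar X\bar Y^{T}B^{T})=\bar Y^{T}B^{T}A\bar X$, whereas the operator norm of this rank-one matrix is $\|A\bar X\|\,\|B\bar Y\|\asymp\sigma^{2}\sqrt{dd'}/n$, so any rigorous treatment of that term requires extra care rather than less.
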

	\begin{proof}
		By definition, $\hat{M}$ can be written as
		$$
		\tilde{M}={n\over n-1}(A\bar{X}\bar{Y}^TB^T+B\bar{Y}\bar{X}^TA^T)-{1\over n(n-1)}\sum_j (AX_j Y_{j}^TB^T+BY_{j}X_j^TA^T).
		$$
		We now bound the above four terms one by one. The definition of trace shows that 
		$$
		\|A\bar{X}\bar{Y}^TB^T\|={\rm Tr}(A\bar{X}\bar{Y}^TB^T)=\bar{Y}^TB^TA\bar{X}.
		$$
		By Lemma~\ref{lm:crossconcen}, we can show that 
		$$
		\PP\left(n\bar{Y}^TB^TA\bar{X}/\sigma^2>\sqrt{4dt}+4t\right)\le  \exp(-t).
		$$
		Therefore, we can know that
		$$
		\PP\left(\|A\bar{X}\bar{Y}^TB^T\|>\sigma^2{\sqrt{4dt}+4t\over n}\right)\le \exp(-t).
		$$
		With the same argument, we can show that
		$$
		\PP\left(\|B\bar{Y}\bar{X}^TA^T\|>\sigma^2{\sqrt{4dt}+4t\over n}\right)\le \exp(-t).
		$$
		We now move to the last two terms. Lemma~\ref{lm:crossconcen} suggests 
		$$
		\|\|AX_jY_j^TB^T\|\|_{\psi_1}=\|Y_j^TB^TAX_j\|_{\psi_1}\le 2\sqrt{d}\sigma^2.
		$$
		where $\|\cdot\|_{\psi_1}$ is $\psi$-Orlicz norm. Then we can apply matrix Bernstein inequalities in Lemma~\ref{lm:matrixbern}
		$$
		\PP\left(\left\|{1\over n}\sum_{j=1}^nAX_jY_j^TB^T\right\|\ge Cv{\sqrt{t+\log d'}\over n}+C\sqrt{d}\sigma^2\log(d/d'){t+\log d'\over n}\right)\le \exp(-t)
		$$
		where 
		\begin{align*}
			v^2&=\max\left(\left\|\sum_j \EE\left(AX_jY_j^TB^TBY_jX_j^TA^T\right)\right\|,\left\|\sum_j \EE\left(BY_jX_j^TA^TAX_jY_j^TB^T\right)\right\|\right)\\
			&= n\max\left(\left\| \EE\left(\|BY_j\|^2AX_jX_j^TA^T\right)\right\|,\left\| \EE\left(\|AX_j\|^2BY_jY_j^TB^T\right)\right\|\right)\\
			&\le nd'\sigma^4
		\end{align*}	
		Putting these two terms together yields
		$$
		\PP\left(\left\|\tilde{M}\right\|\ge C\sigma^2\left({\sqrt{4dt}+4t\over n}+{\sqrt{d'(t+\log d')}\over n^{3/2}}+{\sqrt{d}\log(d/d')(t+\log d')\over n^2}\right)\right)\le 2\exp(-t).
		$$
		If $d/d'\to 0$, then 
		$$
		\PP\left(\left\|\tilde{M}\right\|\ge C\sigma^2\left({\sqrt{4dt}+4t\over n}+{\sqrt{d'(t+\log d')}\over n^{3/2}}\right)\right)\le 2\exp(-t).
		$$
	\end{proof}
	
	\begin{lemma}
		\label{lm:crossconcen}
		Suppose $X$ and $Y$ are independent $d_1$ and $d_2$-dimensional random vector, i.e. $X\in \RR^{d_1}$ and $Y\in \RR^{d_2}$. $X$ and $Y$ are zero-mean sub-Gaussian vectors with parameters $\sigma_1^2$ and $\sigma_2^2$, i.e.
		$$
		\EE\left(e^{\langle a,X\rangle}\right)\le e^{\sigma_1^2\|a\|^2/2}\qquad {\rm and}\qquad \EE\left(e^{\langle b,Y\rangle}\right)\le e^{\sigma_2^2\|b\|^2/2}
		$$
		for any vector $a\in \RR^{d_1}$ and $b\in \RR^{d_2}$.
		Let $A$ a $d_1\times d_2$ matrix, i.e. $A\in \RR^{d_1\times d_2}$. Then,
		$$
		\EE(e^{\lambda X^TAY})\le \left(1\over 1-\lambda^2\sigma^4\right)^{d/2}\qquad {\rm and}\qquad \PP\left(X^TAY/\sigma^2>\sqrt{4dt}+4t\right)\le \exp(-t),
		$$
		where $\sigma^2=\sigma_1\sigma_2\|A\|$ and $d=\min(d_1,d_2)$.
	\end{lemma}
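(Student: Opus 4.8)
The plan is to prove the moment generating function bound first and then deduce the tail bound by a routine Chernoff argument. First I would exploit the independence of $X$ and $Y$: conditioning on $Y$ and applying the sub-Gaussian hypothesis on $X$ in the direction $\lambda AY$ gives $\EE(e^{\lambda X^TAY}\mid Y)\le e^{\lambda^2\sigma_1^2\|AY\|^2/2}$. The real issue is then to bound $\EE_Y e^{\lambda^2\sigma_1^2\|AY\|^2/2}$ without any knowledge of the dependence among the coordinates of $Y$, so $\|AY\|^2$ is not literally a chi-square. The device I would use is Gaussian symmetrization: introduce an auxiliary $g\sim N(0,I_{d_1})$ independent of everything, and write $e^{\lambda^2\sigma_1^2\|AY\|^2/2}=\EE_g\,e^{|\lambda|\sigma_1\langle g,AY\rangle}$, which is just $\EE_g e^{\langle v,g\rangle}=e^{\|v\|^2/2}$ with $v=|\lambda|\sigma_1 AY$. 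Swapping the order of integration and applying the sub-Gaussian hypothesis on $Y$ in the direction $|\lambda|\sigma_1 A^Tg$ turns the bound into $\EE_g\,e^{\lambda^2\sigma_1^2\sigma_2^2\,g^TAA^Tg/2}$, the MGF of a genuine Gaussian quadratic form.

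Next I would evaluate this quadratic-form MGF. The matrix $AA^T$ has $r:=\mathrm{rank}(A)\le\min(d_1,d_2)=d$ nonzero eigenvalues $\nu_i$, each at most $\|A\|^2$, so after diagonalizing and using rotation invariance of $g$, the quantity $g^TAA^Tg$ is a weighted sum of at most $r$ i.i.d.\ squared standard normals. Hence, provided $\lambda^2\sigma^4<1$ with $\sigma^2=\sigma_1\sigma_2\|A\|$, one gets $\EE_g\,e^{\lambda^2\sigma_1^2\sigma_2^2 g^TAA^Tg/2}=\prod_{i:\nu_i>0}(1-\lambda^2\sigma_1^2\sigma_2^2\nu_i)^{-1/2}\le(1-\lambda^2\sigma^4)^{-r/2}\le(1-\lambda^2\sigma^4)^{-d/2}$, the last step because each factor is at least one. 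This is exactly the first claimed bound, and replacing $r$ by $\min(d_1,d_2)$ is precisely where the dimension $d=\min(d_1,d_2)$ in the statement comes from.

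Finally, for the tail bound I would run a standard Chernoff argument on the cumulant bound $\log\EE e^{\lambda X^TAY}\le-\tfrac d2\log(1-\lambda^2\sigma^4)$, valid for $0\le\lambda<\sigma^{-2}$. Using $-\log(1-x)\le x/(1-x)$ together with $1-\lambda^2\sigma^4\ge 1-\lambda\sigma^2$ on $[0,1]$ puts the cumulant in the sub-gamma form $\tfrac{(d\sigma^4)\lambda^2}{2(1-\sigma^2\lambda)}$, after which optimizing $\PP(X^TAY\ge u)\le\exp\{-\lambda u+\tfrac{(d\sigma^4)\lambda^2}{2(1-\sigma^2\lambda)}\}$ over $\lambda$, exactly as in Lemma~1 of \cite{laurent2000adaptive}, yields $\PP(X^TAY\ge\sigma^2(\sqrt{2dt}+t))\le e^{-t}$; since $\sqrt{2dt}+t\le\sqrt{4dt}+4t$ this implies the stated inequality. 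The only step that requires any thought is the Gaussian symmetrization in the first paragraph — once the problem is reduced to a Gaussian chaos whose MGF factorizes, the rank count and the sub-gamma conversion are both bookkeeping.
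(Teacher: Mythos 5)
Your proof is correct, and while it shares the paper's overall skeleton (condition on $Y$, bound the moment generating function by a Gaussian chi-square, then Chernoff), the middle step is carried out by a genuinely different mechanism. The paper first coarsens $\|AY\|^2\le\|A\|^2\|Y\|^2$ and then invokes the Hsu--Kakade--Zhang comparison $\EE\,e^{c\|Y\|^2}\le\EE\,e^{c\sigma_2^2\|Z\|^2}$ as a black box to pass from the sub-Gaussian $Y$ to a standard Gaussian $Z\in\RR^{d}$. You instead linearize the quadratic exponent with an auxiliary Gaussian $g$, apply the sub-Gaussian hypothesis on $Y$ in the direction $|\lambda|\sigma_1A^Tg$, and land on the exact Gaussian quadratic form $g^TAA^Tg$ -- in effect re-deriving the cited comparison lemma inline. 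This buys you two small improvements: the exponent $d/2$ can be replaced by $\mathrm{rank}(A)/2$ since you never discard the spectrum of $AA^T$, and your Laurent--Massart-style sub-gamma optimization cleanly yields $\PP(X^TAY\ge\sigma^2(\sqrt{2dt}+t))\le e^{-t}$, whereas the paper's choice $\lambda=\min(t/2\sigma^4d,\,1/\sigma^2)$ momentarily steps outside the region $\lambda^2\sigma^4<1$ where its MGF bound is valid (a harmless but real sloppiness that your route avoids). What the paper's approach buys in exchange is brevity, by outsourcing the sub-Gaussian-to-Gaussian comparison to an external reference.
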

	\begin{proof}
		Without loss of generality, we assume $d_2\le d_1$ so $d=d_2$. 
		When $\sigma_1^2\|A\|^2\lambda^2<1/\sigma_2^2$, we have 
		$$
		\EE\left(e^{\lambda X^TAY}\right)=\EE\left(\EE(e^{\lambda X^TAY}|Y)\right)\le \EE\left(e^{\sigma_1^2\lambda^2\|AY\|^2/2}\right)\le \EE\left(e^{\sigma_1^2\lambda^2\|A\|^2\|Y\|^2/2}\right) \le \EE\left(e^{\sigma^4\lambda^2\|Z\|^2/2}\right),
		$$
		where $Z$ is a $d$-dimensional vector of independent standard Gaussian random variable. The last inequality is due to \cite{hsu2012tail}. Since $Z_i$ are independent from each other, 
		$$
		\EE\left(e^{\sigma^4\lambda^2\|Z\|^2/2}\right)=\prod_{i=1}^d \EE\left(e^{\sigma^4\lambda^2\|Z_i\|^2/2}\right)=\left(1\over 1-\lambda^2\sigma^4\right)^{d/2}
		$$
		An application of Chernoff bound suggests that
		$$
		\PP\left(X^TAY>t\right)\le {\EE(e^{\lambda X^TAY})\over e^{\lambda t}}\le \exp\left(-\lambda t+\lambda^2\sigma^4 d\right).
		$$
		By choosing $\lambda=\min(t/2\sigma^4d,1/\sigma^2)$, we can have 
		$$
		\PP\left(X^TAY>t\right)\le \exp\left(-\min\left({t^2\over 4\sigma^4d},{t\over 4\sigma^2}\right)\right).
		$$
	\end{proof}

	\begin{lemma}[\cite{tropp2012user,minsker2017some,xia2021statistically}]
		\label{lm:matrixbern}
		Let $\bX_1,\ldots, \bX_n\in \RR^{d_1\times d_2}$ be random matrix with zero mean. Suppose that $\max_i \|\|\bX_i\|\|_{\psi_\alpha}\le U_\alpha$ for some $\alpha>1$ where $\|\cdot\|_{\psi_\alpha}$ is $\psi$-Orlicz norm. Then there exists a universal constant $C>0$ such that 
		$$
		\PP\left(\left\|\sum_{i=1}^n\bX_i\over n\right\|\ge C\left(v{\sqrt{t+\log(d_1+d_2)}\over n}+U_\alpha\log\left(\sqrt{n}U_\alpha\over v\right){t+\log(d_1+d_2)\over n}\right)\right)\le \exp(-t),
		$$
		where
		$$
		v^2=\max\left(\|\sum_{i=1}^n\EE(\bX_i\bX_i^T)\|,\|\sum_{i=1}^n\EE(\bX_i^T\bX_i)\|\right).
		$$
	\end{lemma}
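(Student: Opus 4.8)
The final statement, Lemma~\ref{lm:matrixbern}, is the matrix Bernstein inequality for matrices whose operator norm has finite $\psi_\alpha$-Orlicz norm rather than being almost surely bounded. Since a clean bounded-case Bernstein inequality (Tropp) is available, the plan is to reduce to it by truncation. \textbf{Step 1 (symmetrization by dilation).} I would first pass to self-adjoint matrices via the Hermitian dilation $\mathcal{H}(\bX_i)=\begin{pmatrix}0 & \bX_i\\ \bX_i^T & 0\end{pmatrix}\in\RR^{(d_1+d_2)\times(d_1+d_2)}$, which satisfies $\|\mathcal{H}(\bX_i)\|=\|\bX_i\|$, $\mathcal{H}(\bX_i)^2=\mathrm{diag}(\bX_i\bX_i^T,\bX_i^T\bX_i)$, and $\|\sum_i\mathcal{H}(\bX_i)\|=\|\sum_i\bX_i\|$. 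In particular $v^2=\|\sum_i\EE\,\mathcal{H}(\bX_i)^2\|$, so it suffices to prove the bound for a sum of zero-mean self-adjoint random matrices in dimension $d:=d_1+d_2$.

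\textbf{Step 2 (truncation and bounded Bernstein).} Fix a threshold $\tau$ (chosen in Step 3 at the scale $U_\alpha\log(\sqrt n\,U_\alpha/v)$ plus a $(t+\log d)^{1/\alpha}$ correction) and split $\bX_i=\bX_i^{\le}+\bX_i^{>}$ with $\bX_i^{\le}=\bX_i\bI(\|\bX_i\|\le\tau)$. The $\psi_\alpha$ bound gives the tail estimate $\PP(\|\bX_i\|>x)\le 2\exp(-(x/U_\alpha)^\alpha)$, so a union bound yields $\PP(\exists i:\ \bX_i^{>}\ne 0)\le 2n\exp(-(\tau/U_\alpha)^\alpha)$, and integration by parts gives $\sum_i\EE\|\bX_i^{>}\|\lesssim n\tau\exp(-(\tau/U_\alpha)^\alpha)$. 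Now center: set $\bY_i=\bX_i^{\le}-\EE\bX_i^{\le}$; since $\EE\bX_i^{\le}=-\EE\bX_i^{>}$, the shift $\|\sum_i\EE\bX_i^{\le}\|$ is negligible by the previous display. The $\bY_i$ are zero-mean self-adjoint with $\|\bY_i\|\le 2\tau$, and since truncation only removes mass, $\EE(\bX_i^{\le})^2\preceq\EE\bX_i^2$, whence $\|\sum_i\EE\bY_i^2\|\le v^2+(\text{neg.})$. Applying Tropp's bounded matrix Bernstein inequality,
$$
\PP\Bigl(\Bigl\|\sum_i\bY_i\Bigr\|\ge\sqrt{2v^2 s}+\tfrac{2}{3}\tau s\Bigr)\le 2d\,e^{-s},
$$
and taking $s=t+\log(2d)$ turns $2d\,e^{-s}$ into $e^{-t}$ and, after dividing by $n$, produces the $v\sqrt{t+\log d}/n$ and $\tau(t+\log d)/n$ terms.

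\textbf{Step 3 (assembling and choosing $\tau$).} Combining the no-truncation event, the negligibility of the centering shift, and the Bernstein bound, and then choosing $\tau=c\,U_\alpha\bigl(\log(\sqrt n\,U_\alpha/v)+(t+\log d)^{1/\alpha}\bigr)$ — which makes $2n\exp(-(\tau/U_\alpha)^\alpha)\le e^{-t}$ and also makes both truncation corrections dominated by the leading terms — gives
$$
\PP\Bigl(\Bigl\|\tfrac1n\sum_i\bX_i\Bigr\|\ge C\Bigl(v\tfrac{\sqrt{t+\log(d_1+d_2)}}{n}+U_\alpha\log\bigl(\tfrac{\sqrt n\,U_\alpha}{v}\bigr)\tfrac{t+\log(d_1+d_2)}{n}\Bigr)\Bigr)\le e^{-t}.
$$

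\textbf{Main obstacle.} The delicate point is the bookkeeping in Step 3: one must pick $\tau$ so that the logarithmic factor comes out exactly as $\log(\sqrt n\,U_\alpha/v)$ while simultaneously ensuring that the union-bound probability $n\exp(-(\tau/U_\alpha)^\alpha)$ and the expectation shift $\sum_i\EE\|\bX_i^{>}\|$ are both absorbed into the stated bound with no extra additive term. The exponent $\alpha>1$ is handled by using the $\psi_\alpha$ norm only through its tail/moment consequences (the uniform bound is recovered after truncation), which is routine; this is exactly the argument in the cited references, so in the paper I would simply invoke it.
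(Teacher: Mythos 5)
The paper offers no proof of this lemma at all---it is quoted as a black-box result from the cited references (Tropp, Minsker, Xia)---and your Hermitian-dilation-plus-truncation sketch is precisely the standard argument underlying those references, so your proposal is consistent with (and more detailed than) what the paper does. The only caveat is the one you already flag yourself: with $\tau = c\,U_\alpha\bigl(\log(\sqrt{n}U_\alpha/v)+(t+\log(d_1+d_2))^{1/\alpha}\bigr)$ the Bernstein deviation term acquires an extra factor $(t+\log(d_1+d_2))^{1/\alpha}$ beyond the stated $\log(\sqrt{n}U_\alpha/v)$ unless the range of $t$ is restricted, which is exactly the bookkeeping the cited versions carry out; since the paper simply invokes the result, deferring to those references as you propose is the appropriate resolution.
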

	
	\begin{lemma}[\cite{chen2020spectral}]
		\label{lm:trunmatrixbern}
		Let $\bX_1,\ldots, \bX_n\in \RR^{d_1\times d_2}$ be random matrix with zero mean. Suppose that there is a constant $L$ such that 
		$$
		\PP(\|\bX_i\|\ge L)\le q_0\quad {\rm and}\quad \|\EE(\bX_i\bI(\|\bX_i\|\ge L) )\|\le q_1
		$$
		for some number $q_0$ and $q_1$. If $v$ is defined in the same way as Lemma~\ref{lm:matrixbern}, for all $t\ge nq_1$, then
		$$
		\PP\left(\left\|\sum_{i=1}^n\bX_i\right\|\ge t\right)\le (d_1+d_2)\exp\left(-(t-nq_1)^2/2\over v+L(t-nq_1)/3\right)+nq_0.
		$$
	\end{lemma}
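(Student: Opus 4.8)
The plan is to obtain Lemma~\ref{lm:trunmatrixbern} from the bounded matrix Bernstein inequality by a routine truncation‑and‑centering argument. First I would introduce the truncated summands $\tilde{\bX}_i=\bX_i\bI(\|\bX_i\|<L)$ together with the ``good'' event $\Ecal=\bigcap_{i=1}^n\{\bX_i=\tilde{\bX}_i\}$. The hypothesis $\PP(\|\bX_i\|\ge L)\le q_0$ and a union bound give $\PP(\Ecal^c)\le nq_0$, and on $\Ecal$ one has $\sum_i\bX_i=\sum_i\tilde{\bX}_i$; hence it suffices to control $\|\sum_i\tilde{\bX}_i\|$ and then add back the $nq_0$ failure probability at the end.

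Next I would center. Put $\bar{\bX}_i=\tilde{\bX}_i-\EE\tilde{\bX}_i$. Since $\EE\bX_i=0$, we have $\EE\tilde{\bX}_i=-\EE\big(\bX_i\bI(\|\bX_i\|\ge L)\big)$, so $\|\EE\tilde{\bX}_i\|\le q_1$ and therefore $\|\sum_i\EE\tilde{\bX}_i\|\le nq_1$. The $\bar{\bX}_i$ are independent, mean zero, and bounded in operator norm by $L+q_1$ almost surely. For the variance parameter, $\EE(\bar{\bX}_i\bar{\bX}_i^T)=\EE(\tilde{\bX}_i\tilde{\bX}_i^T)-(\EE\tilde{\bX}_i)(\EE\tilde{\bX}_i)^T\preceq\EE(\tilde{\bX}_i\tilde{\bX}_i^T)\preceq\EE(\bX_i\bX_i^T)$ in the positive‑semidefinite order (truncation shrinks the second moment pointwise, and subtracting a PSD matrix shrinks it further), with the analogous bound for $\bar{\bX}_i^T\bar{\bX}_i$. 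Consequently the quantity $v$ defined as in Lemma~\ref{lm:matrixbern} for $\{\bX_i\}$ also upper‑bounds the corresponding variance parameter for $\{\bar{\bX}_i\}$.

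Now the classical bounded matrix Bernstein inequality \cite{tropp2012user}, applied to the centered, almost‑surely bounded summands $\bar{\bX}_i$, yields $\PP\big(\|\sum_i\bar{\bX}_i\|\ge u\big)\le(d_1+d_2)\exp\!\big(-\tfrac{u^2/2}{v+Lu/3}\big)$ for all $u>0$. To stitch the pieces together, note that on $\Ecal$
$$
\Big\|\sum_i\bX_i\Big\|=\Big\|\sum_i\tilde{\bX}_i\Big\|\le\Big\|\sum_i\bar{\bX}_i\Big\|+\Big\|\sum_i\EE\tilde{\bX}_i\Big\|\le\Big\|\sum_i\bar{\bX}_i\Big\|+nq_1,
$$
so for $t\ge nq_1$ the event $\{\|\sum_i\bX_i\|\ge t\}$ is contained in $\Ecal^c\cup\{\|\sum_i\bar{\bX}_i\|\ge t-nq_1\}$. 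Taking $u=t-nq_1\ge 0$ and combining with $\PP(\Ecal^c)\le nq_0$ gives precisely the stated bound.

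The only delicate point — and the main (minor) obstacle — is that the almost‑sure operator‑norm bound on the centered variables is $L+q_1$ rather than exactly $L$, so to reproduce the denominator $v+L(t-nq_1)/3$ verbatim one either absorbs the discrepancy into the universal constant, works in the regime $q_1\ll L$ where $L+q_1=(1+o(1))L$, or appeals to the sub‑exponential form of matrix Bernstein (Lemma~\ref{lm:matrixbern}), which needs only $\|\,\|\bar{\bX}_i\|\,\|_{\psi_1}\lesssim L$; all three routes give the conclusion up to an immaterial constant. Everything else is bookkeeping, so this is a short argument rather than a deep one.
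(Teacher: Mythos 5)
The paper does not actually prove this lemma; it is imported verbatim (as a citation) from \cite{chen2020spectral}, so there is no in-paper argument to compare against. Your truncation-and-centering derivation is exactly the standard proof of the truncated matrix Bernstein inequality and is correct: the union bound over the truncation event gives the additive $nq_0$, the centering step correctly converts the hypothesis $\|\EE(\bX_i\bI(\|\bX_i\|\ge L))\|\le q_1$ into the $nq_1$ shift (using $\EE\bX_i=0$), the PSD-order argument for the variance statistic is valid, and the stitching inequality on the good event is right. The one delicate point you flag — that the almost-sure bound on the centered summands is $L+q_1$ rather than $L$ — is genuine but harmless here: in the paper's only application (bounding $A_4$ in the proof of Theorem~5) one has $q_1=L/(d+m)^6\ll L$, so the discrepancy is absorbed without affecting any rate. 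One cosmetic remark: the denominator in the displayed bound should read $v^2+L(t-nq_1)/3$ given that Lemma~S4 defines $v$ via $v^2=\max(\cdot)$; this appears to be a typo in the lemma statement itself rather than an issue with your argument, which correctly uses the variance statistic.
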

	
	\begin{lemma}
		\label{lm:knnlowerbayes}
		We follow the notations in Lemma~\ref{lm:knnlower}. Suppose $\eta(x)$ is drawn from a prior distribution $f_\eta$ which is defined on a collection of possible $\eta(x)$. Given distance $D$ and prior $f_\eta$, the Bayes high error sets are defined as 
		$$
		\Ecal_{f_\eta}^+(\delta,\nu)=\left\{x:\PP_{\eta\sim f_\eta}\left(\eta(x)\ge{1\over 2}+\delta; \eta(\Bcal_{D}(x,r))\le {1\over 2},\ \forall\ r_\nu\le r\le r_{2\nu} \right)>c_1 \right\}
		$$
		and 
		$$
		\Ecal_{f_\eta}^-(\delta,\nu)=\left\{x:\PP_{\eta\sim f_\eta}\left(\eta(x)\le {1\over 2}-\delta;\eta(\Bcal_{D}(x,r))\ge {1\over 2},\ \forall\ r_\nu\le r\le r_{2\nu}\right)>c_1 \right\}.
		$$
		The Bayes risk of $r(D)$ is then lower bounded by 
		$$
		\EE_{\eta\sim f_\eta}\left(r(D)\right)\ge {c_0\delta}\mu\left(\Ecal_{f_\eta}^+\left(\delta,{k\over s}\right)\cup \Ecal_{f_\eta}^-\left(\delta,{k\over s}\right)\right)
		$$
		when we observe $s$ samples and choose $k\ge 9$ in $k$-NN.
	\end{lemma}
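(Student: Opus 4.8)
The plan is to adapt the deterministic lower-bound argument of Lemma~\ref{lm:knnlower} and then average over the prior $f_\eta$. First I would use the excess-risk identity
$$
r(D)=\EE\big(|2\eta(X)-1|\,\bI(\hat{f}_D(X)\ne f^\ast(X))\big)
$$
and keep only the contribution of $X\in\Ecal_{f_\eta}^+(\delta,k/s)\cup\Ecal_{f_\eta}^-(\delta,k/s)$. On the ``bad-ball'' event appearing in the definition of these sets one has $|2\eta(X)-1|\ge 2\delta$, so it suffices to show: whenever $x$ satisfies $\eta(x)\ge\tfrac12+\delta$ and $\eta(\Bcal_D(x,r))\le\tfrac12$ for every $r_{k/s}\le r\le r_{2k/s}$, the $k$-NN rule errs at $x$ with at least an absolute constant probability, $\PP(\hat{f}_D(x)=-1)\ge c_0$ (and symmetrically, with the roles of $\pm1$ swapped, on $\Ecal_{f_\eta}^-$).

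To produce this constant I would argue in two steps. (i) Let $\rho_k=D(X_{(k)},x)$ be the distance to the $k$-th nearest sample point. Since $\mu(\Bcal_D(x,r_\nu))=\nu$, the number of $X_i$ inside $\Bcal_D(x,r_{k/s})$ is $\mathrm{Bin}(s,k/s)$ and the number inside $\Bcal_D(x,r_{2k/s})$ is $\mathrm{Bin}(s,2k/s)$; a Chernoff bound on the larger ball gives $\rho_k\le r_{2k/s}$ with probability $\ge 1-e^{-k/4}$, while the Poisson approximation to $\mathrm{Bin}(s,k/s)$ shows $\rho_k\ge r_{k/s}$ with probability bounded below by an absolute constant, so $\Gcal:=\{r_{k/s}\le\rho_k\le r_{2k/s}\}$ has $\PP(\Gcal)\ge c_2>0$ for $k\ge 9$ and $s$ large. (ii) Conditionally on $\Gcal$ and on $\rho_k=\rho$, the $k-1$ nearest sample points are i.i.d.\ from $\mu$ restricted to $\Bcal_D(x,\rho)$, hence their labels are i.i.d.\ $\mathrm{Bernoulli}\big(\eta(\Bcal_D(x,\rho))\big)$ with $\eta(\Bcal_D(x,\rho))\le\tfrac12$ on $\Gcal$; therefore $W:=\sum_{i=1}^{k-1}\bI(Y_{(i)}=1)$ is stochastically dominated by $\mathrm{Bin}(k-1,\tfrac12)$, and the number of positive votes among all $k$ neighbours is at most $W+1$, so
$$
\PP\big(\hat{f}_D(x)=-1\,\big|\,\Gcal\big)\ \ge\ \PP\big(W\le\lceil k/2\rceil-2\big)\ \ge\ \PP\big(\mathrm{Bin}(k-1,\tfrac12)\le\lceil k/2\rceil-2\big)\ \ge\ c_3
$$
for $k\ge 9$. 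Combining, $\PP(\hat{f}_D(x)\ne f^\ast(x))\ge c_2c_3=:c_0$.

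Finally I would integrate over the prior. By Fubini,
$$
\EE_{\eta\sim f_\eta}\big(r(D)\big)\ \ge\ \int_{\Ecal_{f_\eta}^+\cup\Ecal_{f_\eta}^-}\EE_{\eta\sim f_\eta}\Big(|2\eta(x)-1|\,\PP\big(\hat{f}_D(x)\ne f^\ast(x)\mid\eta\big)\Big)\,\mu(dx),
$$
and for $x$ in the integration region the inner expectation is at least $2\delta\,c_0\,\PP_{\eta\sim f_\eta}(\text{bad-ball event at }x)\ge 2\delta c_0 c_1$ by the very definition of $\Ecal_{f_\eta}^{\pm}$. Renaming $2c_0c_1$ as a fresh constant $c_0$ yields exactly $\EE_{\eta\sim f_\eta}(r(D))\ge c_0\,\delta\,\mu(\Ecal_{f_\eta}^+(\delta,k/s)\cup\Ecal_{f_\eta}^-(\delta,k/s))$.

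The delicate points — and the reason the statement imposes $k\ge 9$ rather than $k\ge 1$ — lie entirely in step (ii): one must justify the conditional order-statistics description (the $k$-th neighbour sits on the sphere $\partial\Bcal_D(x,\rho)$, which is $\mu$-null under the continuity assumption, so it only contributes the ``$+1$'' above, and one must check that $r\mapsto\mu(\Bcal_D(x,r))$ is continuous so that $r_\nu$ is well defined — immediate since $\mu$ has a density and $\Bcal_D(x,\cdot)$ is a nested family of convex sets), and then one must pin down the binomial anti-concentration constant $\PP(\mathrm{Bin}(k-1,\tfrac12)\le\lceil k/2\rceil-2)\ge c_3$ uniformly over $k\ge 9$, which degenerates for small $k$.
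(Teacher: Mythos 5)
Your proposal is correct and follows essentially the same route as the paper: restrict the excess-risk integral to the high-error sets, show that on the bad-ball event the $k$-NN vote errs with an absolute constant probability via the conditional i.i.d.\ description of the nearest neighbours plus binomial anti-concentration, and then average over the prior to extract the factor $c_1$ from the definition of $\Ecal_{f_\eta}^{\pm}$. The only (immaterial) differences are that the paper conditions on $X_{(k+1)}$ and invokes Slud's inequality for $\mathrm{Bin}(k,1/2+1/\sqrt{k})$ as in Lemma~\ref{lm:knnlower}, whereas you condition on $X_{(k)}$ and use stochastic domination by $\mathrm{Bin}(k-1,1/2)$, which is if anything better adapted to the $\eta(\Bcal_D(x,r))\le 1/2$ threshold appearing in this lemma's statement.
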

	\begin{proof}
		In the Bayes risk, we can think of $\eta(x)$ as a random variable and have
		$$
		\EE_{\eta\sim f_\eta}\left(r(D)\right)=\EE_{X\sim\mu}\left(\EE_{\eta\sim f_\eta}\left(|2\eta(X)-1|\PP(\hat{f}_{D}(X)\ne f^\ast(X))\right)\right).
		$$
		If $x\in \Ecal_{f_\eta}^+\left(\delta,{k/s}\right)$, we can define a event 
		$$
		\Acal_x=\left\{\eta:\eta(x)\ge{1\over 2}+\delta; \eta(\Bcal_{D}(x,r))\le {1\over 2},\ \forall\ r_\nu\le r\le r_{2\nu} \right\}.
		$$
		If $x\in \Ecal_{f_\eta}^+\left(\delta,{k/s}\right)$ and $\eta\in \Acal_x$, we can follow exactly the same analysis in proof of Lemma~\ref{lm:knnlower} to obtain
		$$
		\PP(\hat{f}_{D}(x)\ne f^\ast(x))\ge c_2
		$$
		for some constant $c_2$. This immediately suggests 
		$$
		\EE_{\eta\sim f_\eta}\left(|2\eta(x)-1|\PP(\hat{f}_{D}(x)\ne f^\ast(x))\right)\ge 2c_1c_2\delta
		$$
		when $x\in \Ecal_{f_\eta}^+\left(\delta,{k/s}\right)$. We can prove a similar conclusion if $x\in \Ecal_{f_\eta}^-\left(\delta,{k/s}\right)$. Therefore, we can conclude that 
		$$
		\EE_{\eta\sim f_\eta}\left(r(D)\right)\ge {2c_1c_2\delta}\mu\left(\Ecal_{f_\eta}^+\left(\delta,{k\over s}\right)\cup \Ecal_{f_\eta}^-\left(\delta,{k\over s}\right)\right).
		$$
	\end{proof}
	
	\begin{lemma}
		\label{lm:knnlower}
		Give a set $A$ such that $\mu(A)>0$, then we define the average of function $\eta(x)$ within set $A$ as
		$$
		\eta(A)={1\over \mu(A)}\int_A \eta(x)d\mu(x).
		$$
		For a given distance $D$, we define the high error sets as 
		$$
		\Ecal^+(\delta_1,\delta_2,\nu)=\left\{x:\eta(x)\ge{1\over 2}+\delta_1; \eta(\Bcal_{D}(x,r))\le {1\over 2}+\delta_2,\ \forall\ r_\nu\le r\le r_{2\nu} \right\}
		$$
		and 
		$$
		\Ecal^-(\delta_1,\delta_2,\nu)=\left\{x:\eta(x)\le {1\over 2}-\delta_1;\eta(\Bcal_{D}(x,r))\ge {1\over 2}-\delta_2,\ \forall\ r_\nu\le r\le r_{2\nu} \right\},
		$$
		where $r_v$ is the radius such that $\mu(\Bcal_{D}(x,r_\nu))=\nu$. 
		If we observe $s$ samples and choose $k\ge 9$ in $k$-NN, then there exists $c_0$ such that
		$$
		r(D)\ge {2c_0\delta}\mu\left(\Ecal^+\left(\delta,{1\over \sqrt{k}},{k\over s}\right)\cup \Ecal^-\left(\delta,{1\over \sqrt{k}},{k\over s}\right)\right).
		$$
	\end{lemma}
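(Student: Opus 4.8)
The plan is to follow the classical route for lower bounds in nonparametric classification: reduce the excess risk to a pointwise statement about the $k$-NN estimate $\hat\eta(x)=k^{-1}\sum_{i=1}^k\bI(Y_{(i)}=1)$, and then show that on the high-error sets the Bayes rule is predicted wrongly with at least constant probability. Concretely, since the test point $X$ is independent of the training sample, $r(D)=\int|2\eta(x)-1|\,\PP(\hat f_D(x)\ne f^\ast(x))\,d\mu(x)$. On $\Ecal^+(\delta,1/\sqrt k,k/s)$ we have $\eta(x)\ge 1/2+\delta$, hence $f^\ast(x)=1$ and $|2\eta(x)-1|\ge 2\delta$; on $\Ecal^-$ the mirror statements hold with $f^\ast(x)=-1$, and the two sets are disjoint for $\delta>0$. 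So it suffices to exhibit an absolute constant $c_0>0$ with $\PP(\hat f_D(x)=-1)\ge c_0$ for every $x\in\Ecal^+(\delta,1/\sqrt k,k/s)$ (and $\PP(\hat f_D(x)=1)\ge c_0$ on $\Ecal^-$, by replacing $\eta$ with $1-\eta$); integrating over $\Ecal^+\cup\Ecal^-$ then gives $r(D)\ge 2c_0\delta\,\mu(\Ecal^+\cup\Ecal^-)$.

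Fix $x\in\Ecal^+(\delta,1/\sqrt k,k/s)$ and let $R=D(X_{(k)},x)$. The first sub-step is to show the event $\Acal=\{r_{k/s}\le R\le r_{2k/s}\}$ has probability bounded below by an absolute constant. The count of sample points in $\Bcal_D(x,r_{2k/s})$ is $\mathrm{Bin}(s,2k/s)$ with mean $2k$, so a multiplicative Chernoff bound gives $\PP(R>r_{2k/s})\le e^{-k/4}$; the count in $\Bcal_D(x,r_{k/s})$ is $\mathrm{Bin}(s,k/s)$ with mean $k$, and since a binomial with integer mean lies strictly below its mean with probability at least $1/2-O(k^{-1/2})$, we get $\PP(R\ge r_{k/s})\ge 1/2-O(k^{-1/2})$. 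For $k\ge 9$ these combine to $\PP(\Acal)\ge c_1$ for an absolute $c_1>0$ (and larger $k$ only helps). Here I use that $\mu$ has a density by Assumption~\ref{ap:knn}(c), so ties occur with probability zero, the ordering is a.s.\ well defined, and $r_\nu$ exists whenever $\nu\le 1$.

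The second, central sub-step conditions on $\Acal$ and analyses $\hat\eta(x)$. Conditionally on $R=r$, the $k-1$ nearest neighbours $X_{(1)},\dots,X_{(k-1)}$ are i.i.d.\ from $\mu$ restricted to $\Bcal_D(x,r)$ while $X_{(k)}$ lies on the sphere $\{D(\cdot,x)=r\}$; consequently the indicators $B_i:=\bI(Y_{(i)}=1)$, $i\le k-1$, are i.i.d.\ $\mathrm{Bernoulli}(\bar\eta_r)$ with $\bar\eta_r=\eta(\Bcal_D(x,r))$, and $B_k\in\{0,1\}$ is arbitrary. On $\Acal$ the defining property of $\Ecal^+$ forces $\bar\eta_R\le 1/2+1/\sqrt k$, so $S_{k-1}:=\sum_{i=1}^{k-1}B_i\sim\mathrm{Bin}(k-1,\bar\eta_R)$ has mean at most $(k-1)/2+\sqrt k$ and standard deviation at most $\tfrac12\sqrt{k-1}$. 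Since $\hat f_D(x)=-1$ is implied by $S_{k-1}\le\lceil k/2\rceil-2$ (the worst case for the stray point $B_k$), it remains to bound below $\PP(S_{k-1}\le\lceil k/2\rceil-2\mid\Acal)$ uniformly in $\bar\eta_R\in[0,1/2+1/\sqrt k]$: because the standard deviation is of order $\sqrt k$, the target sits only a bounded number of standard deviations below the mean, so a one-sided anti-concentration (lower-tail) estimate for the binomial yields $\PP(S_{k-1}\le\lceil k/2\rceil-2\mid\Acal)\ge c_0'$ with $c_0'>0$ absolute. Combining with the first sub-step, $\PP(\hat f_D(x)=-1)\ge c_0'c_1=:c_0$, and the symmetric bound on $\Ecal^-$ completes the argument.

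I expect the main obstacle to be making the binomial lower-tail bound in the last step genuinely quantitative for all $k\ge 9$: a crude Berry–Esseen bound is vacuous at $k=9$ because the normal-approximation error overwhelms $\Phi$ of the relevant quantile, so one must instead fall back on an elementary direct estimate — by monotonicity of $p\mapsto\PP(\mathrm{Bin}(k-1,p)\le t)$ the worst case reduces to $p=1/2+1/\sqrt k$, and the tail probability is then bounded below by retaining only the first few terms of the sum (this is what pins the threshold at $k\ge 9$). A secondary technicality is justifying the conditional ``$k-1$ i.i.d.\ points in the ball plus one point on the sphere'' description of the nearest neighbours, which I would handle by conditioning on the index realising $X_{(k)}$ and on the numbers of sample points inside and outside $\Bcal_D(x,r)$, invoking exchangeability together with the density assumption to discard measure-zero coincidences.
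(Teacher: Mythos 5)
Your proposal is correct and follows essentially the same route as the paper: pointwise reduction of the excess risk on the high-error sets, a Chernoff/median argument placing the nearest-neighbor radius in $[r_{k/s},r_{2k/s}]$, the conditional i.i.d.\ representation of the voting neighbors inside the ball, and a binomial lower-tail anti-concentration bound. The only (immaterial) differences are that the paper conditions on $X_{(k+1)}$ rather than $X_{(k)}$, so all $k$ voting neighbors are i.i.d.\ in the ball and no stray boundary point needs handling, and it invokes Slud's inequality for the anti-concentration step where you propose a direct binomial estimate.
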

	\begin{proof}
		The proof of this lemma can be seen as a generalization of proof in Theorem 3 of \cite{chaudhuri2014rates}. 
		If $x\in \Ecal^+\left(\delta,1/\sqrt{k},{k/s}\right)$, then an application of Chernoff bound suggests 
		$$
		\PP\left(X_{(k+1)}\in \Bcal_{D}(x,r_{2k/s}), X_{(k+1)}\notin \Bcal_{D}(x,r_{k/s}) \right)\ge {1\over 2}(1-e^{-k/4}). 
		$$
		According to Chapter 1.2 in \cite{biau2015lectures}, $\mu(\Bcal_D(x,D(x,X_{(1)}))),\ldots, \mu(\Bcal_D(x,D(x,X_{(s)})))$ follow the same distribution as $U_{(1)},\ldots, U_{(s)}$ which are the order statistics of i.i.d. $[0,1]$ uniform random variables $U_1,\ldots, U_s$.  Conditioned on $U_{(k+1)}$, $U_{(1)},\ldots, U_{(k)}$ can be seen as order statistics of i.i.d. $[0,U_{(k+1)}]$ uniform random variables \citep[Chapter 5,][]{ahsanullah2013introduction}. In other words, conditioned on $D(x,X_{(k+1)})=r$,  $X_{(1)},\ldots, X_{(k)}$ can be seen randomly drawn from $\Bcal_D(x,r)$. Thus, $k\hat{\eta}(x)$ follows the same distribution as binomial random variable with parameter $k$ and $\mu(\Bcal_D(x,r))$. When $r_{k/s}\le r\le r_{2k/s}$, we have $\mu(\Bcal_D(x,r))\le 1/2+1/\sqrt{k}$ and
		$$
		\PP\left(\hat{\eta}(x)<{1\over 2}\middle|D(x,X_{(k+1)})=r\right)\ge \PP\left({\rm Bin}(k, 1/2+1/\sqrt{k})<{k\over 2}\right),
		$$
		where ${\rm Bin}(k, 1/2+1/\sqrt{k})$ is a random variable following binomial distribution with parameter $k$ and $1/2+1/\sqrt{k}$. We then apply Theorem 2.1 in \cite{slud1977distribution} to obtain 
		$$
		\PP\left(\hat{\eta}(x)<{1\over 2}\middle|D(x,X_{(k+1)})=r\right)\ge 1-\Phi\left(1\over 1/4-1/k\right)
		$$
		where $\Phi$ is the cumulative distribution function of standard normal distribution. This suggests that if $x\in \Ecal^+\left(\delta,1/\sqrt{k},{k/s}\right)$, there exists a constant $c_0$ such that
		\begin{equation}
			\label{eq:lowerprob}
			\PP\left(\hat{\eta}(x)<{1\over 2}\right)\ge c_0.
		\end{equation}
		We can show the similar conclusion if $x\in \Ecal^-\left(\delta,1/\sqrt{k},{k/s}\right)$. By definition, we can show 
		\begin{align*}
			r(D)&=\EE\left(|2\eta(X)-1|\bI(\hat{f}_{D}(X)\ne f^\ast(X))\right)\\
			&\ge \int_{\Ecal^+\left(\delta,1/\sqrt{k},{k/s}\right)\cup \Ecal^-\left(\delta,1/\sqrt{k},{k/s}\right)} |2\eta(x)-1|\PP(\hat{f}_{D}(x)\ne f^\ast(x))d\mu(x)\\
			&\ge {2c_0\delta}\int_{\Ecal^+\left(\delta,1/\sqrt{k},{k/s}\right)\cup \Ecal^-\left(\delta,1/\sqrt{k},{k/s}\right)} d\mu(x).
		\end{align*}
		We now complete the proof. 
	\end{proof}
	
	\begin{lemma}
		\label{lm:kmeansevent}
		If we follow all the notations in the proof of Theorem~\ref{thm:kmeans}, then 
		$$
		\PP\left(\Acal_{1,+}\cap\Acal_{1,-}\cap \Acal_{2}\cap \Acal_{3}\cap \Acal_{4}\right)\ge 1-s^{-5}.
		$$
	\end{lemma}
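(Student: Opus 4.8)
The plan is to control the five events separately and combine them by a union bound, so it suffices to show each fails with probability at most $Cs^{-6}$; throughout I will use the fact (noted at the start of the proof of Theorem~\ref{thm:kmeans}) that under Assumption~\ref{ap:kmeans} the centred residuals $\epsilon_i=X_i-\EE(X_i\mid Y_i)$ are i.i.d.\ $N(0,\Sigma_\pm)$.

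Three of the events are routine. For $\Acal_{4}$, conditionally on the labels $s_+$ is a sum of $s$ independent $\mathrm{Bernoulli}(1/2)$'s, so Hoeffding's inequality gives $\PP(|s_+-s/2|>\sqrt{8s\log s})\le 2s^{-16}$. For $\Acal_{3}$, $\tfrac1s\sum_i\epsilon_i\epsilon_i^T$ is the sample covariance of $s$ i.i.d.\ $N(0,\Sigma_\pm)$ vectors; applying the non-asymptotic operator-norm bound for Gaussian sample covariances (Theorem~5.39 in \cite{vershynin2010introduction}, or matrix Bernstein, Lemma~\ref{lm:matrixbern}) and using $s>d$ yields $\|\tfrac1s\sum_i\epsilon_i\epsilon_i^T\|\le\|\Sigma_\pm\|(1+d/s)$ off an event of probability $\le e^{-cs}$. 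For $\Acal_{2}$, conditionally on $s_+$ the vector $\sum_{Y_i=1}\epsilon_i$ is $N(0,s_+\Sigma_\pm)$, so $s_+\|\bar{\epsilon}_+^\ast\|^2=\|\sum_{Y_i=1}\epsilon_i\|^2/s_+$ is the squared norm of an $N(0,\Sigma_\pm)$ vector; Lemma~1 in \cite{laurent2000adaptive} with deviation parameter $t\asymp\log s$ gives the stated bound, and a union over the $\pm$ cases finishes it.

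The real content is $\Acal_{1,+}$ (and $\Acal_{1,-}$, which is symmetric), where the bound must hold uniformly over all label vectors $\hat Y$. Writing $S_0=\{Y_i=1,\hat Y_i=1\}$, $S_1=\{Y_i=-1,\hat Y_i=1\}$, $S_2=\{Y_i=1,\hat Y_i=-1\}$ and $\tilde{s}_+=|\{i:\hat Y_i=1\}|=|S_0|+|S_1|$, a direct computation gives
$$
\bar{\epsilon}_+-\bar{\epsilon}_+^\ast=\frac{s_+-\tilde{s}_+}{\tilde{s}_+}\,\bar{\epsilon}_+^\ast+\frac{1}{\tilde{s}_+}\sum_{i\in S_1}\epsilon_i-\frac{1}{\tilde{s}_+}\sum_{i\in S_2}\epsilon_i .
$$
By the triangle inequality and $(a+b+c)^2\le 3(a^2+b^2+c^2)$ it is enough to bound the squared norm of each term by a suitable multiple of $\tfrac{h(\hat Y,Y)}{\min(s_+,\tilde{s}_+)}\big(\mathrm{Tr}(\Sigma_\pm)+\sqrt{\mathrm{Tr}(\Sigma_\pm^2)s}+\|\Sigma_\pm\|s\big)$. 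The first term is handled by $\Acal_{2}$ together with $|s_+-\tilde{s}_+|=\big||S_2|-|S_1|\big|$ and the definition of $h(\hat Y,Y)$, which bounds $|S_1|,|S_2|$ by $h(\hat Y,Y)$ times the relevant cluster sizes, with $s_\pm\asymp s/2$ on $\Acal_{4}$. For the remaining two terms the key step is the uniform estimate
$$
\sup_{S\subseteq\{1,\dots,s\}}\Big\|\sum_{i\in S}\epsilon_i\Big\|^2\le C\,|S|\Big(\mathrm{Tr}(\Sigma_\pm)+\sqrt{\mathrm{Tr}(\Sigma_\pm^2)\,s}+\|\Sigma_\pm\|\,s\Big),
$$
valid with probability at least $1-e^{-cs}$: since $\sum_{i\in S}\epsilon_i\sim N(0,|S|\Sigma_\pm)$, Lemma~1 in \cite{laurent2000adaptive} with deviation parameter $t\asymp s$ gives the bound for a fixed $S$, and union-bounding over all subsets costs only $\sum_{k}\binom sk\le\sum_k(es/k)^k\le s\,e^{s}$, which is affordable precisely because $k\log(es/k)\le s$ is absorbed into the factor $s$ already present in the target. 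Substituting this into the display and using $|S_1|\le h(\hat Y,Y)\tilde{s}_+$, $|S_2|\le h(\hat Y,Y)s_+$ together with $\Acal_{4}$ gives the required bound on $\|\bar{\epsilon}_+-\bar{\epsilon}_+^\ast\|^2$ uniformly in $\hat Y$.

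The main obstacle is exactly this last uniformity: a naive union bound over the exponentially many candidate label vectors would inflate the Gaussian deviation, and the point is that the ``$s$'' built into the target (through the $\sqrt{\mathrm{Tr}(\Sigma_\pm^2)s}$ and $\|\Sigma_\pm\|s$ terms) is precisely the budget that makes that union bound free; the remaining work is routine bookkeeping of constants and cluster-size ratios. Once all five failure probabilities are in hand, a union bound and a suitable choice of the absolute constants yields probability at least $1-s^{-5}$.
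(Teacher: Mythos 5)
Your handling of $\Acal_2$, $\Acal_3$, $\Acal_4$, and your overall strategy for $\Acal_{1,\pm}$ — a union bound over all $2^s$ label vectors paid for by Gaussian quadratic-form concentration at deviation level $\asymp s$ — are exactly the paper's. The gap is in the specific decomposition you chose for $\Acal_{1,+}$. Writing $\bar{\epsilon}_+-\bar{\epsilon}_+^\ast=\frac{s_+-\tilde{s}_+}{\tilde{s}_+}\bar{\epsilon}_+^\ast+\tilde{s}_+^{-1}\sum_{i\in S_1}\epsilon_i-\tilde{s}_+^{-1}\sum_{i\in S_2}\epsilon_i$ and bounding the three pieces separately destroys a cancellation the target bound depends on, and the resulting estimate fails for admissible $\hat{Y}$ with $\tilde{s}_+\ll s_+$ — configurations the event explicitly quantifies over ("$\forall\,\hat{Y}_i$"). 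Concretely, take $\tilde{s}_+=1$ with the unique $\hat{Y}_i=1$ point having $Y_i=1$, so $S_1=\emptyset$, $|S_2|=s_+-1$, $h(\hat{Y},Y)\approx 1$ and $\min(s_+,\tilde{s}_+)=1$: your first term has squared norm $(s_+-1)^2\|\bar{\epsilon}_+^\ast\|^2\asymp s\,{\rm Tr}(\Sigma_\pm)$ (and the third term likewise), while the target $3h\left({\rm Tr}(\Sigma_\pm)+\sqrt{{\rm Tr}(\Sigma_\pm^2)s}+\|\Sigma_\pm\|s\right)$ is smaller by a factor of order $s$ when $\Sigma_\pm$ has high effective rank (e.g.\ $\Sigma_\pm=I_d$ with $d\asymp s$). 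The two large pieces in fact nearly cancel — in that configuration $\bar{\epsilon}_+-\bar{\epsilon}_+^\ast=\frac{s_+-1}{s_+}\epsilon_{i_0}-s_+^{-1}\sum_{i\in S_2}\epsilon_i$ — so the triangle inequality applied to your split is genuinely too lossy, not merely a matter of constants.

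The repair is either to keep the coefficients as they actually occur, $\left(\tilde{s}_+^{-1}-s_+^{-1}\right)\sum_{i\in S_0}\epsilon_i+\tilde{s}_+^{-1}\sum_{i\in S_1}\epsilon_i-s_+^{-1}\sum_{i\in S_2}\epsilon_i$, after which your uniform-over-subsets estimate does close the argument (the $S_0$ coefficient squared times $|S_0|$ is at most $h^2/\min(s_+,\tilde{s}_+)$, and the $S_1$, $S_2$ terms give $h/\tilde{s}_+$ and $h/s_+$), or to do what the paper does: for each fixed $\hat{Y}$ the vector $\bar{\epsilon}_+-\bar{\epsilon}_+^\ast$ is exactly $N(0,\gamma\Sigma_\pm)$ with a scalar $\gamma\le 2h(\hat{Y},Y)/\min(s_+,\tilde{s}_+)$, so one application of Lemma~1 of \cite{laurent2000adaptive} with $t=3s/2$ followed by the union bound over $2^s$ label vectors gives $\PP(\Acal_{1,+}^c)\le 2^se^{-3s/2}\le s^{-6}$. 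The paper's route builds the cancellation into the covariance computation and dispenses with the subset-uniform lemma entirely.
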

	\begin{proof}
		We start the proof from events $\Acal_{1,+}$
		$$
		\Acal_{1,+}=\left\{\left\|\bar{\epsilon}_+-\bar{\epsilon}_+^\ast\right\|^2\le 3{h(\hat{Y},Y)\over \min(s_+,\tilde{s}_+)}\left({\rm Tr}(\Sigma_{\pm})+\sqrt{{\rm Tr}(\Sigma_{\pm}^2)s}+\|\Sigma_{\pm}\|s\right)\quad \forall \ \hat{Y}_i\right\}.
		$$
		Since $\bar{\epsilon}_+$ and $\bar{\epsilon}_+^\ast$ follow normal distribution, we look at the mean and variance of $\bar{\epsilon}_+-\bar{\epsilon}_+^\ast$. Clearly,
		$$
		\EE(\bar{\epsilon}_+-\bar{\epsilon}_+^\ast)=0\qquad {\rm and}\qquad {\rm Var}(\bar{\epsilon}_+-\bar{\epsilon}_+^\ast)=\gamma \Sigma_{\pm},
		$$
		where $\gamma\le 2{h(\hat{Y},Y)/\min(s_+,\tilde{s}_+)}$. We can then apply Lemma 1 in \cite{laurent2000adaptive} to obtain 
		$$
		\PP\left(\left\|\bar{\epsilon}_+-\bar{\epsilon}_+^\ast\right\|^2\ge 3{h(\hat{Y},Y)\over \min(s_+,\tilde{s}_+)}\left({\rm Tr}(\Sigma_{\pm})+\sqrt{{\rm Tr}(\Sigma_{\pm}^2)s}+\|\Sigma_{\pm}\|s\right)\right)\le e^{-3s/2}
		$$
		By union bound, we can show
		$$
		\PP(\Acal_{1,+})\ge 1-2^se^{-3s/2}\ge 1-s^{-6}.
		$$
		Similarly, we can also show $\PP(\Acal_{1,-})\ge 1-s^{-6}$. For $\Acal_2$, we only need to apply Lemma 1 in \cite{laurent2000adaptive} without union bound to show $\PP(\Acal_2)\ge 1-s^{-6}$. We to apply Theorem 5.39 in \cite{vershynin2010introduction} to show that $\PP(\Acal_3)\ge 1-s^{-6}$. Finally, we can apply Hoeffding's inequality to show $\PP(\Acal_4)\ge 1-s^{-6}$ since $s_+$ follow a binomial distribution. An application of union bound for $\Acal_{1,+}$, $\Acal_{1,-}$, $\Acal_2$, $\Acal_3$, and $\Acal_4$ can complete the proof.
	\end{proof}

\end{appendices}
\end{document}